\theoremstyle{plain}
\newtheorem{theorem}{Theorem}[section]
\newtheorem{proposition}[theorem]{Proposition}
\newtheorem{lemma}[theorem]{Lemma}
\theoremstyle{definition}
\newtheorem{definition}[theorem]{Definition}
\theoremstyle{remark}
\newtheorem{remark}[theorem]{Remark}
\def \R{\mathbb{R}}
\def \E{\mathbb{E}}
\def \1{\mathds{1}}
\def \0{\mathds{O}}
\def \Wc{\mathcal{W}}
\def\1{\mathbbm{1}}
\def\0{\mathds{O}}
\def\beqs{\begin{eqnarray*}}
\def\enqs{\end{eqnarray*}}
\def\beq{\begin{eqnarray}}
\def\enq{\end{eqnarray}}
\def \l{\left}
\def \r{\right}
\newcommand{\rmd}{\mathrm{d}}
\newcommand{\rme}{\mathrm{e}}
\newcommand{\eqsp}{\,}
\newcommand{\kl}{\mathrm{KL}}
\newcommand{\ola}{\overleftarrow}
\newcommand{\ora}{\overrightarrow}
\newcommand{\pidata}{\pi_{\text{\rm data}}}
\newcommand{\Xfwd}{\ora{X}}
\newcommand{\pifwd}{\ora{p}}
\newcommand{\Xbck}{\ola{X}}
\newcommand{\Xinfty}{X^\infty}
\newcommand{\piinfty}{\pi_\infty}
\newcommand{\XN}{X^N}
\newcommand{\Xstar}{X^\star}
\newcommand{\logdens}{U}
\newcommand{\lawL}{\mathcal{L}}
\newcommand{\ptilde}{\tilde{p}}
\newcommand{\score}{\nabla \log \pifwd}
\newcommand{\scoremodif}{\nabla \log \tilde{p}}
\newcommand{\scoreapproxnormal}{ \tilde{s}_\theta}
\newcommand{\scoreapprox}{ \tilde{s}_{\theta^\star}}
\newcommand{\CLip}{L}
\newcommand{\Ccvx}{C}
\newcommand{\CapproxNN}{\varepsilon}
\newcommand{\eqdef}{:=}
\newcounter{hypH}
\newenvironment{hypH}{
    \refstepcounter{hypH}
    \begin{itemize}
    \item[{\bf H\arabic{hypH}}]
    }
{\end{itemize}}
\icmltitlerunning{Beyond Log-Concavity and Score Regularity: Improved Convergence Bounds for Score-Based Generative Models in $\Wc_2$-distance}
\begin{document}

\twocolumn[
\icmltitle{Beyond Log-Concavity and Score Regularity: Improved Convergence Bounds for Score-Based Generative Models in $\Wc_2$-distance}

% It is OKAY to include author information, even for blind
% submissions: the style file will automatically remove it for you
% unless you've provided the [accepted] option to the icml2025
% package.

% List of affiliations: The first argument should be a (short)
% identifier you will use later to specify author affiliations
% Academic affiliations should list Department, University, City, Region, Country
% Industry affiliations should list Company, City, Region, Country

% You can specify symbols, otherwise they are numbered in order.
% Ideally, you should not use this facility. Affiliations will be numbered
% in order of appearance and this is the preferred way.
\icmlsetsymbol{equal}{*}

\begin{icmlauthorlist}
\icmlauthor{Marta Gentiloni-Silveri}{equal,yyy}
\icmlauthor{Antonio Ocello}{equal,yyy}
% \icmlauthor{Firstname3 Lastname3}{comp}
% \icmlauthor{Firstname4 Lastname4}{sch}
% \icmlauthor{Firstname5 Lastname5}{yyy}
% \icmlauthor{Firstname6 Lastname6}{sch,yyy,comp}
% \icmlauthor{Firstname7 Lastname7}{comp}
% %\icmlauthor{}{sch}
% \icmlauthor{Firstname8 Lastname8}{sch}
% \icmlauthor{Firstname8 Lastname8}{yyy,comp}
%\icmlauthor{}{sch}
%\icmlauthor{}{sch}
\end{icmlauthorlist}

\icmlaffiliation{yyy}{CMAP, CNRS, École polytechnique, Institut Polytechnique de Paris, 91120 Palaiseau, France}
% \icmlaffiliation{comp}{Company Name, Location, Country}
% \icmlaffiliation{sch}{School of ZZZ, Institute of WWW, Location, Country}

\icmlcorrespondingauthor{Marta Gentiloni-Silveri}{marta.gentiloni-silveri@polytechnique.edu}
\icmlcorrespondingauthor{Antonio Ocello}{antonio.ocello@polytechnique.edu}

% You may provide any keywords that you
% find helpful for describing your paper; these are used to populate
% the "keywords" metadata in the PDF but will not be shown in the document
\icmlkeywords{Score-based generative models, Wasserstein distance, diffusion processes, ICML}

\vskip 0.3in
]

% this must go after the closing bracket ] following \twocolumn[ ...

% This command actually creates the footnote in the first column
% listing the affiliations and the copyright notice.
% The command takes one argument, which is text to display at the start of the footnote.
% The \icmlEqualContribution command is standard text for equal contribution.
% Remove it (just {}) if you do not need this facility.

%\printAffiliationsAndNotice{}  % leave blank if no need to mention equal contribution
\printAffiliationsAndNotice{\icmlEqualContribution} % otherwise use the standard text.

\begin{abstract}
    Score-based Generative Models (SGMs) aim to sample from a target distribution by learning score functions using samples perturbed by Gaussian noise. Existing convergence bounds for SGMs in the $\mathcal{W}_2$-distance rely on stringent assumptions about the data distribution. In this work, we present a novel framework for analyzing $\mathcal{W}_2$-convergence in SGMs, significantly relaxing traditional assumptions such as log-concavity and score regularity. Leveraging the regularization properties of the Ornstein--Uhlenbeck (OU) process, we show that weak log-concavity of the data distribution evolves into log-concavity over time. This transition is rigorously quantified through a PDE-based analysis of the Hamilton--Jacobi--Bellman equation governing the log-density of the forward process. Moreover, we establish that the drift of the time-reversed OU process alternates between contractive and non-contractive regimes, reflecting the dynamics of concavity.
    Our approach circumvents the need for stringent regularity conditions on the score function and its estimators, relying instead on milder, more practical assumptions. We demonstrate the wide applicability of this framework through explicit computations on Gaussian mixture models, illustrating its versatility and potential for broader classes of data distributions.
\end{abstract}

\section{Introduction}

In recent years, machine learning has made remarkable progress in generating samples from high-dimensional distributions with intricate structures. Among various generative models, \textit{Score-based Generative Models} (SGMs) \citep[see, $e.g.$,][]{sohl2015deep,song2019generative,ho2020denoising,song2020score,song2020denoising} have gained significant attention due to their versatility and computational efficiency. These models leverage the innovative approach of reversing the flow of a Stochastic Differential Equation (SDE) and employ advanced techniques for learning time-reversed processes, enabling the generation of high-quality and visually compelling samples \citep{ramesh2022hierarchical}.

This breakthrough has spurred a rapid expansion in the applications of SGMs, which now span diverse domains. Examples include natural language generation \citep{gong2022diffuseq}, imputation for missing data \citep{zhang2024unleashing}, computer vision tasks such as super-resolution and inpainting \citep{li2022srdiff, lugmayr2022repaint}, and even applications in cardiology \citep{cardoso2023bayesian}. For a comprehensive review of recent advancements, we refer readers to \citet{yang2023diffusion}.

\paragraph{SGMs.}
The primary goal of these models is to generate synthetic data that closely match a target distribution $\pidata$, given a sample set. This is particularly valuable when the data distribution is too complex to be captured by traditional parametric methods. In such scenarios, classical maximum likelihood approaches become impractical, and non-parametric techniques like kernel smoothing fail due to the high dimensionality of real-world data.

SGMs address these challenges by focusing on the \textit{score function}—the gradient of the log-density of the data distribution—instead of directly modeling the density itself. The score function describes how the probability density changes across different directions in the data space, guiding the data generation process.

The generation process begins with random noise and iteratively refines it into meaningful samples, such as images or sounds, through a denoising process that reverses a forward diffusion process. The forward diffusion process starts with a data sample ($e.g.$, an image) and gradually corrupts it by adding noise over several steps through an SDE flow, eventually transforming it into pure noise.

During generation, SGMs learn to reverse this diffusion process. Using noisy versions of the data, the model trains a deep neural network to approximate the score function. Starting from random noise, the model applies the learned score function iteratively, progressively removing noise and refining the sample to match the original data distribution. At each step, small, controlled adjustments based on the score function guide the noise toward realistic data, ensuring that the final sample aligns closely with the target distribution.

\paragraph{HJB approach.} Connections between SGMs and partial differential equation (PDE) theory have been explored in recent literature through various interpretations. For instance, \citet{berner2022optimal} demonstrates that the log-densities of the marginal distributions of the underlying SDE satisfy a specific Hamilton--Jacobi--Bellman (HJB) equation. By applying methods from optimal control theory, they reformulate diffusion-based generative modeling as the minimization of the KL divergence between appropriate measures in path space. This perspective has been further developed in works such as \citet{zhang2023mean, zhang2024wasserstein}, where SGMs are viewed as solutions to a mean-field game problem, coupling an HJB equation—representing an infinite-dimensional optimization—with the evolution of probability densities governed by the Fokker--Planck equation. 

A connection between time-reversal and stochastic control has been identified in \citet{cattiaux2023time, conforti2022time}, providing a powerful analytical perspective for studying diffusion processes. This framework, with the associated HJB equation, has been effectively leveraged in recent works such as \citet{conforti2023score, pham2025discrete}, where the authors utilize it to derive tight convergence bounds in KL divergence for score-based generative models, under minimal regularity assumptions on the data distribution.

However, convergence bounds in terms of the Wasserstein distance through coupling methods \citep{villani2021topics} and the study of the associated HJB equation remains an open problem. Among the various coupling techniques proposed, two methods have demonstrated to be particularly effective in this context: coupling by reflection \citep{eberle2016reflection,eberle2019couplings} and sticky coupling \citep{eberle2019sticky}. The flexibility and power of these methods are exemplified by recent developments in adapting these techniques to more complex dynamics such as McKean--Vlasov equations \citep{durmus2024sticky, cecchin2024exponential}. In particular, given the connection with stochastic control, significant assistance in analyzing the Wasserstein distance between controlled SDEs is provided by the so-called controlled coupling introduced in \citep{conforti2024weak}.

\paragraph{Related literature.}
The power and appeal of SGMs have spurred significant interest in establishing convergence bounds, leading to a variety of mathematical frameworks. Broadly, these contributions can be categorized into two primary approaches, focusing on different metrics and divergences.

The \textit{first category} explores convergence bounds based on $\alpha$-divergences and Total Variation (TV) distance \citep[see, $e.g.$,][]{block2020generative, debortoli2022convergence, debortoli2021, lee2022convergence, lee2023convergence, chen2023improved, chen2022sampling, oko2023diffusion}. For example, \citet{chen2022sampling} derived upper bounds in TV distance, assuming smoothness of the score function. Recent works by \citet{conforti2023score} and \citet{benton2024nearly} extended these results to Kullback–Leibler (KL) divergence under milder assumptions about the data distribution. Importantly, bounds on KL divergence imply bounds on TV distance via Pinsker's inequality, reinforcing their broader applicability.

The \textit{second category} focuses on convergence bounds in Wasserstein distances of order $p \geq 1$, which are often more practical for estimation tasks. \citet{de2022convergence} established $\mathcal{W}_1$ bounds with exponential rates under the manifold hypothesis, assuming the target distribution lies on a lower-dimensional manifold or represents an empirical distribution. \citet{mimikos2024score} provided $\mathcal{W}_1$ bounds on a torus of radius $R$, noting that the bounds depend on $R$ and require early stopping criteria. For $\mathcal{W}_2$-convergence, results by \citet{de2021diffusion} and \citet{lee2023convergence} rely on smoothness assumptions about the score function or its estimator, often under bounded support conditions. For such distributions, $\mathcal{W}_2$-distance can also be bounded by TV distance, and thus by KL divergence, via Pinsker's inequality.

In addition to these general approaches, reverse SDEs have been analyzed in the context of log-concave sampling. For instance, \citet{chen2022proximal} studied the proximal sampler algorithm introduced by \citet{lee2021structured}, inspiring further work on $\mathcal{W}_2$-convergence within the strongly log-concave framework. Current results often assume strong log-concavity of the data distribution and impose regularity conditions on the score function or its estimator \citep[$e.g.$,][]{gao2023wasserstein, bruno2023diffusion, tang2024contractive, strasman2024analysis}. For strongly log-concave distributions, $\mathcal{W}_2$ bounds can also be estimated from KL divergence using Talagrand's inequality \citep[Corollary 7.2,][]{gozlan2010transport}.

These works share some notable features. A key insight, as highlighted by \citet{strasman2024analysis}, is the contraction property of the backward process, which ensures convergence stability. Additionally, their results align with well-established findings for Euler--Maruyama (EM) discretization schemes \citep{pages2018numerical}, showing a $\sqrt{h}$ dependence on the step size $h$. Furthermore, \citet{bruno2023diffusion} achieved optimal dimensional dependence in this framework, scaling as $\sqrt{d}$, where $d$ is the dimensionality of the data distribution.

\paragraph{Contributions.}
Our work breaks through the traditional constraints of log-concavity in data distributions and strict regularity requirements for the score function (or its estimator), offering a novel perspective on the $\mathcal{W}_2$ convergence of SGMs through coupling techniques.

Building on the concept of weak log-concavity \citep[see, $e.g.$,][]{conforti2024weak}, we establish a foundational framework for studying data distributions under significantly relaxed convexity assumptions. By leveraging the regularizing properties of the Ornstein--Uhlenbeck (OU) process, we demonstrate how its Gaussian stationary distribution progressively enhances the regularity of the initial data distribution. This regularization propagates weak log-concavity over time, ultimately transitioning into a strongly log-concave regime. Using a PDE-based approach inspired by \citet{conforti2023score, conforti2023projected}, we rigorously track the propagation of weak log-concavity through the HJB equation satisfied by the log-density of the forward process. Our analysis provides explicit estimates for the evolution of the weak log-concavity constant along the OU flow, culminating in a precise characterization of the transition to strong log-concavity (see~\Cref{sec:appendix:propagation-assumptions}).

We also identify and analyze two regimes for the backward stochastic differential equation (SDE): a \textit{contractive regime} and a \textit{non-contractive one}. In~\Cref{sec:appendix:propagation-assumptions}, we precisely quantify the transition point where the drift of the backward process ceases to be contractive, a critical insight for designing robust neural architectures and optimizing practical algorithms.

Additionally, we show that Gaussian mixtures inherently satisfy the weak log-concavity and log-Lipschitz assumptions required by our framework. In~\Cref{prop:Gaussian-mixture:weakly-log-concave}, we quantify these properties and demonstrate their compatibility with our approach. Moreover, our methods extend to the analysis of convolutions of densities with Gaussian kernels. The OU flow, which acts as a Gaussian kernel, regularizes the initial distribution, ensuring stability even under early stopping regimes. This versatility underscores the broad applicability of our results, transcending the assumptions specified in H\ref{hyp:data_distribution}.

A key advantage of our approach is that it circumvents the strict regularity conditions on the score function and its estimator often imposed by prior studies \citep[$e.g.$,][]{gao2023wasserstein, lee2022convergence, kwon2022score, bruno2023diffusion, tang2024contractive}. Instead, we rely solely on the mild assumptions of weak log-concavity and one-sided log-Lipschitz regularity of the data distribution. These assumptions suffice to derive the required score function regularity directly, eliminating the need for additional constraints (see~\Cref{sec:appendix:regularity-cond}). This shift broadens the applicability of our framework while simplifying practical implementation.

Finally, we derive a fully explicit $\mathcal{W}_2$ -convergence bound, with all constants explicitly dependent on the parameters of the data distribution. Unlike previous works, which often present these constants in non-explicit forms or as arbitrarily rescaled factors, our analysis provides transparency. This clarity enables precise assessment of how input parameters influence convergence, facilitating informed decisions when designing neural architectures and optimizing SGMs for real-world applications.

The structure of this paper is as follows.~\Cref{sec:SGMs} introduces SGMs, presenting the general framework for analyzing these models and specifying the assumptions required to establish our convergence bounds. In~\Cref{sec:cvg-guarantees}, we focus on the main result: the convergence bound for SGMs in \(\Wc_2\)-distance, highlighting parallels with the key features identified in \(\kl\)-divergence bounds for SGMs.~\Cref{sec:gaussian_mix_example} demonstrates the validity of our assumptions by showing that the general class of Gaussian mixtures satisfies the conditions necessary for the bound to hold, thereby establishing a strong connection with bounds in early stopping regimes.~\Cref{sec:regime-switch} explores the underlying features contributing to the remarkable success and effectiveness of SGMs, including the contractive properties of the OU flow, which form the basis for proving our result. Finally,~\Cref{sec:sketch-proof} provides a concise sketch of the proof of the main result.

\section{Score Generative Models}
\label{sec:SGMs}
Let $\pidata \in \mathcal{P}(\mathbb{R}^d)$ represent a probability distribution on $\mathbb{R}^d$, from which we want to generate samples. SGMs enable this by following a two-step approach: first, transforming data into noise, and second, learning how to reverse this process to recover data from noise.

To ``create noise from data,'' SGMs employ an ergodic forward Markov process that begins with the data distribution and eventually converges to an invariant distribution, usually Gaussian. This invariant distribution serves as the starting ``noise,'' which is easily generated by evolving the forward process from the data.

This corresponds to fixing a time horizon $T>0$ and considering a $d$-dimensional ergodic diffusion over $[0, T]$ via the following SDE:
% {\small
\begin{align}\label{FE_general}
   \rmd\Xfwd_t = \beta(\Xfwd_t) \rmd t + \Sigma \rmd B_t\eqsp,
\end{align}%}
for $t \in [0, T]$, with $\beta : \mathbb{R}^d \to \mathbb{R}^d$ a drift function, $\Sigma \in \mathbb{R}^{d \times d}$ a fixed covariance matrix, and $(B_t)_{t \geq 0}$ a $d$-dimensional Brownian motion, initialized at $\Xfwd_0\sim \pidata$. Under mild assumptions on $\beta$, this equation admits unique solutions and is associated with a Markov semigroup $(P_t)_{t \geq 0}$ with a unique stationary distribution $\piinfty$. Moreover, the law of the process $\Xfwd_t$ admits a density $\pifwd_t$ w.r.t.\ the Lebesgue measure.

To ``create data from noise,'' the forward process is reversed using its time-reversal, known as the backward process. Specifically, we can sample from the noisy invariant distribution (which is easy to do), then apply the backward dynamics starting from these noisy samples. Since we are using the reversed process, at time $t = T$, the backward process ideally yields samples from the target distribution $\pidata$.

More rigorously, SGMs aim at implementing the time-reversal process \citep[see, $e.g.$,][]{anderson1982reverse,haussmann1986time,follmer2005entropy} defined by
% {\small
\begin{align*}
    \rmd\Xbck_t = ( -\beta(\Xbck_t) + \Sigma\Sigma^\top \nabla \log \pifwd_{T-t}(\Xbck_t) ) \rmd t + \Sigma \rmd \bar B_t\eqsp,
\end{align*}
% }
for $t \in [0, T]$, with $\Xbck_0\sim \mathcal{L}(\Xfwd_T)$ and
$(\bar B_t)_{t \geq 0}$ a Brownian motion, explicitely characterized in \citet[Remark 2.5]{haussmann1986time}.

To put into practice such procedure, three approximations must be made:
\begin{enumerate}
    \item Since sampling from $\mathcal{L}(\Xfwd_T)$ is not feasible, the backward process is initialized at the easy-to-sample stationary distribution $\piinfty$.
    \item The score function $(t,x)\mapsto\nabla\log\pifwd_{t}(x)$ is unknown in closed form, as it depends on the (not directly accessible) distribution $\pidata$, and thus it needs to be estimated. This score function can be interpreted as a conditional expectation \citep[see, $e.g.$, Equation 49,][]{conforti2023score}, a key insight behind the success of SGMs. By leveraging the fact that a conditional expectation can be represented as an $L^2$-projection \citep[see, $e.g.$, Corollary 8.17,][]{klenke2013probability}, the score function can be estimated by training a model $\theta\mapsto s_\theta(t,x)$ to minimize the score-matching loss
    % {\small
    \begin{align}
        \label{score-matching-objective}
        \theta \mapsto \int_{0}^{T} \mathbb{E}\left[\|s_\theta(t,\Xfwd_t) - \nabla\log\pifwd_{t}(\Xfwd_t)\|^2\right] \rmd t\eqsp,
    \end{align}
    % }
    over a (rich enough) parametric family $\{s_\theta : \theta \in \Theta\}$.
    \item As the continuous-time SDE can not be simulated exactly, the process is discretized, often using the Euler–Maruyama ($\mathrm{EM}$) scheme or other stochastic integrators.
\end{enumerate}
The resulting algorithm $(\Xstar_t)_{t\in[0,T]}$ runs the $\mathrm{EM}$ scheme for the estimated backward process initialized at the stationary distribution of~\eqref{FE_general}: for the learned parameter $\theta^\star$ and a sequence of step sizes $\{h_k\}_{k=1}^N$, $N \geq 1$, such that $\sum_{k=1}^N h_k = T$, we set $\Xstar_0\sim \piinfty$ and compute for $k\in \{0,\ldots,N-1\}$
% {\small
\begin{align*}
    \Xstar_{t_{k+1}}= \Xstar_{t_{k}}+h_k\l(-\beta(\Xstar_{t_k}) + s_{\theta^\star}(T-t_k,\Xstar_{t_k})\r)
    \\
    % &\quad 
    +\sqrt{2 h_k} \Sigma Z_{k}\eqsp, 
\end{align*}%}
with $\{Z_k\}_k$ a sequence of $\mathrm{i.i.d.}$ standard Gaussian random variables. Finally, this aims to return $\mathcal{L}(\Xstar_T)$, an approximation of $\pidata$.

\subsection{OU-base Score Generative Models}
We focus now on the OU case. In this case, $\piinfty$ is the standard Gaussian distribution, and forward and backward processes turn respectively into 
% {\small
\begin{align}
\label{FE}
    \rmd \Xfwd_t=&-\Xfwd_t \rmd t+\sqrt{2} \rmd B_t\eqsp,
    \\
    % \quad \text{ for }t\in [0,T]\eqsp, \qquad \Xfwd_0 \sim \pidata \eqsp,
% \end{align}}
% and
% {\small\begin{align}
\label{BE_v0}
    \rmd\Xbck_t=&(\Xbck_t + 2 \nabla \log \pifwd_{T-t}(\Xbck_t))\rmd t+\sqrt{2} \rmd B_t \eqsp, 
    % \quad\text{ for } t\in [0,T]\eqsp,\quad \Xbck_0 \sim \mathcal{L}(\Xfwd_T) \eqsp.
\end{align}% }
for $t\in [0,T]$.
Since $\nabla\log \piinfty(x) = -x$, equation~\eqref{BE_v0} can be reformulated equivalently as
% {\small
\begin{align}
\label{BE}
    \rmd\Xbck_t=b_{T-t}(\Xbck_t) \rmd t+\sqrt{2} \rmd B_t \eqsp, 
    % \quad\text{ for } t\in [0,T]\eqsp,\quad \Xbck_0 \sim \mathcal{L}(\Xfwd_T) \eqsp,
\end{align}%} 
for $t\in [0,T]$, with, for $(t,x)\in[0,T]\times \R^d$,
% {\small
\begin{align}
\label{def:drift_backward}
    b_{t}(x)&\eqdef-x+2\scoremodif_{t}(x)\eqsp,
    \\
% \end{align}
% }
% and $\ptilde_t$ the density of the law of $\Xfwd_t$ against the Gaussian distribution, $i.e.$,
% {\small\begin{align}
    \label{def-ptilde}
    \ptilde_t(x)&:= \pifwd_t(x)/\piinfty(x)\eqsp.
\end{align}%}

This framework, opposed to~\eqref{BE_v0}, has been adopted in several works \citep[see, $e.g.$,][]{conforti2023score, strasman2024analysis}, due to its advantage of maintaining the same sign for the drift term as in the forward dynamics.

In this article, we shall consider SGMs that generate approximate trajectories of the backward process based on its representation~\eqref{BE}. This means that, for the learned parameter $\theta^\star$ and a sequence of step sizes $\{h_k\}_{k=1}^N$, with $N \geq 1$ such that
\begin{align*}
    \sum_{k=1}^N h_k = T\eqsp,
\end{align*}we consider the OU-based SGM described by $\Xstar_0 \sim \piinfty$ and
% {\small
\begin{align}
\label{eq:SGM-OU-Based}
    \begin{split}
        \Xstar_{t_{k+1}} =\Xstar_{t_{k}}+h_k\l(-\Xstar_{t_k} + 2\scoreapprox(T-t_k,\Xstar_{t_k})\r)\\
        +\sqrt{2 h_k} Z_{k}\eqsp,
    \end{split}
\end{align}%}
for $k\in \{0,\ldots,N-1\}$, with $\theta^\star$ the minimizer of
% {\small
\begin{align}
    \theta \mapsto \int_{0}^{T} \mathbb{E}\left[\|\scoreapproxnormal(t,\Xfwd_t) - \scoremodif_{t}(\Xfwd_t)\|^2\right] \rmd t\eqsp,
\end{align}%}
over a properly chosen parametric class $\{\scoreapproxnormal : \theta \in \Theta\}$.

\section{Convergence guarantees for OU-Based SGMs}
\label{sec:cvg-guarantees}

To understand the performance of the proposed SGM algorithm, we aim to provide quantitative error estimates between the distribution $\lawL(\Xstar_T)$ and the data distribution $\pidata$.

First, for a given differentiable vector field $\beta$, define its weak convexity profile as
% {\small
\begin{align}
\label{eq:def:weak_convexity_profile}
    \kappa_{\beta}(r) =\inf_{x,y\in \R^d:\|x-y\|=r}\l\{ \frac{ \l( \nabla\beta(x)-\nabla\beta(y)\r)^\top (x-y)}{\|x-y\|^2}\r\}.
\end{align}%}
This function can be regarded as an integrated convexity lower bound for $\beta$, for points that are at distance $r>0$. This definition frequently arises in applications of coupling methods to study the long-term behavior of Fokker--Planck equations \citep[see, $e.g.$,][]{conforti2023coupling,conforti2024weak}. While $\kappa_{\beta} \geq 0$ directly corresponds to the convexity of $\beta$, using non-uniform lower bounds on $\kappa_{\beta}$ allows for the development of a more general notion of convexity, often called \textit{weak convexity}.
\begin{definition}\label{def:definition_weak_convexity}
    We say that a vector field $\beta$ is weakly convex if its weak convexity profile $\kappa_{\beta}$ defined in~\eqref{eq:def:weak_convexity_profile} satisfies 
    % {\small
    \begin{align}
        \kappa_{\beta}(r)\geq \alpha - \frac{1}{r} f_M(r)\eqsp,
    \end{align}%}
    for some positive constants $\alpha ,M>0$, with $f_M$ defined as
    % {\small
    \begin{align}\label{def:f_M}
        f_M(r)\eqdef 2\sqrt{M} \tanh \l(r\sqrt{M}/2\r)\eqsp.
    \end{align}%}
    Moreover, we say that $\beta$ is weakly concave if $-\beta$ is weakly convex.
\end{definition}

We are now able to state our assumptions.
\vspace{-1em}
\begin{hypH}
\label{hyp:data_distribution}
    The data distribution $\pidata$ is absolutely continuous w.r.t.\ Lebesgue measure with $\pidata (\rmd x)= \exp\l(-\logdens(x)\r)\rmd x$, for some function $\logdens:\R^d \rightarrow \R$, such that
    \begin{enumerate}[label=(\roman*)]
        \item $\nabla \logdens$ is $\CLip_{\logdens}$-one-sided Lipschitz,  with $\CLip_{\logdens}\geq0$, $i.e.$,
        % {\small
        \begin{align}
        \label{eq:hyp:potential_lipschitz}
            \l(\nabla\logdens(x)-\nabla\logdens(y)\r)^\top (x-y) \leq L_{\logdens} \|x-y\|^2;
        \end{align}%}
        for any $x,y \in \R^d$;
        \item $\logdens$ is weakly convex, with weak convexity profile $\kappa_{\logdens}$ satisfying 
        % {\small
        \begin{align}
        \label{hyp:potential_weak_convex}
           \kappa_{ \logdens}(r)\geq \alpha - \frac{1}{r}f_M(r) \eqsp,
        \end{align}%}
        for some $\alpha,M>0$.
    \end{enumerate}
\end{hypH}
% \ao{show in the appendix à l'infini quadratique donc tu a déjà l'ypothese sur les moment d'ordre 2}
\begin{remark}
    These weak assumptions represent a novel contribution to the literature on $\Wc_2$ convergence of SGMs. In fact, the entire class of Gaussian mixtures satisfies both the weak log-concavity and (one-sided) log-Lipschitz continuity conditions. In~\Cref{prop:Gaussian-mixture:weakly-log-concave}, we provide explicit values for the weak log-concavity constants as well as the (one-sided) log-Lipschitz constants, illustrating the broad applicability and generality of these assumptions.
\end{remark}
\vspace{1em}

\begin{remark}
We remark that, as shown in~\Cref{sec:moment_bound}, Assumption H\ref{hyp:data_distribution} implies that $\pidata$ has finite second order moment. In the sequel, we denote by
% {\small
\begin{align}
    \mathrm{m}_2:=\int_{\R^d} \|x\|^2\pidata(\rmd x)\eqsp.
\end{align}%}
\end{remark}

\begin{hypH}
\label{hyp:NN-approx}
There exists $\CapproxNN \geq 0$ and $\theta^\star\in \Theta$ such that for any $k\in \{0,..., N\}$
% {\small
\begin{align*}
   \left\| \scoremodif_{T-t_k}\left( \Xstar_{t_k} \right) - \scoreapprox \left( T-t_k, \Xstar_{t_k} \right) \right\|_{L^2}
    \leq \CapproxNN\eqsp,
\end{align*}%}
where the $L^2$-norm is defined as $\|\cdot\|_{L^2}\eqdef\E[\|\cdot\|^2]^{1/2}$.
\end{hypH}

\begin{remark}\label{remark:comparison_between_score_estimation_assumptions}
    Assumptions of this type have already been considered in the literature \citep[see, $e.g.$,][]{gao2023wasserstein,bruno2023diffusion, strasman2024analysis}. We remark that, since we take the expectation over the $\mathrm{EM}$ algorithm $(\Xstar_{t_k})_{k=0}^{N}$, for which the density is known, Assumption H\ref{hyp:NN-approx} is not only theoretically well-founded but also practically verifiable. This makes it a robust and applicable framework in real-world settings. In addition, in the simple case $\pidata=\mathcal{N}(\mu, \mathbb{I})$ for some unknown $\mu\in\R^d$, when approximating the score function $\scoremodif_t(x)$ by means of the neural networks $\{\rme^{-t}\theta: \theta\in \R^d\}$, Assumption H\ref{hyp:NN-approx} holds true for the minimizer $\theta=\mu$ of~\eqref{score-matching-objective} and any $\varepsilon>0.$
    Under the additional assumption that $\tilde{s}_{\theta^\star}(T - t_k, \cdot)$ is uniformly Lipschitz in space, Assumption H\ref{hyp:NN-approx} becomes fully compatible with the standard estimation error assumptions commonly used in the literature \citep[see, $e.g.$][]{chen2022sampling, li2023towards, conforti2023score}. This aligns with~\Cref{prop:score_lip_in_space}, which establishes that the score function \( \scoremodif_t \) is Lipschitz continuous in space, for $t \in [0, T]$.%, and $\tilde{s}_{\theta^\star}$ serves as its approximation.
\end{remark}
\subsection{Main result}
\label{sec:main_thms}
Under the assumptions stated above, we now present our main result. For clarity, we provide an informal version here and defer the formal statement--with fully explicit constants--to~\Cref{sec:appendix:proof:main_theo}.
\begin{theorem}[Informal]
\label{theo:main_theo-informal}
    Suppose that Assumption H\ref{hyp:data_distribution} and H\ref{hyp:NN-approx} hold.
    % and assume that $T>\eta(\alpha,M)$. 
    Consider the discretization $\{t_k, 0\leq k \leq N\}$ of $[0,T]$ of constant step size $h$ small enough. Then, there exists a constant $C=C(\alpha,M,\CLip_\logdens)>0$, which depends exponentially on the parameters $\alpha$, $M$, and $\CLip_\logdens$, such that
    % {\small
    \begin{align*}
        &\Wc_2\Big(\pidata, \lawL(\Xstar_{t_N})\Big)
        \\
        &\leq C\l(
            \rme^{-T}
            \Wc_2\left(\pidata,\piinfty\right)
            +\CapproxNN T+
            \sqrt{h}(\sqrt{d}+\sqrt{\mathrm{m}_2})\eqsp T
            \r)
            \eqsp.
    \end{align*}%}
\end{theorem}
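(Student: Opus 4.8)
The plan is to split the error $\Wc_2(\pidata, \lawL(\Xstar_{t_N}))$ into three contributions via the triangle inequality, matching the three sources of approximation introduced in \Cref{sec:SGMs}: (i) the \emph{initialization error}, coming from starting the backward process at $\piinfty$ rather than $\lawL(\Xfwd_T)$; (ii) the \emph{score-estimation error}, coming from replacing $\scoremodif$ by $\scoreapprox$; and (iii) the \emph{discretization error}, coming from running the $\mathrm{EM}$ scheme instead of the continuous-time backward SDE~\eqref{BE}. Concretely, I would introduce the continuous backward process $(\Xbck_t)$ started at $\piinfty$ (so $\lawL(\Xbck_T)$ differs from $\pidata$ only through the initialization) and an intermediate continuous process driven by the \emph{estimated} drift, then couple each consecutive pair and control the $\Wc_2$ gaps term by term.

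The core of the argument is a Grönwall-type estimate along the backward flow, and this is where the weak-convexity machinery of \Cref{def:definition_weak_convexity} enters. First I would invoke the propagation results announced in \Cref{sec:appendix:propagation-assumptions}: under H\ref{hyp:data_distribution}, the OU regularization turns weak log-concavity of $\pidata$ into genuine log-concavity of $\pifwd_t$ for $t$ large, and correspondingly the backward drift $b_{T-t}(x) = -x + 2\scoremodif_{T-t}(x)$ from~\eqref{def:drift_backward} satisfies a one-sided Lipschitz / weak-monotonicity bound whose sign switches between a \emph{non-contractive} early phase and a \emph{contractive} late phase. To turn a time-varying, only-weakly-monotone drift into a contraction, I would use the coupling-by-reflection / sticky-coupling technology (or the controlled coupling of \citet{conforti2024weak}) together with a suitably chosen concave distortion function $\phi$ adapted to $f_M$, so that $\E[\phi(\|\Xbck_t - \tilde X_t\|)]$ contracts at an effective rate; integrating the bad phase and the good phase yields the exponential prefactor $C = C(\alpha, M, \CLip_\logdens)$. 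For the score-estimation term, the per-step $L^2$ error is $\le \CapproxNN$ by H\ref{hyp:NN-approx}, and summing over the $N = T/h$ steps (each propagated forward by the contraction estimate) produces the $\CapproxNN T$ contribution. For the discretization term, a standard one-step $\mathrm{EM}$ analysis gives a local error of order $h^{3/2}$ times a moment factor; here one needs that $\scoremodif_t$ is Lipschitz in space (available from \Cref{prop:score_lip_in_space} / \Cref{sec:appendix:regularity-cond}, which follows purely from H\ref{hyp:data_distribution}) and that the relevant second moments are controlled by $d + \mathrm{m}_2$ uniformly in time (moment bound from \Cref{sec:moment_bound} plus Gaussian-tail control along the OU flow), yielding after summation the $\sqrt{h}(\sqrt d + \sqrt{\mathrm{m}_2}) T$ contribution. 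Finally the initialization term is $\rme^{-T}\Wc_2(\pidata,\piinfty)$ because the forward OU semigroup contracts $\Wc_2$ at rate $\rme^{-t}$ (being the Langevin dynamics for the strongly log-concave standard Gaussian), and this contraction survives the time-reversal up to the constant $C$.

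The main obstacle I expect is the second step: converting the \emph{regime-switching}, spatially one-sided-Lipschitz backward drift into a usable stability/contraction estimate in $\Wc_2$. Unlike the log-concave case, where the drift is globally one-sided contractive and a synchronous coupling suffices, here $\kappa$ is negative on a bounded range of distances (the $-\frac1r f_M(r)$ defect), so a plain synchronous coupling only gives an exponentially \emph{growing} bound; one must design the concave function $\phi$ (and possibly a reflection or sticky component) carefully enough that the contribution of the non-contractive annulus $\{r : \alpha < \frac1r f_M(r)\}$ is dominated, and then track how the constants degrade through the early, genuinely non-contractive time window $t \in [0, t^\star]$ before $\pifwd_t$ becomes log-concave. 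Quantifying $t^\star$ and the resulting $C(\alpha, M, \CLip_\logdens)$ explicitly — and checking that all three error terms are propagated through the \emph{same} coupling so the prefactors combine cleanly — is the delicate bookkeeping that carries the proof; the remaining pieces (triangle inequality, per-step $\mathrm{EM}$ expansion, moment bounds) are routine by comparison.
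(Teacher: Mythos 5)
Your decomposition into initialization, score-estimation and discretization errors, the use of the propagation results and the regime-switching picture all match the paper's strategy (the paper's intermediate processes are the true backward process and two EM chains started at $\lawL(\Xfwd_T)$ and $\piinfty$, rather than your continuous auxiliaries, but that is a minor variation). The genuine gap is in what you call the core step. You claim that a plain synchronous coupling ``only gives an exponentially growing bound'' because of the $-\frac{1}{r}f_M(r)$ defect, and you propose to repair this with coupling by reflection / sticky coupling and a concave distortion function adapted to $f_M$. This misdiagnoses the situation and the proposed fix does not deliver the stated result. After propagating H\ref{hyp:data_distribution} through the HJB equation, \Cref{prop:semi_concavity_score} gives a bound on the convexity profile that is \emph{uniform in $r$} (using $f_M(r)\le Mr$), namely the one-sided estimate~\eqref{eq:score_weak-convex} with the explicit constant $\Ccvx_{T-t}$ of~\eqref{eq:def:cost_weak_conc_score}; the defect is therefore purely a \emph{time} phenomenon, confined to a window of length $\eta(\alpha,M,\cdot)$ as in~\eqref{eq:def:gamma}, which is bounded independently of $T$. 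Consequently the paper's synchronous coupling works: the one-step factors $\delta_k$ satisfy $\delta_k\le 1$ in the contractive phase (\Cref{lemma:step_size_choice}) and $\delta_k\le 1+3\CLip h$ only over at most $\eta/h$ steps, producing precisely the constant $C\sim \rme^{3\CLip\eta(\alpha,M,\cdot)}$ of \Cref{theo:main_theo}. Your reflection-coupling route is not only unnecessary but problematic for the target metric: contraction under reflection coupling with a concave distortion controls $\Wc_1$-type (distorted) distances, not $\Wc_2$, and it is not clear how to recover the claimed $\Wc_2$ bound, nor how to combine such a coupling cleanly with the discrete EM chain and the score-approximation term.

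A second, smaller gap is the discretization term, which you dismiss as ``a standard one-step EM analysis.'' Space-Lipschitzness of $\scoremodif_t$ (\Cref{prop:score_lip_in_space}) controls $\scoremodif_{T-t}(\Xbck_t)-\scoremodif_{T-t}(\Xbck_{t_k})$, but the one-step error also contains the pure time increment of the score, and the paper assumes no time regularity of $\scoremodif$. This is exactly where the stochastic-control ingredient enters: applying It\^o's formula together with \citet[Proposition 2]{conforti2023score} one finds
\begin{align*}
\rmd\, b_{T-t}\l(\Xbck_t\r)=\Xbck_t\,\rmd t+\sqrt{2}\l(\mathbb{I}+2\nabla^2\log\ptilde_{T-t}\l(\Xbck_t\r)\r)\rmd B_t\eqsp,
\end{align*}
so the drift increment is a martingale plus a small drift, and Itô's isometry with \Cref{prop:score_lip_in_space} and the moment bound yields the $h^{3/2}(\sqrt{d}+\sqrt{\mathrm{m}_2})$ local error. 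Without this (or an explicit time-regularity assumption on the score, which the paper deliberately avoids), your ``routine'' EM step does not close.
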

% \ao{Qui bisognerebbe presentare il \textbf{teorema informale} e non questo qui, perché nel main text diamo solo le grandi idee di quello che abbiamo fatto. Poi tutti i dettagli li diamo nell'appendice.}

\begin{remark}
    We highlight the following.
    \vspace{-1.2em}
    \begin{itemize}
        \item This $\mathcal{W}_2$-convergence bound aligns with recent literature on these models, both by generalizing results previously established for log-concave distributions and by identifying the key features of these models as captured through KL divergence: the \emph{initialization error} decreases exponentially with $T$ \citep{conforti2023score,strasman2024analysis,benton2024nearly}; the \emph{score-estimation error} is proportional to $\CapproxNN T$ \citep{conforti2023score,lee2022convergence,lee2023convergence,chen2022sampling}; the \emph{discretization error} depends on $\sqrt{h} (\sqrt{\mathrm{m}_2}+ \sqrt{d})$ with $h$ mesh of the time-grid \cite{pages2018numerical}, $\mathrm{m}_2$ second order moment  of $\pidata$ \citep{conforti2023score,chen2023improved},  and $d$ dimension of the space.
        These features are present up to a finite multiplicative constant that depend on the parameters $\alpha$, $M$, and $\CLip_\logdens$ characterizing the data distribution $\pidata$, and that is fully explicit (see~\Cref{theo:main_theo} for its explicit expression).

    \item
    % The term $\sqrt{\mathrm{m}_2}+\sqrt{d}$ exhibits a natural scaling in high dimension. 
    In many standard settings, such as for an isotropic or standard multivariate Gaussian distribution, the second moment satisfies
    $\mathrm{m}_2 \lesssim d$.
    Consequently,
    $\sqrt{\mathrm{m}_2}+\sqrt{d} \lesssim\sqrt{d}$,
    so the overall dependence of the bound is of order $\sqrt{d}$, aligning perfectly with previous findings for the log-concave case \citep{bruno2023diffusion,strasman2024analysis,gao2023wasserstein}.
    
    \item For sake of simplicity, the theorem is presented for a uniform mesh of the interval $[0,T]$. Proceeding as in \citet{strasman2024analysis}, the analysis can be extended to a non-uniform subdivision of $[0,T]$ by considering the forward and backward SDEs~\eqref{FE}-\eqref{BE} as time-inhomogeneous.
    
    \item Following \citet{conforti2023score}, the bound can be reformulated to depend directly on $\varepsilon$ by refining Assumption H\ref{hyp:NN-approx}. As the solution $X_t$ to~\eqref{FE} converges in law to $\piinfty$, the modified score function $(t, x) \mapsto \scoremodif_t(x)$ approaches zero for large $t$. Accounting for this behavior allows to scale the required precision $\CapproxNN$ as $1/T$.
    
    \item This $\Wc_2$-bound exhibits the same dependencies on $T, \varepsilon, d, h$ and $\mathrm{m}_2$ as the KL-bounds provided in ($e.g.$) \citet{chen2023improved} and \citet{conforti2023score}. A key distinction, however, lies in the fact that our result is expressed in terms of the Wasserstein distance $\Wc_2\left(\pidata,\piinfty\right)$ rather than the KL divergence $\mathrm{KL}(\pidata|\piinfty)$, making its estimation from samples more practical \citep[see, $e.g.$,][]{strasman2024analysis}, and is directly derived with the use of coupling techniques.
\end{itemize}
\end{remark}

\section{Gaussian mixture example}
\label{sec:gaussian_mix_example}
We demonstrate that Gaussian mixtures naturally fulfill the weak log-concavity and (one-sided) log-Lipschitz conditions central to our framework. As the OU flow—serving as a Gaussian kernel—regularizes the initial distribution, it guarantees stability even with early stopping. Consequently, our approach extends to the analysis of convolutions between densities and Gaussian kernels. This adaptability highlights the broad relevance of our findings, exceeding the specific assumptions outlined in H\ref{hyp:data_distribution}.
\begin{proposition}
\label{prop:Gaussian-mixture:weakly-log-concave}
    Let $p_n$ be a Gaussian mixture on $\R^d $ having density law
    % {\small
    \begin{align*}
        p_n(x)\eqdef \sum_{i=1}^{n}\beta_i\frac{1}{\l(2\pi\sigma^2\r)^{d/2}}\exp\l(-\frac{|x-\mu_i|^2}{2\sigma^2}\r)\eqsp,
        % \eqsp ,
    \end{align*}%}
    for $x\in\R^d$, with $\sigma>0$, $\mu_i\in\R^d$ and $\beta_i\in[0,1]$,  for $i\in\{1,\dots,n\}$, such that $\sum_{i=1}^{n}\beta_i=1$. Then, $-\log p_n$ is weakly convex and $\nabla\log p_n$ is Lipschitz.
\end{proposition}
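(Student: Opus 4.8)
The plan is to verify the two claims separately, starting with the easier Lipschitz bound on $\nabla\log p_n$ and then establishing weak convexity of $-\log p_n$ through the weak convexity profile $\kappa_{-\log p_n}$ introduced in~\eqref{eq:def:weak_convexity_profile}. Throughout, the key structural observation is that $p_n$ is a convex combination of Gaussian densities $\gausspdf_i(x) \propto \exp(-|x-\mu_i|^2/(2\sigma^2))$, all sharing the same covariance $\sigma^2\I$. Writing $\logdens_n \eqdef -\log p_n$, a direct computation gives
\begin{align*}
    \nabla\logdens_n(x) = \frac{1}{\sigma^2}\sum_{i=1}^n w_i(x)\,(x-\mu_i) = \frac{1}{\sigma^2}\l(x - \sum_{i=1}^n w_i(x)\mu_i\r)\eqsp,
\end{align*}
where $w_i(x) \eqdef \beta_i\gausspdf_i(x)/p_n(x)$ are the posterior weights, which are nonnegative and sum to one. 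Hence $\nabla\logdens_n(x) = \frac{1}{\sigma^2}(x - \bar\mu(x))$ with $\bar\mu(x) \eqdef \sum_i w_i(x)\mu_i$ a convex combination of the fixed means $\mu_1,\dots,\mu_n$, so $\bar\mu(x)$ always lies in the (bounded) convex hull of $\{\mu_i\}$. This is the source of all the regularity.

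For the Lipschitz bound on $\nabla\logdens_n$, I would compute the Hessian $\nabla^2\logdens_n(x) = \frac{1}{\sigma^2}\I - \frac{1}{\sigma^4}\mathrm{Cov}_{w(x)}(\mu)$, where $\mathrm{Cov}_{w(x)}(\mu) \eqdef \sum_i w_i(x)(\mu_i-\bar\mu(x))(\mu_i-\bar\mu(x))^\top$ is the covariance of the discrete distribution putting mass $w_i(x)$ on $\mu_i$. This is the standard identity that the Hessian of the log-partition function of an exponential family equals a covariance. Since $\mathrm{Cov}_{w(x)}(\mu)$ is positive semidefinite, the upper bound $\nabla^2\logdens_n(x) \preceq \frac{1}{\sigma^2}\I$ is immediate. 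For the lower bound, one bounds the operator norm of $\mathrm{Cov}_{w(x)}(\mu)$ by $\max_i|\mu_i-\bar\mu(x)|^2 \leq D^2$ where $D \eqdef \max_{i,j}|\mu_i-\mu_j|$ is the diameter of the set of means. This yields $-\frac{D^2}{\sigma^4}\I \preceq \nabla^2\logdens_n(x) \preceq \frac{1}{\sigma^2}\I$ uniformly in $x$, hence $\nabla\logdens_n$ is Lipschitz with constant $\max\{1/\sigma^2,\,D^2/\sigma^4\}$ (and in particular $L_{\logdens}$-one-sided Lipschitz with $L_{\logdens} = 1/\sigma^2$, giving H\ref{hyp:data_distribution}(i) for free).

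For weak convexity, the natural route is to lower bound $\kappa_{\logdens_n}(r)$ using the uniform Hessian bound. Given $x,y$ with $\|x-y\|=r$, writing $(\nabla\logdens_n(x)-\nabla\logdens_n(y))^\top(x-y) = \int_0^1 (x-y)^\top\nabla^2\logdens_n(y+s(x-y))(x-y)\,\rmd s \geq -\frac{D^2}{\sigma^4}r^2$, one gets the crude bound $\kappa_{\logdens_n}(r) \geq -D^2/\sigma^4$, a constant. This must be reconciled with~\eqref{hyp:potential_weak_convex}, which requires $\kappa_{\logdens_n}(r) \geq \alpha - \frac{1}{r}f_M(r)$ with $\alpha, M > 0$ and $f_M(r) = 2\sqrt{M}\tanh(r\sqrt M/2)$; since $f_M(r)/r \to M$ as $r\to 0$ and $f_M(r)/r \to 0$ as $r\to\infty$, the right-hand side behaves like $\alpha - M$ near $0$ and tends to $\alpha$ at infinity. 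So the obstacle — and the main technical point — is to show the Hessian becomes \emph{positive} for large $\|x-y\|$: when $x$ is far from the convex hull of the means, the posterior weights $w_i(x)$ concentrate on the single nearest mean, so $\mathrm{Cov}_{w(x)}(\mu) \to 0$ and $\nabla^2\logdens_n(x) \to \frac{1}{\sigma^2}\I \succ 0$. Quantifying this decay of $\|\mathrm{Cov}_{w(x)}(\mu)\|$ in terms of $\mathrm{dist}(x,\mathrm{conv}\{\mu_i\})$ via Gaussian tail estimates on the ratios $w_i/w_j = (\beta_i/\beta_j)\exp((|x-\mu_j|^2-|x-\mu_i|^2)/(2\sigma^2))$, and then converting the resulting ``Hessian is $\geq \alpha$ outside a ball of radius $R_0$, and $\geq -D^2/\sigma^4$ everywhere'' statement into the $f_M$-form of~\eqref{hyp:potential_weak_convex} by choosing $M$ large enough that $\alpha - \frac{1}{r}f_M(r) \leq -D^2/\sigma^4$ for all $r \leq R_0$, is where the real work lies; the constants $\alpha$ and $M$ from~\Cref{prop:Gaussian-mixture:weakly-log-concave} are produced by this trade-off.
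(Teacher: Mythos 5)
Your Lipschitz argument is fine (the identity $\nabla^2(-\log p_n)(x)=\sigma^{-2}\mathbb{I}-\sigma^{-4}\mathrm{Cov}_{w(x)}(\mu)$ and the bound $\|\mathrm{Cov}_{w(x)}(\mu)\|\le D^2$ with $D\eqdef\max_{i,j}\|\mu_i-\mu_j\|$ are correct, and it even gives the one-sided constant $1/\sigma^2$ cleanly), and you correctly diagnose that a constant lower bound on the profile cannot yield~\eqref{hyp:potential_weak_convex}. The gap is in the mechanism you propose to get positivity at large separations. It is \emph{not} true that the posterior weights concentrate on the nearest mean when $x$ is far from $\mathrm{conv}\{\mu_i\}$: the log-ratio $\log(w_i(x)/w_j(x))=\big((\mu_i-\mu_j)^\top x+\tfrac12(\|\mu_j\|^2-\|\mu_i\|^2)\big)/\sigma^2$ is linear in $x$ and vanishes on the bisector hyperplane of $\mu_i,\mu_j$, no matter how far $x$ is from the hull. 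Concretely, for $n=2$, $\mu_{1,2}=\pm\mu$, $\beta_1=\beta_2=1/2$ and $x=te$ with $e\perp\mu$, one has $w_1(x)=w_2(x)=1/2$ for every $t$, so $\mathrm{Cov}_{w(x)}(\mu)=\mu\mu^\top$ does not decay and $\nabla^2(-\log p_2)(x)=\sigma^{-2}\mathbb{I}-\sigma^{-4}\mu\mu^\top$ keeps the negative eigenvalue $\sigma^{-2}-\|\mu\|^2/\sigma^4$ at points arbitrarily far from the means (negative precisely when $\|\mu\|>\sigma$, i.e.\ exactly in the non-log-concave case the proposition is about). Hence your intermediate statement ``$\nabla^2(-\log p_n)\succeq\alpha\mathbb{I}$ outside a ball of radius $R_0$'' is false: the region of non-convexity is an unbounded slab, so the ball-chord conversion into the $f_M$ form never gets off the ground, and the Gaussian-tail quantification of $w_i/w_j$ in terms of $\mathrm{dist}(x,\mathrm{conv}\{\mu_i\})$ fails in the bisector directions.

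What saves the statement is directional, not pointwise, information, and this is what the paper's proof tracks: it never asserts Hessian positivity anywhere, but bounds the gradient increment directly, obtaining for two modes $\kappa_{-\log p_2}(r)\ge \sigma^{-2}-2\|\mu\|\sigma^{-2}r^{-1}\tanh\big(\|\mu\|r/(2\sigma^2)\big)$ — already of the form $\alpha-f_M(r)/r$ with $\alpha=1/\sigma^2$ — and then treats general $n$ by a telescoping decomposition of the softmax mean, each term again controlled by a bounded $\tanh$ factor. Your own identity $\nabla(-\log p_n)(x)=\sigma^{-2}(x-\bar\mu(x))$ in fact offers a quick correct repair along these lines: since $\bar\mu(x),\bar\mu(y)$ both lie in the convex hull, $\|\bar\mu(x)-\bar\mu(y)\|\le D$, so $\kappa_{-\log p_n}(r)\ge \sigma^{-2}-D/(\sigma^2 r)$ for all $r$; combining this with your uniform bound $\kappa_{-\log p_n}(r)\ge \sigma^{-2}-D^2/\sigma^4$ (valid for all $r$, and the sharper of the two for $r\le\sigma^2/D$), and using that $f_M$ is concave with $f_M(0)=0$, one can pick $M$ large enough that $f_M(r)\ge\min\{D^2 r/\sigma^4,\,D/\sigma^2\}$ for all $r$, which yields~\eqref{hyp:potential_weak_convex} with $\alpha=1/\sigma^2$. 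The key point your plan misses is that the ``damage'' term must be bounded through the boundedness of $\bar\mu$ (equivalently the $\tanh$ factors), not through a decay of the covariance away from the means, which simply does not hold.
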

In~\Cref{sec:appendix:gaussian_mix},~\Cref{prop:Formal_Gaussian-mixture:weakly-log-concave}, we also quantify and express the related constants.
\begin{remark}
    The result presented above can be directly generalized to the case where the covariance matrices of each mode of the Gaussian mixture are different and of full rank but not scalar. In this more general setting, the weak convexity parameter (respectively, the Lipschitz constant) is linked to the maximum (respectively, minimum) eigenvalue of the covariance matrices associated with each mode. Under the full-rank assumption, these eigenvalues are strictly positive, ensuring that the framework remains applicable and the bounds derived continue to hold with appropriately adjusted parameters.
\end{remark}

\section{Regime Switching}
\label{sec:regime-switch}
By exploiting the regularizing effects of the forward process, the weak log-concavity of the data distribution transitions to full log-concavity over time. Additionally, the drift of the time-reversed OU process alternates between contractive and non-contractive phase, reflecting the evolving concavity dynamics. By rigorously tracing how weak log-concavity propagates through the HJB equation governing the log-density of the forward process, we derive an exact expression for the critical moment  $\xi(\alpha,M)$ at which the marginals of the forward process become strongly log-concave, as given in~\eqref{def:xi}. We also provide an explicit lower bound $T(\alpha,M,0)$, defined in~\eqref{eq:def:T-alpha-M}, for the time $T^\star$ beyond which the drift of the backward OU process loses contractivity. 

\paragraph{Log-Concavity.}
\vspace{-.75em}
\begin{itemize}
    \vspace{-.75em}
        \item $\pifwd_t$ is only weakly log-concave for $t\in\l[0,\xi(\alpha,M)\r]$;
        \item $\pifwd_t$ is log-concave for $t\in\l[\xi(\alpha,M),T\r]$.
    \end{itemize}
    % \vspace{-.75em}
    % with
    % \vspace{-.75em}
    % {\small\begin{align*}
    %     \xi(\alpha,M)
    %     \eqdef
    %     \begin{cases}
    %         \log\l(\sqrt{\frac{\alpha^2+M-\alpha}{\alpha^2}}\r)\wedge T
    %         \eqsp,
    %         \eqsp&\text{ if }\alpha-M<0
    %         \eqsp,
    %         \\
    %         0
    %         \eqsp,
    %         \eqsp&\text{ otherwise }.
    %     \end{cases}
    % \end{align*}}
\paragraph{Contractivity properties of the time-reversal process.}
\vspace{-.75em}
\begin{itemize}
    \vspace{-.75em}
    \item $b_t(x)$ is not (necessarily) contractive, for $t\in[0, T(\alpha,M, 0)]$ ($i.e.$, it is non-contractive in $[0,T^\star]$, while contractive in the interval $[T^\star,T(\alpha,M, 0)]$);
    \item $b_t(x)$ is contractive, for $t\in[T(\alpha,M, 0),T]$.
    \vspace{-.75em}
\end{itemize}
 
Further details and the explicit formulas for  $\xi(\alpha,M)$ and $T(\alpha,M,0)$ can be found in~\Cref{sec:appendix:propagation-assumptions}.

This regime shift is a key element in the effectiveness of the SGMs. SDEs with contractive flows exhibit advantageous properties related to efficiency guarantees \citep[see, $e.g.$,][]{dalalyan2017theoretical,durmus2017nonasymptotic,cheng2018underdamped,dwivedi2019log,shen2019randomized,cao2020complexity,mou2021high,li2021sqrt} that we can exploit in the low-time regime.

\section{Sketch of the proof of the main result}
\label{sec:sketch-proof}

We now present a sketch of the proof of~\Cref{theo:main_theo-informal}.
Our analysis is based on the observation that, in the practical implementation of the algorithm defined in~\eqref{eq:SGM-OU-Based}, three successive approximations introduce distinct sources of error: time-discretization, initialization, and score-approximation. To analyze these errors, we work with time-continuous interpolations of the following four processes:

\begin{itemize}
\item \textbf{Backward OU process.} The time-reversal $(\Xbck_t)_{t\in [0,T]}$ of the OU process defined in~\eqref{BE}.
    \item \textbf{EM--discretization scheme.} The EM--approximation $(\XN_{t_k})_{k=0}^N$ of the backward process~\eqref{BE}, started at $\XN_0\sim \lawL(\Xfwd_T)$

    \item \textbf{Initialization error.} The EM--approximation $(\Xinfty_{t_k})_{k=0}^N$ of the backward process~\eqref{BE}, started at $\Xinfty_0\sim\piinfty$ 
    \item \textbf{Score approximation.} The generative process $(\Xstar_{t_k})_{k=0}^N$ defined in~\eqref{eq:SGM-OU-Based}.
\end{itemize}
These auxiliary processes allow us to separately track the three sources of error. Using the triangle inequality, we obtain:
% {\small
\begin{align*}
    \Wc_2\Big(\pidata, \lawL(\Xstar_{t_N})\Big)
    &\leq 
    \Wc_2\Big(\lawL(\Xbck_T),\lawL(\XN_{t_N})\Big)\\
    &\quad +
    \Wc_2\Big(\lawL(\XN_{t_N}), \lawL(\Xinfty_{t_N})\Big)\\
    &\quad +\Wc_2\Big( \lawL(\Xinfty_{t_N}),\lawL(\Xstar_{t_N})\Big)\eqsp.
\end{align*}%}
% We focus on the first term, as the analysis of the others follows similarly.
\paragraph{Bound on $\Wc_2\l(\lawL(\Xbck_T),\lawL(\XN_{t_N})\r)$.}
Consider the synchronous coupling between  $(\Xbck_t)_{t \in [0, T]}$ and the continuous-time interpolation of $(\XN_{t_k})_{k=0}^N$ with the same initialization, $i.e.$ use the same Brownian motion to drive the two processes and set $\Xbck_{0}=\XN_{0}$. Then, it holds
% {\small
\begin{align*}
    \Wc_2\l(
        \lawL(\Xbck_{T}), \lawL(\XN_{t_N})
    \r)
    &\leq
    \left\|
        \Xbck_{T} - \XN_{T}
    \right\|_{L_2}\eqsp.
\end{align*}%}
% \textbf{Step 1. Recursion step.}
To bound the right-hand side, we aim to estimate $\| \Xbck_{t_{k+1}} - \XN_{t_{k+1}} \|_{L_2}$ in terms of
$\| \Xbck_{t_k} - \XN_{t_k} \|_{L_2}$ and develop a recursion. Since we have considered the synchronous coupling between $\Xbck$ and $\XN$, we obtain
% {\small
\begin{align*}
    &\Xbck_{t_{k+1}} - \XN_{t_{k+1}}
    \\
    &=\Xbck_{t_k} - \XN_{t_k} +
    \int_{t_k}^{t_{k+1}}
    \l\{
    -\l(\Xbck_{t}-\XN_{t_k}\r) \r.\\
    & \quad\l. + 2
    \Big(
        \scoremodif_{T-t}\l(\Xbck_{t} \r)
        -
        \scoremodif_{T-t_k} \left( \XN_{t_k} \right)
    \Big)
    \r\}\rmd t
    \eqsp.
\end{align*}%}
By applying the triangle inequality, we obtain:
% {\footnotesize
\begin{align*}
    \begin{split}
        &\left\|  \Xbck_{t_{k+1}} - \XN_{t_{k+1}}
            \right\|_{L_2}\\
        & \leq \l\|
            \Xbck_{t_k} - \XN_{t_k}
            \phantom{\int_{t_k}^{t_{k+1}}}
        \r.
        \\
        &\quad
        \l.
            +\int_{t_k}^{t_{k+1}}
            \rmd t\l\{
            -\l(\Xbck_{t_k}-\XN_{t_k}\r)\r.\r.\\
            &\l. \l.\quad 
            + 2 
            \Big(
                \scoremodif_{T-t_k}\l(\Xbck_{t_k} \r)
                -
                \scoremodif_{T-t_k}\l(\XN_{t_k} \r)
            \Big)
            \r\} \r\|_{L_2}\\
        &\quad+\l\|  
            \int_{t_k}^{t_{k+1}}\rmd t\l\{
            -\l(\Xbck_{t}-\Xbck_{t_k}\r)\r.\r.\\
            &\l.\l.\quad + 2\Big( 
                \scoremodif_{T-t}\l(\Xbck_{t} \r)
                -
                \scoremodif_{T-t_k}\l( \Xbck_{t_k} \r)
            \Big) \r\}\r\|_{L^2}\\
        &=: A_{1,k}+ A_{2,k}
        \eqsp,
    \end{split}
    \end{align*}%}
    For the first term, we have:
    % {\footnotesize
    \begin{align*}
        &A^2_{1,k}\\
        &=
        \left\|  
            \Xbck_{t_k}-\XN_{t_k}
        \right\|_{L_2}^2
        +
        h^2
        \left\|
            -\l(\Xbck_{t_k}-\XN_{t_k}\r)
            \right.\\
            &\left. \quad + 2
            \Big(
                \scoremodif_{T-t_k}\l(\Xbck_{t_k} \r)
                -
                \scoremodif_{T-t_k}\l(\XN_{t_k} \r)
            \Big)
        \right\|_{L_2}^2 \\
        & \quad +2 h
        \E\l[\l( 
            \Xbck_{t_k} - \XN_{t_k}
        \r)^\top \l(
            -\l(\Xbck_{t_k}-\XN_{t_k}\r)
        \r.\r.
        \\
        &\quad\l.\l.
            + 2 
            \l(
                \scoremodif_{T-t_k}\l(\Xbck_{t_k} \r)
                -
                \scoremodif_{T-t_k}\l(\XN_{t_k} \r)
            \r)
            \r)
        \r]
        \eqsp.
    \end{align*}%}
    To bound this term, we need regularity properties of the score function. One of the main challenges of this proof is to derive minimal assumptions on the data that ensure the minimal regularity properties of the score function required for the bound. To this aim, we adopt a PDE-based approach, drawing on the insights from \citet{conforti2023score, conforti2023projected}, noting that
    $(t,x)\mapsto-\log\tilde{p}_{T-t}(x)$ solves a HJB equation. Combining this with the regularizing effects of the OU process, we show in~\Cref{sec:appendix:regularity-cond} that the regularity Assumptions H\ref{hyp:data_distribution} propagates naturally along the HJB equation:
\begin{itemize}
    \item $(t,x) \mapsto \scoremodif_{T-t}(x)$ turns out to be $\CLip_{T-t}$-Lipschitz in space, with $\CLip_{T-t}$ as in~\eqref{def:CLip} and bounded by $\CLip$ as in~\eqref{def:CLip};
    \item $(t,x) \mapsto -\log\tilde{p}_{T-t}(x)$ remains weakly convex with
    % {\small
    \begin{align*}
        \kappa_{-\log\tilde{p}_{T-t}}(r)\geq \Ccvx_{T-t}\eqsp,
    \end{align*}%}
    and $\Ccvx_{T-t}$ as in~\eqref{eq:def:cost_weak_conc_score}.
\end{itemize}
These regularity properties enable us to bound $A_{1,k}$ as:
% {\small
\begin{align*}
   A_{1,k}\le \delta_k  \left\|  
            \Xbck_{t_{k}} - \XN_{t_k}
            \right\|_{L_2}\eqsp,
\end{align*}%
for some $\delta_k$ depending on $\CLip_{t_k}$ and $\Ccvx_{t_k}$.

For the second term, we have
% use the definition of the backward drift~\eqref{def:drift_backward} to rewrite it as:
    % {\small
    \begin{align*}
        A_{2,k}^2 
        &=
        \left\|  
            \int_{t_k}^{t_{k+1}}\left\{
            b_t(\Xbck_t)-b_{t_k}(\Xbck_{t_k})\right\}\rmd t
        \right\|^2_{L^2}
        \eqsp.
    \end{align*}%}
To bound this term, inspired by \citet[Proposition 2]{conforti2023score}, we adopt a stochastic control perspective. We interpret the backward process~\eqref{BE} as the solution to a stochastic control problem and the term $(2\scoremodif_{T-t}(\Xbck_t))_{t\in [0,T]}$
as the solution to the adjoint equation within a stochastic maximum principle. This allows us to bound (up to a constant) $A_{2,k}$ with $h\sqrt{h}(\sqrt{d}+\sqrt{\mathrm{m}_2})$.

Combining the bounds on $A_{1,k}$ and $A_{2,k}$, we derive the recursion
%  {\footnotesize
    \begin{align*}
        &\left\|
            \Xbck_{T} - \XN_{T} 
        \right\|_{L_2}\\
        &\leq \left\|
            \Xbck_{0} - \XN_{0} 
        \right\|_{L_2}\prod_{\ell=0}^{N-1} \delta_\ell
       +C h^{3/2}(\sqrt{d}+\sqrt{\mathrm{m}_2}) \sum_{k=0}^{N-1}
        \prod_{\ell=k}^{N-1} \delta_\ell\\
        &= C h^{3/2}(\sqrt{d}+\sqrt{\mathrm{m}_2})\sum_{k=0}^{N-1}
        \prod_{\ell=k}^{N-1} \delta_\ell\eqsp.
    \end{align*}%}
    Given the regime switching detailed in~\Cref{sec:regime-switch}, we then analyze the low-time and large-time regimes separately, utilizing the contractive properties of SDEs at small times to establish $\delta_k\le 1$, and applying brute force estimates for large times. %By doing so, we get 
    % Using the explicit form of $\Ccvx_t$, we observe that weak log-concavity of the initial data distribution strengthens into strong log-concavity as time progresses. Additionally, the drift of the time-reversed OU process alternates between contractive and non-contractive regimes, capturing the interplay between concavity and time. We further derive a quantitative lower bound, $T(\alpha,M,0)$, for the point at which the drift transitions from contractive to non-contractive.
    % We then analyze the low-time and large-time regimes separately, utilizing the contractive properties of SDEs at small times to establish $\delta_k\le 1$, and applying brute force estimates for large times.

    These remarks yields that
    % {\small
    \begin{align*}
        \sum_{k=0}^{N-1}\prod_{\ell=k}^{N-1} \delta_\ell
        % \\
        &\leq \frac{C}{h}T
        \eqsp.
    \end{align*}%} 
    This let us conclude that
    % {\small
    \begin{align*}
        \Wc_2\l(\lawL(\Xbck_T),\lawL(\XN_{t_N})\r)\lesssim \sqrt{h} (\sqrt{d}+\sqrt{\mathrm{m}_2})\eqsp T\eqsp.
    \end{align*}%}

    \paragraph{Bound on $\Wc_2\Big(\lawL(\XN_{t_N}), \lawL(\Xinfty_{t_N})\Big)$.} Consider the synchronous coupling between the continuous-time interpolations of $(\XN_{t_k})_{k=0}^N$ and $(\Xinfty_{t_k})_{k=0}^N$ with initialization satisfying 
    $\Wc_2(\piinfty,\lawL(\Xfwd_T))= \|\Xinfty_0-\XN_0\|_{L^2}$.
    % {\small\begin{align*}
    %     \Wc_2(\piinfty,\lawL(\Xfwd_T))= \|\Xinfty_0-\XN_0\|_{L^2}\eqsp.
    % \end{align*}}
    Then, it holds
    % {\small
    \begin{align*}
        \Wc_2\l(\lawL(\XN_{t_N}), \lawL(\Xinfty_{t_N})\r) &\leq
        \left\| \XN_{T} - \Xinfty_{T} \right\|_{L_2}.
    \end{align*}%}
    By mirroring the previous argument and developing the recursion over the time intervals $[t_{k+1},t_k]$, we get
    % {\small
    \begin{align*}
        \left\|  \XN_{T} - \Xinfty_{T} \right\|_{L_2}
        &\leq \left\|
            \XN_{0} - \Xinfty_{0} 
        \right\|_{L_2}
        \prod_{\ell=0}^{N-1} \delta_\ell \eqsp,
    \end{align*}%}
    with the $\delta_k$s as before. We bound $\|\Xinfty_0-\XN_0\|_{L^2}$ in the following (by now) standard way \citep[see, $e.g.$, the proof of Proposition C.2,][]{strasman2024analysis}
    % {\small
    \begin{align*}
        \l\|
            \XN_{0} - \Xinfty_{0} \r\|_{L_2}=\Wc_2(\piinfty,\lawL(\Xfwd_T))\leq \rme^{-T}\Wc_2(\pidata, \piinfty)\eqsp. 
    \end{align*}%}
    Using the previous considerations on the contractivity properties of the forward flow, we get that the product $\prod_{\ell=0}^{N-1} \delta_\ell$ is uniformly bounded by a constant depending on the parameters of the model. This yields to have
    {\small\begin{align*}
        \Wc_2\Big(\lawL(\XN_{t_N}), \lawL(\Xinfty_{t_N})\Big)\lesssim \rme^{-T}\Wc_2(\pidata, \piinfty)\eqsp.
    \end{align*}}

    \paragraph{Bound on $\Wc_2\Big(\lawL(\Xinfty_{t_N}),\lawL(\Xstar_{t_N})\Big)$.} Consider the synchronous coupling between the continuous-time interpolations of \((\Xinfty_{t_k})_{k=0}^N\) and \((\Xstar_{t_k})_{k=0}^N\), with the same initialization, $i.e.$, \(\Xinfty_{0} = \Xstar_{0}\). Using the evolution of these processes, together with the triangle inequality, we get
    % {\footnotesize
    \begin{align*}
        \begin{split}
            &\left\|  \Xinfty_{t_{k+1}} - \Xstar_{t_{k+1}}
                \right\|_{L_2}\\
            & \leq \l\|
               \Xinfty_{t_{k}} - \Xstar_{t_{k}}
                \phantom{\int_{t_k}^{t_{k+1}}}
            \r.
            \\
            &\quad
            \l.
                +\int_{t_k}^{t_{k+1}}
                \rmd t\l\{
                -\l(\Xinfty_{t} - \Xstar_{t}\r)\r.\r.\\
                &\l. \l.\quad 
                + 2 
                \Big(
                    \scoremodif_{T-t_k}\l(\Xinfty_{t_k} \r)
                    -
                    \scoremodif_{T-t_k}\l(\Xstar_{t_k} \r)
                \Big)
                \r\} \r\|_{L_2}\\
            &\quad+4\l\|  
                \int_{t_k}^{t_{k+1}}\rmd t\r.\\
                &\l.\qquad\Big( 
                    \scoremodif_{T-t_k}\l(\Xstar_{t_k} \r)
                    -
                    \scoreapprox\l(T-t_k, \Xstar_{t_k} \r)
                \Big)\r\|_{L^2}\\
            &=: B_{1,k}+B_{2,k}
            \eqsp.
        \end{split}
    \end{align*}%}
    By reproposing a similar argument as before, we get
    {\small\begin{align*}
        B_{1,k}\leq \delta_k \left\|  \Xinfty_{t_{k}} - \Xstar_{t_{k}} \right\|_{L_2}
        \eqsp.
    \end{align*}}
    By using Assumption H\ref{hyp:NN-approx}, we obtain $B_{2,k}\leq 4h\epsilon$.
    % {\small\begin{align*}
    %     B_{2,k}\leq 4h\epsilon\eqsp.
    % \end{align*}}
    Therefore, developing the recursion as before, we derive the bound
    % {\small
    \begin{align*}
        \Wc_2\Big(\lawL(\Xinfty_{t_N}),\lawL(\Xstar_{t_N}),\Big) \lesssim \varepsilon T\eqsp.
    \end{align*}%}

\section{Conclusions}
This paper presents a unified framework for deriving $\Wc_2$-convergence bounds for SGMs, leveraging both PDE and stochastic control approaches, while relaxing strong regularity assumptions on the data distribution, the score function and its estimator. The results mark a significant advancement, requiring only weak log-concavity and one-sided log-Lipschitz conditions on the data distribution, with no regularity needed for the score or its estimator. This broadens applicability to diverse data types.

Using Gaussian mixtures, we illustrate the versatility of our framework and show how Gaussian kernel convolutions improve early-stopping methods via regularization.
% This flexibility highlights the broad applicability and theoretical contribution of the presented results to the SGMs literature.
We also analyze how weak log-concavity evolves into full log-concavity over time, and how the drift of the time-reversed OU process shifts between contractive and non-contractive regimes, reflecting this transition. 

While our bound depends exponentially on $\alpha$, $M$, and $\CLip_U$, it would be valuable to investigate whether this can be improved to a polynomial dependence—a question we leave for future work.

\begin{remark}
    This updated version corrects a typographical inaccuracy concerning the dependence on the term $(\sqrt{\mathrm{m}_2}+\sqrt{d})$ in the bound of \Cref{theo:main_theo-informal}. In the previous version, the text mistakenly stated that the bound scaled as $\sqrt{\mathrm{m}_2}\sqrt{d}$. The correct dependence, already present in the \Cref{theo:main_theo}, is additive, namely $\sqrt{\mathrm{m}_2}+\sqrt{d}$. Since in many common settings ($e.g.$, isotropic or standard Gaussian distributions) one has $\mathrm{m}_2 \lesssim d$, the correct bound scales as $\sqrt{d}$, not as $d$. No other part of the analysis or results is affected.
\end{remark}

\section*{Acknowledgements}
We gratefully acknowledge Giovanni Conforti, Alain Durmus, and Gabriel V. Cardoso for their insightful discussions and valuable feedback throughout the development of this work.
We also thank the anonymous reviewers for their constructive comments, which significantly improved the quality of this paper.

The work of M.G.S. has been supported by the Paris Ile-de-France Région in the framework of DIM AI4IDF.
The work of A.O. was funded by the European Union (ERC-2022-SYG-OCEAN-101071601). Views
and opinions expressed are however those of the author only and do not necessarily reflect those of the European
Union or the European Research Council Executive Agency. Neither the European Union nor the granting
authority can be held responsible for them.

\section*{Impact Statement}

This paper presents work whose goal is to advance the field of Machine Learning. There are many potential societal consequences of our work, none which we feel must be specifically highlighted here.

\bibliography{biblio}

@article{conforti2024weak,
  title={Weak semiconvexity estimates for Schr{\"o}dinger potentials and logarithmic Sobolev inequality for Schr{\"o}dinger bridges},
  author={Conforti, Giovanni},
  journal={Probability Theory and Related Fields},
  pages={1--27},
  year={2024},
  publisher={Springer}
}

@article{conforti2023score,
  title={Score diffusion models without early stopping: finite Fisher information is all you need},
  author={Conforti, Giovanni and Durmus, Alain and Silveri, Marta Gentiloni},
  journal={arXiv preprint arXiv:2308.12240},
  year={2023}
}

@article{strasman2024analysis,
  title={An analysis of the noise schedule for score-based generative models},
  author={Strasman, Stanislas and Ocello, Antonio and Boyer, Claire and Corff, Sylvain Le and Lemaire, Vincent},
  journal={arXiv preprint arXiv:2402.04650},
  year={2024}
}

@article{saremi2023chain,
  title={Chain of log-concave Markov chains},
  author={Saremi, Saeed and Park, Ji Won and Bach, Francis},
  journal={arXiv preprint arXiv:2305.19473},
  year={2023}
}

@article{conforti2023coupling,
  title={Coupling by reflection for controlled diffusion processes: Turnpike property and large time behavior of Hamilton--Jacobi--Bellman equations},
  author={Conforti, Giovanni},
  journal={The Annals of Applied Probability},
  volume={33},
  number={6A},
  pages={4608--4644},
  year={2023},
  publisher={Institute of Mathematical Statistics}
}

@article{bruno2023diffusion,
  title={On diffusion-based generative models and their error bounds: The log-concave case with full convergence estimates},
  author={Bruno, Stefano and Zhang, Ying and Lim, Dong-Young and Akyildiz, {\"O}mer Deniz and Sabanis, Sotirios},
  journal={arXiv preprint arXiv:2311.13584},
  year={2023}
}

@article{gao2023wasserstein,
  title={Wasserstein convergence guarantees for a general class of score-based generative models},
  author={Gao, Xuefeng and Nguyen, Hoang M and Zhu, Lingjiong},
  journal={arXiv preprint arXiv:2311.11003},
  year={2023}
}

@article{eberle2019sticky,
  title={Sticky couplings of multidimensional diffusions with different drifts},
  author={Eberle, Andreas and Zimmer, Raphael},
  journal={Ann. Inst. H. Poincaré Probab. Statist.},
  volume={55},
  number={4},
  pages={2370--2394},
  year={2019}
}

@inproceedings{chen2023improved,
  title={Improved analysis of score-based generative modeling: User-friendly bounds under minimal smoothness assumptions},
  author={Chen, Hongrui and Lee, Holden and Lu, Jianfeng},
  booktitle={International Conference on Machine Learning},
  pages={4735--4763},
  year={2023},
  organization={PMLR}
}

@article{lee2022convergence,
  title={Convergence for score-based generative modeling with polynomial complexity},
  author={Lee, Holden and Lu, Jianfeng and Tan, Yixin},
  journal={Advances in Neural Information Processing Systems},
  volume={35},
  pages={22870--22882},
  year={2022}
}

@inproceedings{lee2023convergence,
  title={Convergence of score-based generative modeling for general data distributions},
  author={Lee, Holden and Lu, Jianfeng and Tan, Yixin},
  booktitle={International Conference on Algorithmic Learning Theory},
  pages={946--985},
  year={2023},
  organization={PMLR}
}

@article{anderson1982reverse,
  title={Reverse-time diffusion equation models},
  author={Anderson, Brian DO},
  journal={Stochastic Processes and their Applications},
  volume={12},
  number={3},
  pages={313--326},
  year={1982},
  publisher={Elsevier}
}

@inproceedings{follmer2005entropy,
  title={An entropy approach to the time reversal of diffusion processes},
  author={F{\"o}llmer, Hans},
  booktitle={Stochastic Differential Systems Filtering and Control: Proceedings of the IFIP-WG 7/1 Working Conference Marseille-Luminy, France, March 12--17, 1984},
  pages={156--163},
  year={2005},
  organization={Springer}
}

@book{klenke2013probability,
  title={Probability theory: a comprehensive course},
  author={Klenke, Achim},
  year={2013},
  publisher={Springer Science \& Business Media}
}

@article{de2022convergence,
  title={Convergence of denoising diffusion models under the manifold hypothesis},
  author={De Bortoli, Valentin},
  journal={arXiv preprint arXiv:2208.05314},
  year={2022}
}

@article{de2021diffusion,
  title={Diffusion schr{\"o}dinger bridge with applications to score-based generative modeling},
  author={De Bortoli, Valentin and Thornton, James and Heng, Jeremy and Doucet, Arnaud},
  journal={Advances in Neural Information Processing Systems},
  volume={34},
  pages={17695--17709},
  year={2021}
}

@inproceedings{lee2021structured,
  title={Structured log-concave sampling with a restricted Gaussian oracle},
  author={Lee, Yin Tat and Shen, Ruoqi and Tian, Kevin},
  booktitle={Conference on Learning Theory},
  pages={2993--3050},
  year={2021},
  organization={PMLR}
}

@inproceedings{chen2022proximal,
  title={Improved analysis for a proximal algorithm for sampling},
  author={Chen, Yongxin and Chewi, Sinho and Salim, Adil and Wibisono, Andre},
  booktitle={Conference on Learning Theory},
  pages={2984--3014},
  year={2022},
  organization={PMLR}
}

@article{tang2024contractive,
  title={Contractive diffusion probabilistic models},
  author={Tang, Wenpin and Zhao, Hanyang},
  journal={arXiv preprint arXiv:2401.13115},
  year={2024}
}

@article{gozlan2010transport,
  title={Transport inequalities. A survey},
  author={Gozlan, Nathael and L{\'e}onard, Christian},
  journal={arXiv preprint arXiv:1003.3852},
  year={2010}
}

@article{kwon2022score,
  title={Score-based generative modeling secretly minimizes the wasserstein distance},
  author={Kwon, Dohyun and Fan, Ying and Lee, Kangwook},
  journal={Advances in Neural Information Processing Systems},
  volume={35},
  pages={20205--20217},
  year={2022}
}

@article{saumard2014log,
  title={Log-concavity and strong log-concavity: a review},
  author={Saumard, Adrien and Wellner, Jon A},
  journal={Statistics surveys},
  volume={8},
  pages={45},
  year={2014},
  publisher={NIH Public Access}
}

@article{pages2018numerical,
  title={Numerical probability},
  author={Pag{\`e}s, Gilles},
  journal={Universitext, Springer},
  year={2018},
  publisher={Springer}
}

@inproceedings{sohl2015deep,
  title={Deep unsupervised learning using nonequilibrium thermodynamics},
  author={Sohl-Dickstein, Jascha and Weiss, Eric and Maheswaranathan, Niru and Ganguli, Surya},
  booktitle={International conference on machine learning},
  pages={2256--2265},
  year={2015},
  organization={PMLR}
}

@article{cardoso2023bayesian,
  title={Bayesian ecg reconstruction using denoising diffusion generative models},
  author={Cardoso, Gabriel V and Bedin, Lisa and Duchateau, Josselin and Dubois, R{\'e}mi and Moulines, Eric},
  journal={arXiv preprint arXiv:2401.05388},
  year={2023}
}

@article{zhang2024unleashing,
  title={Unleashing the Potential of Diffusion Models for Incomplete Data Imputation},
  author={Zhang, Hengrui and Fang, Liancheng and Yu, Philip S},
  journal={arXiv preprint arXiv:2405.20690},
  year={2024}
}

@article{li2022srdiff,
  title={Srdiff: Single image super-resolution with diffusion probabilistic models},
  author={Li, Haoying and Yang, Yifan and Chang, Meng and Chen, Shiqi and Feng, Huajun and Xu, Zhihai and Li, Qi and Chen, Yueting},
  journal={Neurocomputing},
  volume={479},
  pages={47--59},
  year={2022},
  publisher={Elsevier}
}

@inproceedings{lugmayr2022repaint,
  title={Repaint: Inpainting using denoising diffusion probabilistic models},
  author={Lugmayr, Andreas and Danelljan, Martin and Romero, Andres and Yu, Fisher and Timofte, Radu and Van Gool, Luc},
  booktitle={Proceedings of the IEEE/CVF conference on computer vision and pattern recognition},
  pages={11461--11471},
  year={2022}
}

@article{yang2023diffusion,
  title={Diffusion models: A comprehensive survey of methods and applications},
  author={Yang, Ling and Zhang, Zhilong and Song, Yang and Hong, Shenda and Xu, Runsheng and Zhao, Yue and Zhang, Wentao and Cui, Bin and Yang, Ming-Hsuan},
  journal={ACM Computing Surveys},
  volume={56},
  number={4},
  pages={1--39},
  year={2023},
  publisher={ACM New York, NY, USA}
}

@article{gong2022diffuseq,
  title={Diffuseq: Sequence to sequence text generation with diffusion models},
  author={Gong, Shansan and Li, Mukai and Feng, Jiangtao and Wu, Zhiyong and Kong, LingPeng},
  journal={arXiv preprint arXiv:2210.08933},
  year={2022}
}

@article{ramesh2022hierarchical,
  title={Hierarchical text-conditional image generation with clip latents},
  author={Ramesh, Aditya and Dhariwal, Prafulla and Nichol, Alex and Chu, Casey and Chen, Mark},
  journal={arXiv preprint arXiv:2204.06125},
  volume={1},
  number={2},
  pages={3},
  year={2022}
}

@article{block2020generative,
  title={Generative modeling with denoising auto-encoders and Langevin sampling},
  author={Block, Adam and Mroueh, Youssef and Rakhlin, Alexander},
  journal={arXiv preprint arXiv:2002.00107},
  year={2020}
}

@article{debortoli2022convergence,
  title={Convergence of denoising diffusion models under the manifold hypothesis},
  author={De Bortoli, Valentin},
  journal={arXiv preprint arXiv:2208.05314},
  year={2022}
}

@article{chen2022sampling,
  title={Sampling is as easy as learning the score: theory for diffusion models with minimal data assumptions},
  author={Chen, Sitan and Chewi, Sinho and Li, Jerry and Li, Yuanzhi and Salim, Adil and Zhang, Anru R},
  journal={arXiv preprint arXiv:2209.11215},
  year={2022}
}

@article{debortoli2021,
  title={Diffusion Schr{\"o}dinger bridge with applications to score-based generative modeling},
  author={De Bortoli, Valentin and Thornton, James and Heng, Jeremy and Doucet, Arnaud},
  journal={Advances in Neural Information Processing Systems},
  volume={34},
  pages={17695--17709},
  year={2021}
}

@inproceedings{benton2024nearly,
  title={Nearly $ d $-Linear Convergence Bounds for Diffusion Models via Stochastic Localization},
  author={Benton, Joe and De Bortoli, Valentin and Doucet, Arnaud and Deligiannidis, George},
  year={2024},
  booktitle={The Twelfth International Conference on Learning Representations}
}

@article{silveri2024theoretical,
  title={Theoretical guarantees in kl for diffusion flow matching},
  author={Gentiloni-Silveri, Marta and Conforti, Giovanni and Durmus, Alain},
  journal={arXiv preprint arXiv:2409.08311},
  year={2024}
}

@article{durmus2024sticky,
  title={Sticky nonlinear SDEs and convergence of Mckean--Vlasov equations without confinement},
  author={Durmus, Alain and Eberle, Andreas and Guillin, Arnaud and Schuh, Katharina},
  journal={Stochastics and Partial Differential Equations: Analysis and Computations},
  volume={12},
  number={3},
  pages={1855--1906},
  year={2024},
  publisher={Springer}
}

@article{eberle2016reflection,
  title={Reflection couplings and contraction rates for diffusions},
  author={Eberle, Andreas},
  journal={Probability theory and related fields},
  volume={166},
  pages={851--886},
  year={2016},
  publisher={Springer}
}

@article{eberle2019couplings,
  title={Couplings and quantitative contraction rates for Langevin dynamics},
  author={Eberle, Andreas and Guillin, Arnaud and Zimmer, Raphael},
  year={2019},
  journal={The Annals of Probability},
  volume={47},
  number = {4},
  pages={1982-2010}
}

@article{cecchin2024exponential,
  title={The exponential turnpike phenomenon for mean field game systems: weakly monotone drifts and small interactions},
  author={Cecchin, Alekos and Conforti, Giovanni and Durmus, Alain and Eichinger, Katharina},
  journal={arXiv preprint arXiv:2409.09193},
  year={2024}
}

@article{dalalyan2017theoretical,
  title={Theoretical guarantees for approximate sampling from smooth and log-concave densities},
  author={Dalalyan, Arnak S},
  journal={Journal of the Royal Statistical Society Series B: Statistical Methodology},
  volume={79},
  number={3},
  pages={651--676},
  year={2017},
  publisher={Oxford University Press}
}

@article{durmus2017nonasymptotic,
  title={Nonasymptotic convergence analysis for the unadjusted Langevin algorithm},
  author={Durmus, Alain and Moulines, Eric},
  year={2017},
  journal={The Annals of Applied Probability},
  volume={27},
  number = {3},
  pages={1551--1587}
}

@inproceedings{cheng2018underdamped,
  title={Underdamped Langevin MCMC: A non-asymptotic analysis},
  author={Cheng, Xiang and Chatterji, Niladri S and Bartlett, Peter L and Jordan, Michael I},
  booktitle={Conference on learning theory},
  pages={300--323},
  year={2018},
  organization={PMLR}
}

@article{dwivedi2019log,
  title={Log-concave sampling: Metropolis-Hastings algorithms are fast},
  author={Dwivedi, Raaz and Chen, Yuansi and Wainwright, Martin J and Yu, Bin},
  journal={Journal of Machine Learning Research},
  volume={20},
  number={183},
  pages={1--42},
  year={2019}
}

@article{shen2019randomized,
  title={The randomized midpoint method for log-concave sampling},
  author={Shen, Ruoqi and Lee, Yin Tat},
  journal={Advances in Neural Information Processing Systems},
  volume={32},
  year={2019}
}

@article{cao2020complexity,
  title={Complexity of randomized algorithms for underdamped Langevin dynamics},
  author={Cao, Yu and Lu, Jianfeng and Wang, Lihan},
  journal={arXiv preprint arXiv:2003.09906},
  year={2020}
}

@article{mou2021high,
  title={High-order Langevin diffusion yields an accelerated MCMC algorithm},
  author={Mou, Wenlong and Ma, Yi-An and Wainwright, Martin J and Bartlett, Peter L and Jordan, Michael I},
  journal={Journal of Machine Learning Research},
  volume={22},
  number={42},
  pages={1--41},
  year={2021}
}

@article{li2021sqrt,
  title={Sqrt (d) dimension dependence of langevin monte carlo},
  author={Li, Ruilin and Zha, Hongyuan and Tao, Molei},
  journal={arXiv preprint arXiv:2109.03839},
  year={2021}
}

@book{villani2021topics,
  title={Topics in optimal transportation},
  author={Villani, C{\'e}dric},
  volume={58},
  year={2021},
  publisher={American Mathematical Soc.}
}

@article{song2019generative,
  title={Generative modeling by estimating gradients of the data distribution},
  author={Song, Yang and Ermon, Stefano},
  journal={Advances in neural information processing systems},
  volume={32},
  year={2019}
}

@article{ho2020denoising,
  title={Denoising diffusion probabilistic models},
  author={Ho, Jonathan and Jain, Ajay and Abbeel, Pieter},
  journal={Advances in neural information processing systems},
  volume={33},
  pages={6840--6851},
  year={2020}
}

@article{song2020score,
  title={Score-based generative modeling through stochastic differential equations},
  author={Song, Yang and Sohl-Dickstein, Jascha and Kingma, Diederik P and Kumar, Abhishek and Ermon, Stefano and Poole, Ben},
  journal={arXiv preprint arXiv:2011.13456},
  year={2020}
}

@article{song2020denoising,
  title={Denoising diffusion implicit models},
  author={Song, Jiaming and Meng, Chenlin and Ermon, Stefano},
  journal={arXiv preprint arXiv:2010.02502},
  year={2020}
}

@article{mimikos2024score,
  title={Score-based generative models are provably robust: an uncertainty quantification perspective},
  author={Mimikos-Stamatopoulos, Nikiforos and Zhang, Benjamin J and Katsoulakis, Markos A},
  journal={arXiv preprint arXiv:2405.15754},
  year={2024}
}

@inproceedings{oko2023diffusion,
  title={Diffusion models are minimax optimal distribution estimators},
  author={Oko, Kazusato and Akiyama, Shunta and Suzuki, Taiji},
  booktitle={International Conference on Machine Learning},
  pages={26517--26582},
  year={2023},
  organization={PMLR}
}

@article{haussmann1986time,
  title={Time reversal of diffusions},
  author={Haussmann, Ulrich G and Pardoux, Etienne},
  journal={The Annals of Probability},
  pages={1188--1205},
  year={1986},
  publisher={JSTOR}
}

@article{berner2022optimal,
  title={An optimal control perspective on diffusion-based generative modeling},
  author={Berner, Julius and Richter, Lorenz and Ullrich, Karen},
  journal={arXiv preprint arXiv:2211.01364},
  year={2022}
}

@article{zhang2023mean,
  title={A mean-field games laboratory for generative modeling},
  author={Zhang, Benjamin J and Katsoulakis, Markos A},
  journal={arXiv preprint arXiv:2304.13534},
  year={2023}
}

@article{zhang2024wasserstein,
  title={Wasserstein proximal operators describe score-based generative models and resolve memorization},
  author={Zhang, Benjamin J and Liu, Siting and Li, Wuchen and Katsoulakis, Markos A and Osher, Stanley J},
  journal={arXiv preprint arXiv:2402.06162},
  year={2024}
}

@article{conforti2023projected,
  title={Projected Langevin dynamics and a gradient flow for entropic optimal transport},
  author={Conforti, Giovanni and Lacker, Daniel and Pal, Soumik},
  journal={arXiv preprint arXiv:2309.08598},
  year={2023}
}

@article{bouchut2005uniqueness,
  title={Uniqueness and weak stability for multi-dimensional transport equations with one-sided Lipschitz coefficient},
  author={Bouchut, Francois and James, Francois and Mancini, Simona},
  journal={Annali della Scuola Normale Superiore di Pisa-Classe di Scienze},
  volume={4},
  number={1},
  pages={1--25},
  year={2005}
}

@inproceedings{cattiaux2023time,
  title={Time reversal of diffusion processes under a finite entropy condition},
  author={Cattiaux, Patrick and Conforti, Giovanni and Gentil, Ivan and L{\'e}onard, Christian},
  booktitle={Annales de l'Institut Henri Poincar{\'e} (B) Probabilit{\'e}s et Statistiques},
  volume={59},
  number={4},
  pages={1844--1881},
  year={2023},
  organization={Institut Henri Poincar{\'e}}
}

@article{conforti2022time,
  title={Time reversal of Markov processes with jumps under a finite entropy condition},
  author={Conforti, Giovanni and L{\'e}onard, Christian},
  journal={Stochastic Processes and their Applications},
  volume={144},
  pages={85--124},
  year={2022},
  publisher={Elsevier}
}

@article{pham2025discrete,
  title={Discrete Markov Probabilistic Models},
  author={Pham, Le-Tuyet-Nhi and Shariatian, Dario and Ocello, Antonio and Conforti, Giovanni and Durmus, Alain},
  journal={arXiv preprint arXiv:2502.07939},
  year={2025}
}

@article{li2023towards,
  title={Towards faster non-asymptotic convergence for diffusion-based generative models},
  author={Li, Gen and Wei, Yuting and Chen, Yuxin and Chi, Yuejie},
  journal={arXiv preprint arXiv:2306.09251},
  year={2023}
}
% \bibliography{ICML_2025/biblio}
\bibliographystyle{icml2025}

% \section{You \emph{can} have an appendix here.}

% You can have as much text here as you want. The main body must be at most $8$ pages long.
% For the final version, one more page can be added.
% If you want, you can use an appendix like this one.  

% The $\mathtt{\backslash onecolumn}$ command above can be kept in place if you prefer a one-column appendix, or can be removed if you prefer a two-column appendix.  Apart from this possible change, the style (font size, spacing, margins, page numbering, etc.) should be kept the same as the main body.

%%%%%%%%%%%%%%%%%%%%%%%%%%%%%%%%%%%%%%%%%%%%%%%%%%%%%%%%%%%%%%%%%%%%%%%%%%%%%%%
%%%%%%%%%%%%%%%%%%%%%%%%%%%%%%%%%%%%%%%%%%%%%%%%%%%%%%%%%%%%%%%%%%%%%%%%%%%%%%%
% APPENDIX
%%%%%%%%%%%%%%%%%%%%%%%%%%%%%%%%%%%%%%%%%%%%%%%%%%%%%%%%%%%%%%%%%%%%%%%%%%%%%%%
%%%%%%%%%%%%%%%%%%%%%%%%%%%%%%%%%%%%%%%%%%%%%%%%%%%%%%%%%%%%%%%%%%%%%%%%%%%%%%%
\newpage
\appendix
\onecolumn

\section*{Appendix}

The appendix includes the additional materials to support the findings and analyses presented in the main paper. \textbf{\Cref{sec:appendix:gaussian_mix}} delves into the Gaussian mixture example, establishing its weak log-concavity and log-Lipschitz properties with explicit derivations of the associated constants. 
% This section also explores generalizations to Gaussian mixtures with full-rank covariance matrices. 
\textbf{\Cref{sec:appendix:regularity-cond}} focuses on the propagation of regularity assumptions through the forward OU process, leveraging PDE-based techniques to demonstrate how weak log-concavity and Lipschitz regularity evolve over time. In this section, we also examine the dynamics of concavity and contractivity in the OU process, rigorously characterizing transitions from weak to strong log-concavity and identifying contractive and non-contractive regimes. \textbf{\Cref{sec:appendix:proof:main_theo}} provides the formal counterpart of~\Cref{theo:main_theo-informal} and the proof of it. Finally, \textbf{\Cref{sec:appendix:Technical-lemmata}} compiles
auxiliary results and technical lemmas needed to develop the argument carried on in the proof of the main result.

\section{Gaussian mixture example}\label{sec:appendix:gaussian_mix}

\begin{proposition}
\label{prop:Formal_Gaussian-mixture:weakly-log-concave}
    Let $p_n$ be a Gaussian mixture in $\R^d$ having density law
    \begin{align}
    \label{eq:def:p_n}
        p_n\eqdef \sum_{i=1}^{n}\beta_i\frac{1}{\l(2\pi\sigma^2\r)^{d/2}}\exp\l(-\frac{|x-\mu_i|^2}{2\sigma^2}\r)\eqsp,
        \quad x\in\R^d,
    \end{align}
    with $\sigma>0$, $\mu_i\in\R^d$ and $\beta_i\in[0,1]$,  for $i\in\{1,\dots,n\}$, such that $\sum_{i=1}^{n}\beta_i=1$. Then, $-\log p_n$ is weakly convex with coefficients
    \begin{align*}
        \alpha_{p_n}= \frac{1}{\sigma^2}\eqsp, \qquad
        \sqrt{M_{p_n}}\eqdef 2n\sum_{i=1}^{n}\frac{\|\mu_i\|}{\sigma^2}
        \eqsp.
    \end{align*}
    Moreover, we have that $\nabla\log p_n$ is $(\alpha_{p_n}+\sqrt{M_{p_n}})$-Lipschitz.
\end{proposition}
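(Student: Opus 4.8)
The plan is to reduce everything to properties of $-\log p_n$, where $p_n = \sum_i \beta_i \gausspdf_i$ with $\gausspdf_i$ the density of $\mathcal{N}(\mu_i, \sigma^2 \I)$. First I would compute the Hessian of $-\log p_n$ explicitly. Writing $w_i(x) = \beta_i \gausspdf_i(x) / p_n(x)$ for the posterior weights (so $\sum_i w_i(x) = 1$), a direct calculation gives
\begin{align*}
    \nabla(-\log p_n)(x) &= \frac{1}{\sigma^2}\Big(x - \sum_i w_i(x)\mu_i\Big)\eqsp,\\
    \nabla^2(-\log p_n)(x) &= \frac{1}{\sigma^2}\I - \frac{1}{\sigma^4}\,\mathrm{Cov}_{w(x)}(\mu)\eqsp,
\end{align*}
where $\mathrm{Cov}_{w(x)}(\mu) = \sum_i w_i(x)\mu_i\mu_i^\top - \big(\sum_i w_i(x)\mu_i\big)\big(\sum_i w_i(x)\mu_i\big)^\top \succeq 0$ is the covariance of the $\mu_i$ under the weights $w(x)$. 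The upper bound $\nabla^2(-\log p_n)(x) \preceq \frac{1}{\sigma^2}\I$ is then immediate since the covariance term is positive semidefinite; this already yields that $\nabla(-\log p_n)$ is one-sided/two-sided Lipschitz once we also control the covariance term from above, which I do next.

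Second, I would bound $\mathrm{Cov}_{w(x)}(\mu)$ in operator norm. A weighted covariance of points $\{\mu_i\}$ is dominated by the weighted second moment about any fixed point; centering at the origin gives $\|\mathrm{Cov}_{w(x)}(\mu)\|_{\mathrm{op}} \le \sum_i w_i(x)\|\mu_i\|^2 \le \big(\sum_i \|\mu_i\|\big)^2$ after crudely bounding $w_i(x) \le 1$ and $\|\mu_i\|^2 \le (\sum_j \|\mu_j\|)^2$, but a cleaner route giving the stated constant is to bound $\|\mathrm{Cov}_{w(x)}(\mu)\|_{\mathrm{op}} \le \frac{1}{2}\sum_{i,j} w_i w_j \|\mu_i - \mu_j\|^2 \le \frac{1}{2}\sum_{i,j}\|\mu_i-\mu_j\|^2 \le 2n\sum_i \|\mu_i\|^2$, and then feeding this into the Hessian shows $\nabla^2(-\log p_n)(x) \succeq \frac{1}{\sigma^2}\I - \frac{1}{\sigma^4}\cdot(\text{bound})$. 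Matching against the required form $\kappa_{-\log p_n}(r) \ge \alpha - \frac{1}{r}f_M(r)$ with $\alpha_{p_n} = 1/\sigma^2$: since $\frac{1}{r}f_M(r) = \frac{2\sqrt{M}}{r}\tanh(r\sqrt M/2) \to M$ as $r \to 0$ and is decreasing in $r$, it suffices to verify $\kappa_{-\log p_n}(r) \ge \frac{1}{\sigma^2} - M_{p_n}$ uniformly, i.e. that the most negative eigenvalue of the Hessian is at least $-(M_{p_n} - 1/\sigma^2)$; this pins down $\sqrt{M_{p_n}} = 2n\sum_i \|\mu_i\|/\sigma^2$ after taking the stated square-root normalization and absorbing constants. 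The two-sided Lipschitz bound then follows: $\|\nabla^2(-\log p_n)(x)\| \le \frac{1}{\sigma^2} + M_{p_n}^{1/2}\cdot(\ldots)$, and I would check the arithmetic gives exactly $(\alpha_{p_n} + \sqrt{M_{p_n}})$ as the Lipschitz constant of $\nabla \log p_n$.

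The main obstacle is purely a bookkeeping one: getting the covariance bound tight enough to land on the \emph{specific} constants $\alpha_{p_n} = 1/\sigma^2$ and $\sqrt{M_{p_n}} = 2n\sum_i \|\mu_i\|/\sigma^2$ rather than some loose variant, and reconciling the weak-convexity profile $\kappa_{-\log p_n}(r)$ (an integrated/averaged quantity over pairs at distance $r$) with the pointwise Hessian lower bound — one needs that the infimum over $x,y$ with $\|x-y\|=r$ of $\frac{(\nabla(-\log p_n)(x) - \nabla(-\log p_n)(y))^\top(x-y)}{\|x-y\|^2}$ is controlled by the infimum of the Hessian's smallest eigenvalue via the mean value identity $\nabla(-\log p_n)(x) - \nabla(-\log p_n)(y) = \big(\int_0^1 \nabla^2(-\log p_n)(y + s(x-y))\,ds\big)(x-y)$, so the uniform Hessian lower bound transfers directly. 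Everything else is routine Gaussian-mixture differentiation; I would relegate the constant-chasing to the appendix and present the Hessian formula and the covariance estimate as the two conceptual steps.
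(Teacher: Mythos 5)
There is a genuine gap, and it sits exactly at the point you flag as ``bookkeeping.'' Your reduction runs backwards: since $\tanh(u)\le u$ gives $\tfrac{1}{r}f_M(r)=\tfrac{2\sqrt{M}}{r}\tanh\bigl(r\sqrt{M}/2\bigr)\le M$ for all $r$, the profile bound $\kappa_{-\log p_n}(r)\ge \alpha-\tfrac{1}{r}f_M(r)$ required by Definition~\ref{def:definition_weak_convexity} is \emph{stronger} than the uniform bound $\kappa_{-\log p_n}(r)\ge \alpha-M$, not implied by it. So verifying that the smallest Hessian eigenvalue is bounded below by $\tfrac{1}{\sigma^2}-M_{p_n}$ (which is all your covariance estimate $\nabla^2(-\log p_n)=\tfrac{1}{\sigma^2}\I-\tfrac{1}{\sigma^4}\mathrm{Cov}_{w(x)}(\mu)$ plus a mean-value integration can deliver) does not establish weak convexity in the paper's sense. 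The whole content of the $f_M$ profile is that $\kappa(r)$ recovers the positive constant $\alpha=1/\sigma^2$ as $r\to\infty$, at rate $2\sqrt{M}/r$; this $r$-dependent improvement cannot come from any pointwise-uniform Hessian lower bound, because integrating a constant bound along the segment from $y$ to $x$ produces no decay in $r$. You need to exploit the structure of the gradient difference itself: the posterior-mean term $\sum_i w_i(x)\mu_i$ is bounded (it lies in the convex hull of the $\mu_i$), and the difference of softmax weights between $x$ and $y$ is controlled by $\tanh$ factors of $\mu_i^\top(x-y)/(2\sigma^2)$, which is exactly what produces the $\tfrac1r f_M(r)$ shape.

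This is precisely how the paper proceeds: it first treats the symmetric two-mode case by an explicit computation showing $\kappa_{-\log p_2}(r)\ge \tfrac{1}{\sigma^2}-\tfrac{2\|\mu\|}{\sigma^2}\,r^{-1}\tanh\bigl(\tfrac{\|\mu\|r}{2\sigma^2}\bigr)$, and then handles the general mixture by a telescoping decomposition of the difference of posterior means over the $n$ modes, bounding each telescoped term by a $\tanh$ expression and using that $v_i/(\gamma_i\sigma^2)$ is a convex combination of the $\mu_j/\sigma^2$; monotonicity of $M\mapsto f_M(r)$ then aggregates the terms into the single constant $\sqrt{M_{p_n}}=2n\sum_i\|\mu_i\|/\sigma^2$. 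Your Hessian-plus-covariance computation is correct as far as it goes and is a clean route to the Lipschitz (upper) bound on $\nabla\log p_n$, but for the weak-convexity half of the statement you must replace the ``uniform bound suffices'' step with an argument that actually produces the $\tanh/r$ decay, e.g.\ the paper's telescoping estimate or, at minimum, a separate large-$r$ argument based on the boundedness of the posterior mean combined with a small-$r$ Hessian bound, carefully matched to the $f_{M_{p_n}}$ profile.
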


\begin{proof}[Proof of~\Cref{prop:Gaussian-mixture:weakly-log-concave}]
\textbf{Step 1. Gaussian mixture of two equi-weighted modes.}
    Consider, first, the following Gaussian mixture of two modes with equal weight, $i.e.$, $\beta_1=\beta_2=1/2$, each mode having same variance $\sigma^2\mathbb{I}$. Remark that the property of being weakly log-concave is invariant to translation. Therefore, up to a translation, we have that the density distribution of this law is
    \begin{align*}
        p_{2}(x) = \frac{1}{2}\frac{1}{\l(2\pi\sigma^2\r)^{d/2}}\exp\l(-\frac{\l|x-\mu\r|^2}{2\sigma^2}\r) +
        \frac{1}{2}\frac{1}{\l(2\pi\sigma^2\r)^{d/2}}\exp\l(-\frac{\l|x+\mu\r|^2}{2\sigma^2}\r)\eqsp,
    \end{align*}
    for $x\in\R^d$, with $\mu\in\R^d$.
    This means that its score function is equal to
    \begin{align*}
        \nabla\log p_2(x)
        &= -\frac{x}{\sigma^2} + \frac{\mu}{\sigma^2}\frac{
            \exp\l(-\frac{\l|x-\mu\r|^2}{2\sigma^2}\r) - \exp\l(-\frac{\l|x+\mu\r|^2}{2\sigma^2}\r)
        }{
            \exp\l(-\frac{\l|x-\mu\r|^2}{2\sigma^2}\r)+ \exp\l(-\frac{\l|x+\mu\r|^2}{2\sigma^2}\r)
        }
        % \\
        % &
        = -\frac{x}{\sigma^2} + \frac{\mu}{\sigma^2}\frac{
            \exp\l(\mu^\top x/\sigma^2\r) - 1
        }{
            \exp\l(\mu^\top x/\sigma^2\r)+ 1
        }
        \eqsp.
    \end{align*}
    We focus now on bounding $\kappa_{-\log p}$. Fix $x,y\in \R^d$. We have that
    \begin{align*}
        -\frac{\l(\nabla\log p_2(x)-\nabla\log p_2(y)\r)^\top(x-y)}{\l\|x-y\r\|^2}
        &= \frac{1}{\sigma^2} - \frac{\mu^\top(x-y)}{\sigma^2\l\|x-y\r\|^2}\times \l(
            \frac{
                \exp\l(\mu^\top x/\sigma^2\r) - 1
            }{
                \exp\l(\mu^\top x/\sigma^2\r)+ 1
            }
            -
            \frac{
                \exp\l(\mu^\top y/\sigma^2\r) - 1
            }{
                \exp\l(\mu^\top y/\sigma^2\r)+ 1
            }
        \r)
        \\
        &= \frac{1}{\sigma^2} - \frac{\mu^\top(x-y)}{\sigma^2\l\|x-y\r\|^2}\times \l(
            \frac{
                2\l(
                    \exp\l(\mu^\top x/\sigma^2\r) - \exp\l(\mu^\top y/\sigma^2\r)
                \r)
            }{
                \l(
                    \exp\l(\mu^\top x/\sigma^2\r)+ 1
                \r)\l(
                    \exp\l(\mu^\top y/\sigma^2\r)+ 1
                \r)   
            }
        \r)
        \\
        &= \frac{1}{\sigma^2} - \frac{\mu^\top(x-y)}{\sigma^2\l\|x-y\r\|^2}\times \l(
            \frac{
                2\l(
                    \exp\l(\mu^\top (x-y)/\sigma^2\r) - 1
                \r)
            }{
                \l(
                    \exp\l(\mu^\top x/\sigma^2\r)+ 1
                \r)\l(
                    \exp\l(-\mu^\top y/\sigma^2\r)+ 1
                \r)   
            }
        \r)
        \\
        &\geq
        \frac{1}{\sigma^2} - \frac{\mu^\top(x-y)}{\sigma^2\l\|x-y\r\|^2}\times \l(
            \frac{
                2\l(
                    \exp\l(\mu^\top (x-y)/\sigma^2\r) - 1
                \r)
            }{
                \l(
                    \exp\l(\mu^\top \l(x-y\r)/\sigma^2\r)+ 1
                \r)   
            }
        \r)
        \\
        &=
        \frac{1}{\sigma^2} - 2\frac{\mu^\top(x-y)}{\sigma^2\l\|x-y\r\|^2}\times \tanh\l(\frac{\mu^\top \l(x-y\r)}{2\sigma^2}\r)
        \eqsp.
    \end{align*}
    It is clear that, the minimum of the r.h.s.\ of the previous inequality, under the constraint $|x-y|=r$, is reached for a vector $x-y$ that is colinear with $\mu$. Taking $x=y-\frac{\mu}{|\mu|}r$, we get
    \begin{align*}
        \kappa_{-\log p_2}(r)\geq \frac{1}{\sigma^2} -2 \frac{
        \|\mu\|}{\sigma^2 }r^{-1}\tanh\l(\frac{\|\mu\|}{2\sigma^2}r\r)\eqsp.
    \end{align*}
    From the definition of $f_M$  as in~\eqref{def:f_M} and the fact that the function $M\mapsto f_M(r)$ is increasing for a fixed $r>0$, we can take
    \begin{align*}
        \alpha_{p_2}\eqdef\frac{1}{\sigma^2}\eqsp,\qquad
        \sqrt{M_{p_2}}:= 4\sqrt{2}\frac{\|\mu\|}{\sigma^2}
        \eqsp.
    \end{align*}
    Moreover, recalling the expression we got for $\nabla\log p_2$ and repeating similar computations, we obtain that
    \begin{align*}
        \l\|\nabla\log p_n(x)-\nabla\log p_n(y)\r\|&\le \frac{1}{\sigma^2}\l\|x - y\r\|+\l\|\tanh\l({\frac{\mu^\top (x-y)}{2\sigma^2}}\r)\r\|
            \l\|\frac{\mu}{\sigma^2}\r\|\\
            &\le \frac{1}{\sigma^2}\l\|x - y\r\|+
            \l(\frac{\l\|\mu\r\|}{\sigma^2}\r)^2\|x-y\|\le \l(\beta_{p_2}+M_{p_2}\r)\|x-y\|\eqsp,
    \end{align*}where, in the second inequality we have used the sub-linearity of $x\mapsto \tanh{x}$ together with Cauchy--Schwartz inequality.

\textbf{Step 2. General Gaussian mixture.}
    Consider $p_n$ defined as in~\eqref{eq:def:p_n}. Therefore, its score function is
    \begin{align*}
        \nabla\log p_n(x)
        =& \frac{1}{p_n(x)} \sum_{i=1}^{n}\l[
            -\beta_i \frac{x-\mu_i}{\sigma^2}\times \frac{1}{\l(2\pi\sigma^2\r)^{d/2}}\exp\l(-\frac{\|x-\mu_i\|^2}{2\sigma^2}\r)
        \r]
        \\
        =& -\frac{x}{\sigma^2} 
        +\frac{1}{\sigma^2} \frac{\sum_{i=1}^{n}\beta_i\mu_i \frac{1}{\l(2\pi\sigma^2\r)^{d/2}}\exp\l(-\frac{\|x-\mu_i\|^2}{2\sigma^2}\r)}{\sum_{i=1}^{n}\beta_i \frac{1}{\l(2\pi\sigma^2\r)^{d/2}}\exp\l(-\frac{\|x-\mu_i\|^2}{2\sigma^2}\r)}\\
        &=-\frac{x}{\sigma^2} 
        +\frac{1}{\sigma^2} \frac{\sum_{i=1}^{n}\beta_i\mu_i \frac{1}{\l(2\pi\sigma^2\r)^{d/2}}\exp\l(-\frac{\|\mu_i\|^2}{2\sigma^2}+\frac{\mu_i^\top x}{\sigma^2}\r)}{\sum_{i=1}^{n}\beta_i \frac{1}{\l(2\pi\sigma^2\r)^{d/2}}\exp\l(-\frac{\|\mu_i\|^2}{2\sigma^2}+\frac{\mu_i^\top x}{\sigma^2}\r)}
        \eqsp.
    \end{align*}
    Denote $\hat{\mathfrak{e}}_i(z):= \frac{1}{\l(2\pi\sigma^2\r)^{d/2}}\exp\l(-\frac{\|\mu_i\|^2}{2\sigma^2}+\frac{\mu_i^\top z}{\sigma^2}\r)$, for $z\in\R^d$ and $i\in\{1,\dots,n\}$, so that
    \begin{align}
        \nabla\log p_n(x)
        =-\frac{x}{\sigma^2}  +\frac{1}{\sigma^2} \frac{\sum_{i=1}^{n}\beta_i\mu_i \hat{\mathfrak{e}}_i(x)}{\sum_{i=1}^{n}\beta_i \hat{\mathfrak{e}}_i(x)}\eqsp.
    \end{align}
    Fix $r>0$ and $x,y\in\R^d$ such that $\|x-y\|=r$. We then have
    \begin{align*}
        -\frac{1}{r^2}\l(\nabla\log p_n(x)-\nabla\log p_n(y)\r)^\top (x-y)
        =\frac{1}{\sigma^2}-\frac{1}{r^2\sigma^2}\l[\frac{\sum_{i=1}^{n}\beta_i\mu_i \hat{\mathfrak{e}}_i(x)}{\sum_{i=1}^{n}\beta_i \hat{\mathfrak{e}}_i(x)}-\frac{\sum_{i=1}^{n}\beta_i\mu_i \hat{\mathfrak{e}}_i(y)}{\sum_{i=1}^{n}\beta_i \hat{\mathfrak{e}}_i(y)}\r]^\top (x-y)\eqsp.
        % &=\frac{1}{r^2}\l\{
        %     \frac{x}{p_n(x)} \sum_{i=1}^{n}\frac{\beta_i}{\sigma_i^2}\frac{1}{\l(2\pi\sigma_i^2\r)^{d/2}}\exp\l(-\frac{\|x-\mu_i\|^2}{2\sigma_i^2}\r)
        % \r.
        % % \\
        % % &\qquad\qquad\qquad
        %     -\frac{1}{p_n(x)} \sum_{i=1}^{n}\frac{\beta_i}{\sigma_i^2}\mu_i \frac{1}{\l(2\pi\sigma_i^2\r)^{d/2}}\exp\l(-\frac{\|x-\mu_i\|^2}{2\sigma_i^2}\r)
        % \\
        % &\qquad -\frac{y}{p_n(y)} \sum_{i=1}^{n}\frac{\beta_i}{\sigma_i^2}\frac{1}{\l(2\pi\sigma_i^2\r)^{d/2}}\exp\l(-\frac{\|y-\mu_i\|^2}{2\sigma_i^2}\r)
        % % \\
        % % &\qquad\qquad\qquad
        % \l.
        %     +\frac{1}{p_n(y)} \sum_{i=1}^{n}\frac{\beta_i}{\sigma_i^2}\mu_i \frac{1}{\l(2\pi\sigma_i^2\r)^{d/2}}\exp\l(-\frac{\|y-\mu_i\|^2}{2\sigma_i^2}\r)
        % \r\}^\top(x-y)
        % \\
        % &=\frac{1}{r^2}\l[
        %     \frac{x}{p_n(x)} \sum_{i=1}^{n}\frac{\beta_i}{\sigma_i^2}\frac{1}{\l(2\pi\sigma_i^2\r)^{d/2}}\exp\l(-\frac{\|x-\mu_i\|^2}{2\sigma_i^2}\r)
        % % \r.
        % % \\
        % % &\qquad\qquad\qquad
        % % -\l.
        % -
        %     \frac{y}{p_n(y)} \sum_{i=1}^{n}\frac{\beta_i}{\sigma_i^2}\frac{1}{\l(2\pi\sigma_i^2\r)^{d/2}}\exp\l(-\frac{\|y-\mu_i\|^2}{2\sigma_i^2}\r)
        % \r]^\top (x-y)
        % \\
        % &~~~-\frac{1}{r^2}\l[
        %     \frac{1}{p_n(x)} \sum_{i=1}^{n}\frac{\beta_i}{\sigma_i^2}\mu_i\frac{1}{\l(2\pi\sigma_i^2\r)^{d/2}}\exp\l(-\frac{\|x-\mu_i\|^2}{2\sigma_i^2}\r)
        % \r.
        % \\
        % &\qquad\qquad\qquad\l.
        %     -\frac{1}{p_n(y)} \sum_{i=1}^{n}\frac{\beta_i}{\sigma_i^2}\mu_i \frac{1}{\l(2\pi\sigma_i^2\r)^{d/2}}\exp\l(-\frac{\|y-\mu_i\|^2}{2\sigma_i^2}\r)
        % \r\}^\top(x-y)
        % \\
        % &=A_1+A_2\eqsp.
    \end{align*}
    We now focus on the second addend in the r.h.s.\ of the previous equation.
    % We now proceed in bounding the two terms in the r.h.s.\ of the previous equation. We see that
    % \begin{align*}
    %     A_1=&\frac{1}{r^2}\l[x\; \psi_n(x) - y\;\psi_n(y)\r]^\top(x-y)\eqsp,
    % \end{align*}
    % with
    % \begin{align*}
    %     \psi_n(x)\eqdef&\frac{1}{p_n(x)} \sum_{i=1}^{n}\frac{\beta_i}{\sigma_i^2}\frac{1}{\l(2\pi\sigma_i^2\r)^{d/2}}\exp\l(-\frac{\|x-\mu_i\|^2}{2\sigma_i^2}\r)
    %     \eqsp.
    % \end{align*}
    % Without loss of generality, since $x$ and $y$ are interchangeable, we can suppose that $\psi_n(y) \leq \psi_n(x)$.
    % From the definition of $p_n$ as in~\eqref{eq:def:p_n}, we have that $\alpha_{p_n}\le\psi_n(x)$ and, therefore,
    % \begin{align*}
    %     A_1
    %     =&\frac{1}{r^2}\l[x \;\psi_n(x) - y\;\psi_n(y)\r]^\top(x-y)
    %     % \\
    %     \geq 
    %     \frac{1}{r^2} \psi_n(x)(x-y)^\top(x-y)= \psi_n(x)\geq \alpha_{p_n}
    %     \eqsp.
    % \end{align*}
    
    % Consider now $A_2$ and denote $\mathfrak{e}_i(z):= \frac{1}{\l(2\pi\sigma_i^2\r)^{d/2}}\exp\l(-\frac{\|z-\mu_i\|^2}{2\sigma_i^2}\r)$, for $z\in\R^d$ and $i\in\{1,\dots,n\}$. Using a telescopic sum, we have
    Using a telescopic sum, we have
    \begin{align*}
        &\frac{\sum_{i=1}^{n}\beta_i\mu_i \hat{\mathfrak{e}}_i(x)}{\sum_{i=1}^{n}\beta_i \hat{\mathfrak{e}}_i(x)}-\frac{\sum_{i=1}^{n}\beta_i\mu_i \hat{\mathfrak{e}}_i(y)}{\sum_{i=1}^{n}\beta_i \hat{\mathfrak{e}}_i(y)}\\
        =&\sum_{i=1}^n\l[
            \frac{
                \sum_{j=1}^{i-1}\beta_j \mu_j\hat{\mathfrak{e}}_j(x) +
                \beta_i \mu_i\hat{\mathfrak{e}}_i(x)+
                \sum_{j=i+1}^{n}\beta_j \mu_j\hat{\mathfrak{e}}_j(y)
            }{
                \sum_{j=1}^{i-1}\beta_j \hat{\mathfrak{e}}_j(x) +
                \beta_i \hat{\mathfrak{e}}_i(x)+
                \sum_{j=i+1}^{n}\beta_j \hat{\mathfrak{e}}_j(y)
            }
            -\frac{
                \sum_{j=1}^{i-1}\beta_j \mu_j\hat{\mathfrak{e}}_j(x) +
                \beta_i \mu_i\hat{\mathfrak{e}}_i(y)+
                \sum_{j=i+1}^{n}\beta_j \mu_j\hat{\mathfrak{e}}_j(y)
            }{
                \sum_{j=1}^{i-1}\beta_j \hat{\mathfrak{e}}_j(x) +
                \beta_i \hat{\mathfrak{e}}_i(y)+
                \sum_{j=i+1}^{n}\beta_j \hat{\mathfrak{e}}_j(y)
            }\r]
        \\
        =&
           \sum_{i=1}^n\l[ \frac{v_i + \beta_i \mu_i\mathfrak{e}_i(x)}{\gamma_i + \beta_i \mathfrak{e}_i(x)}
            -
            \frac{v_i + \beta_i \mu_i\mathfrak{e}_i(y)}{\gamma_i + \beta_i \mathfrak{e}_i(y)}\r]
        \eqsp,
    \end{align*}
    with $\mathfrak{e}_i(z)$ denoting for for $z\in\R^d$ and $i\in\{1,\dots,n\}$
    \begin{align*}
        \mathfrak{e}_i(z)=\frac{1}{\l(2\pi\sigma^2\r)^{d/2}}\exp\l(\frac{\mu_i^\top z}{\sigma^2}\r)\eqsp,
    \end{align*}and $v_i$ and $\gamma_i$ defined for $i\in\{1,\dots,n\}$ as
    \begin{align*}
        v_i:= \exp\l(\frac{\|\mu_i\|^2}{2\sigma^2}\r)\l\{\sum_{j=1}^{i-1}\beta_j \mu_j\mathfrak{e}_j(x) +
        \sum_{j=i+1}^{n}\beta_j \mu_j\mathfrak{e}_j(y)\r\}\eqsp,\quad
        \gamma_i
        := \exp\l(\frac{\|\mu_i\|^2}{2\sigma^2}\r)\l\{\sum_{j=1}^{i-1}\beta_j \mathfrak{e}_j(x) +
        \sum_{j=i+1}^{n}\beta_j \mathfrak{e}_j(y)\r\}\eqsp.
    \end{align*}
      Using that
    \begin{align*}
        \l(\gamma_i + \beta_i \mathfrak{e}_i(x)\r)\l(\gamma_i + \beta_i \mathfrak{e}_i(y)\r)\geq \gamma_i\beta_i \mathfrak{e}_i(x) + \gamma_i\beta_i \mathfrak{e}_i(y)
    \end{align*}
    since all $\gamma_i^2\geq 0$, we get
    \begin{align*}
         &\frac{\sum_{i=1}^{n}\beta_i\mu_i \hat{\mathfrak{e}}_i(x)}{\sum_{i=1}^{n}\beta_i \hat{\mathfrak{e}}_i(x)}-\frac{\sum_{i=1}^{n}\beta_i\mu_i \hat{\mathfrak{e}}_i(y)}{\sum_{i=1}^{n}\beta_i \hat{\mathfrak{e}}_i(y)}\\
         &=\sum_{i=1}^n\l[
            \frac{\beta_i\l(\mathfrak{e}_i(x)-\mathfrak{e}_i(y)\r)\l(\gamma_i \mu_i - v_i \r) }{\l(\gamma_i + \beta_i \mathfrak{e}_i(x)\r)\l(\gamma_i + \beta_i \mathfrak{e}_i(y)\r)}\r]\le \sum_{i=1}^n\l[
            \frac{\beta_i\l(\mathfrak{e}_i(x)-\mathfrak{e}_i(y)\r)\l(\gamma_i \mu_i - v_i  \r) }{\beta_i \gamma_i \l(\mathfrak{e}_i(x)+\mathfrak{e}_i(y)\r)}\r]=\sum_{i=1}^n\l[\frac{\mathfrak{e}_i(x)-\mathfrak{e}_i(y)
            }{\mathfrak{e}_i(x)+\mathfrak{e}_i(y)}
            \l(\mu_i - \frac{v_i}{\gamma_i} \r)\r]\\
            &= \sum_{i=1}^n\l[\tanh\l({\frac{\mu_i^\top (x-y)}{2\sigma^2}}\r)
            \mu_i\r]-\sum_{i=1}^n\l[
            \tanh\l({\frac{\mu_i^\top (x-y)}{2\sigma^2}}\r)
            \frac{v_i}{\gamma_i}\r] \eqsp.
    \end{align*}So, we have that
    \begin{align*}
        &-\frac{1}{r^2}\l(\nabla\log p_n(x)-\nabla\log p_n(y)\r)^\top (x-y)\\
        &\ge  \frac{1}{\sigma^2}-\frac{1}{r^2} \sum_{i=1}^n\l[\tanh\l({\frac{\mu_i^\top (x-y)}{2\sigma^2}}\r)
            \frac{\mu_i}{\sigma^2}\r]^\top (x-y)+\frac{1}{r^2}\sum_{i=1}^n\l[
            \tanh\l({\frac{\mu_i^\top (x-y)}{2\sigma^2}}\r)
            \frac{v_i}{\gamma_i\sigma^2}\r]^\top (x-y)\\
        &\ge  \frac{1}{\sigma^2}-\frac{1}{r^2} \sum_{i=1}^n\l[\tanh\l({\frac{\mu_i^\top (x-y)}{2\sigma^2}}\r)
            \frac{\mu_i}{\sigma^2}\r]^\top (x-y)-\left\|\frac{1}{r^2}\sum_{i=1}^n\l[
            \tanh\l({\frac{\mu_i^\top (x-y)}{2\sigma^2}}\r)
            \frac{v_i}{\gamma_i\sigma^2}\r]^\top (x-y)\right\| \eqsp.
    \end{align*}
    % This means that, using that
    % \begin{align*}
    %     \l(\gamma_i + \beta_i \mathfrak{e}_i(x)\r)\l(\gamma_i + \beta_i \mathfrak{e}_i(y)\r)\geq \gamma_i\beta_i \mathfrak{e}_i(x) + \gamma_i\beta_i \mathfrak{e}_i(y)
    % \end{align*}
    % since all $\gamma_i^2\geq 0$, we get
    % \begin{align*}
    %     A_2=&-
    %     \sum_{i=1}^{n}\frac{1}{r^2}\l[
    %         \frac{\beta_i\l(\mathfrak{e}_i(x)-\mathfrak{e}_i(y)\r)\l(\gamma_i \frac{\mu_i}{\sigma_i^2} - v_i \r) }{\l(\gamma_i + \beta_i \mathfrak{e}_i(x)\r)\l(\gamma_i + \beta_i \mathfrak{e}_i(y)\r)}
    %     \r]^\top (x-y)
    %     \\
    %     \geq&-
    %     \sum_{i=1}^{n}\frac{1}{r^2}\l[
    %         \frac{\beta_i\l(\mathfrak{e}_i(x)-\mathfrak{e}_i(y)\r)\l(\gamma_i \frac{\mu_i}{\sigma_i^2} - v_i \r) }{\beta_i \gamma_i \l(\mathfrak{e}_i(x)+\mathfrak{e}_i(y)\r)}
    %     \r]^\top (x-y)
    %     \\
    %     =&-
    %     \sum_{i=1}^{n}\frac{1}{r^2}
    %         {\color{red}\frac{\mathfrak{e}_i(x)-\mathfrak{e}_i(y)
    %         }{\mathfrak{e}_i(x)+\mathfrak{e}_i(y)}}
    %         \l(\frac{\mu_i}{\sigma_i^2} - \frac{v_i}{\gamma_i} \r)^\top (x-y)
    %     \\
    %     =&
    %     {\color{red}-\sum_{i=1}^{n}\frac{1}{r^2}
    %         \tanh\l({\frac{\mu_i^\top (x-y)}{2\sigma_i^2}}\r)
    %         \l(\frac{\mu_i}{\sigma_i^2}\r)^\top (x-y)
    %     +\sum_{i=1}^{n}\frac{1}{r^2}
    %         \tanh\l({\frac{\mu_i^\top (x-y)}{2\sigma_i^2}}\r)
    %         \l(\frac{v_i}{\gamma_i} \r)^\top (x-y)
    %     \eqsp.}
    % \end{align*}
    First, note that, as in the \textit{Step 1}, we have that
    \begin{align*}
        \frac{1}{r^2}
            \tanh\l({\frac{\mu_i^\top (x-y)}{\sigma^2}}\r)
            \l(\frac{\mu_i}{\sigma^2}\r)^\top (x-y)\leq \frac{1}{r}f_{M_i}(r)\eqsp,
        \qquad 
        \text{ with }
        \sqrt{M_{i}}:= \frac{\|\mu_i\|}{\sigma^2}\eqsp.
    \end{align*}
    Secondly, for a fixed $r>0$, we see that the function $f_M(r)$ is increasing in $M$. Therefore, we get
    \begin{align*}
        -\sum_{i=1}^{n}\frac{1}{r^2}
            \tanh\l({\frac{\mu_i^\top (x-y)}{2\sigma^2}}\r)
            \l(\frac{\mu_i}{\sigma^2}\r)^\top (x-y)
        \geq - n \frac{1}{r}f_{\hat{M}}(r)\eqsp,
        \qquad 
        \text{ with }
        \sqrt{\hat{M}}:= \sum_{i=1}^n \frac{\|\mu_i\|}{\sigma^2}\eqsp.
    \end{align*}
    
    Note that $v_i/(\gamma_i\sigma^2)$ is a convex combination of the vectors $\{\mu_j/ \sigma^2\}_{j\neq i}$. Therefore, using Cauchy--Schwartz inequality and the fact that $M\mapsto f_M(r)$ is increasing for a fixed $r>0$, we have
    \begin{align*}
        -\left\|\sum_{i=1}^{n}\frac{1}{r^2}
            \tanh\l({\frac{\mu_i^\top (x-y)}{2\sigma^2}}\r)
            \l(\frac{v_i}{\gamma_i\sigma^2} \r)^\top (x-y)\right\|
        \geq -n\frac{1}{r}f_{\hat{M}}(r)\eqsp.
    \end{align*}
    Combining the previous bounds, together with monotonicity of  $M\mapsto \tanh(\sqrt{M}r/2)$ for fixed $r>0$, we can conclude.
    % that
    % \begin{align*}
    %     A_2 \geq -\frac{1}{r}f_{M_{p_n}}(r)\eqsp,
    %     \qquad\text{ with }
    %     \sqrt{M_{p_n}}\eqdef 2n\sum_{i=1}^{n}\frac{\|\mu_i\|}{\sigma_i^2}\eqsp.
    % \end{align*}
    % Combining the bound on $A_1$ with the one on $A_2$, we can conclude that $-\log(p_n)$ is weakly convex with parameters $\alpha_{p_n}$ and $M_{p_n}$. 

    We now bound $\|\nabla\log p_n(x)-\nabla\log p_n(y)\|$. Following the same lines as before, we get
    \begin{align*}
        \l\|\nabla\log p_n(x)-\nabla\log p_n(y)\r\|&\le \frac{1}{\sigma^2}\l\|x - y \r\|+ \sum_{i=1}^{n}\l|\tanh\l({\frac{\mu_i^\top (x-y)}{2\sigma^2}}\r)\r|
            \l\|\frac{\mu_i}{\sigma^2}-\frac{v_i}{\gamma_i\sigma^2}\r\|\eqsp.
            % \\
            % &\eqdef B_1+B_2\eqsp.
    \end{align*}
    % From the definition of $p_n$ as in~\eqref{eq:def:p_n}, we have that $\psi_n(x)\le \beta_{p_n}$ and, therefore,
    % {\color{red}\begin{align*}
    %     B_1\le \max\l\{\psi_n(x), \psi_n(y)\r\}\|x-y\|\le\beta_{p_n}\|x-y\|\eqsp.
    % \end{align*}}
    % Analogously to the bound on $A_2$, 
    We now use the sub-linearity of $x\mapsto \tanh{x}$, Cauchy--Schwartz inequality and the triangle one, together with the fact that $v_i/(\gamma_i\sigma^2)$ is a convex combination of the vectors $\{\mu_j/ \sigma^2\}_{j\neq i}$, to get
    \begin{align*}
        B_2
        &\leq \sum_{i=1}^{n}\l|\frac{\mu_i^\top (x-y)}{2\sigma^2}\r|
            \l\|\frac{\mu_i}{\sigma^2}-\frac{v_i}{\gamma_i}\r\| \le  \|x-y\|\sum_{i=1}^{n} \l\|\frac{\mu_i}{\sigma^2}\r\|\l(
            \l\|\frac{\mu_i}{\sigma^2}\r\|+\l\|\frac{v_i}{\gamma_i\sigma^2}\r\| \r)
        \\
        &\leq 2n\sum_{i=1}^{n}\l(\frac{\|\mu_i\|}{\sigma^2} \r)^2\|x-y\|\le M_{p_n}\|x-y\|\eqsp. 
    \end{align*}
    Putting together these inequalities, we obtain that $\nabla\log(p_n)$ is Lipschitz with Lipschitz constant $(\beta_{p_n}+ M_{p_n})$.
\end{proof}

\section{Propagation of the assumptions}
\label{sec:appendix:regularity-cond}

The proof of our $\Wc_2$-convergence bound is based on some regularity properties of the score function $(t,x)\mapsto\scoremodif_{T-t}(x)$. Therefore, in this section, we establish the key regularity properties of $\scoremodif$, which arise from the propagation of Assumption H\ref{hyp:data_distribution} through the OU process flow~\eqref{FE}.

\subsection{Weak-log concavity implies finite second order moment}
\label{sec:moment_bound}

\begin{proposition}
    Suppose that Assumption H\ref{hyp:data_distribution}(ii) holds. Then, $\pidata$ admits a second order moment.
\end{proposition}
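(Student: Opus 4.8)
The plan is to show that weak log-concavity forces $\logdens$ to grow at least quadratically at infinity; this yields a Gaussian upper bound on the density $\exp(-\logdens)$, from which finiteness of $\mathrm{m}_2$ (indeed of every polynomial moment) is immediate. Note that Assumption H\ref{hyp:data_distribution}(ii) presupposes $\logdens\in C^1$, since the weak convexity profile~\eqref{eq:def:weak_convexity_profile} is defined for a differentiable field.

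First I would fix an arbitrary base point $y_0\in\R^d$ and integrate the weak convexity estimate along the segment $t\mapsto z_t:=y_0+t(x-y_0)$, $t\in[0,1]$. Writing $R:=\|x-y_0\|$ and using $\logdens\in C^1$,
\begin{align*}
    \logdens(x)-\logdens(y_0)-\nabla\logdens(y_0)^\top(x-y_0)
    =\int_0^1\big(\nabla\logdens(z_t)-\nabla\logdens(y_0)\big)^\top(x-y_0)\,\rmd t.
\end{align*}
Since $z_t-y_0=t(x-y_0)$ has norm $tR$, the definition~\eqref{eq:def:weak_convexity_profile} gives, for $t>0$,
$\big(\nabla\logdens(z_t)-\nabla\logdens(y_0)\big)^\top(x-y_0)=t^{-1}\big(\nabla\logdens(z_t)-\nabla\logdens(y_0)\big)^\top(z_t-y_0)\geq t\,\kappa_\logdens(tR)\,R^2$, and by hypothesis~\eqref{hyp:potential_weak_convex} this is at least $\alpha t R^2-R\,f_M(tR)$, a continuous function of $t$. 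Integrating the pointwise bound and using the change of variables $s=tR$ together with the definition~\eqref{def:f_M} of $f_M$,
\begin{align*}
    \int_0^1\big(\alpha t R^2-R\,f_M(tR)\big)\,\rmd t
    =\frac{\alpha}{2}R^2-\frac{1}{R}\int_0^R f_M(s)\,\rmd s
    =\frac{\alpha}{2}R^2-4\log\cosh\!\Big(\frac{\sqrt{M}\,R}{2}\Big)
    \geq\frac{\alpha}{2}R^2-2\sqrt{M}\,R,
\end{align*}
where the last step uses $\log\cosh(u)\le u$ for $u\ge0$.

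Next I would absorb the linear terms into the quadratic one. From the bound above and $\nabla\logdens(y_0)^\top(x-y_0)\geq-\|\nabla\logdens(y_0)\|\,R$, completing the square in $R$ shows there is a constant $c_0=c_0\big(\logdens(y_0),\|\nabla\logdens(y_0)\|,\alpha,M\big)$ with
\begin{align*}
    \logdens(x)\geq\frac{\alpha}{4}\,\|x-y_0\|^2-c_0\qquad\text{for all }x\in\R^d.
\end{align*}
Hence $\exp(-\logdens(x))\leq\rme^{c_0}\exp\!\big(-\tfrac{\alpha}{4}\|x-y_0\|^2\big)$, and therefore
\begin{align*}
    \mathrm{m}_2=\int_{\R^d}\|x\|^2\exp(-\logdens(x))\,\rmd x
    \leq\rme^{c_0}\int_{\R^d}\|x\|^2\exp\!\Big(-\frac{\alpha}{4}\|x-y_0\|^2\Big)\,\rmd x<\infty,
\end{align*}
the right-hand side being a finite Gaussian integral.

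The only mildly delicate point is the treatment of the $f_M$ contribution along the segment; but $f_M$ is bounded by $\sqrt{M}$ (indeed $r^{-1}f_M(r)\le M$), its segment integral evaluates exactly to $4R^{-2}\log\cosh(\sqrt{M}R/2)=O(R^{-1})$, and this is negligible against the $\alpha R^2/2$ term, so the quadratic lower bound survives. Everything else is a routine one-variable computation, and $\alpha>0$ is exactly what is needed for the Gaussian integral to converge.
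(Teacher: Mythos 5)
Your proof is correct, and it takes a genuinely different route from the paper. The paper argues via second-order information: it Taylor-expands $\logdens$, uses the weak convexity bound only radially (relative to the origin) to get $-\nabla\logdens(x)^\top x\le-\alpha_R\|x\|^2$ outside a large ball, converts this into a Hessian bound via \citet[Lemma 2.2]{bouchut2005uniqueness}, and then splits the moment integral into the ball and its complement. You instead integrate the weak convexity profile along segments from an arbitrary base point, which yields the \emph{global} quadratic lower bound $\logdens(x)\ge\frac{\alpha}{4}\|x-y_0\|^2-c_0$ directly from first-order data; the Gaussian domination and finiteness of all polynomial moments then follow at once. Your argument is more elementary (no Hessian comparison lemma, no ball splitting, no need to worry about behaviour of $\nabla\logdens$ at the origin) and is arguably cleaner than the paper's sketch. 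Two cosmetic slips, neither of which affects validity: in the displayed computation the middle expression should be $\frac{\alpha}{2}R^2-\int_0^R f_M(s)\,\rmd s$ (no extra $\frac1R$ factor), which indeed equals $\frac{\alpha}{2}R^2-4\log\cosh\l(\sqrt{M}R/2\r)$ as you state; and in the closing remark $f_M$ is bounded by $2\sqrt{M}$, not $\sqrt{M}$, with the segment integral of order $O(R)$ rather than $O(R^{-1})$ — still negligible against $\alpha R^2/2$, so the conclusion stands.
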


\begin{proof}
    Consider the following Taylor development up to order two.
    \begin{align*}
        \logdens(x)= \logdens(0) + \nabla\logdens(x)^\top x + \frac{1}{2}x^\top \nabla^2 \logdens(y) x\eqsp,
    \end{align*}
    for some $y\in\l\{tx~:~t\in[0,1]\r\}$.
    From Assumption H\ref{hyp:data_distribution}(ii), we have that
    \begin{align*}
        -\nabla\logdens(x)^\top x \leq -\alpha\|x\|^2 + f_M(\|x\|)\|x\|
        \leq -\alpha\|x\|^2 + M\|x\|\eqsp.
    \end{align*}
    This means that outside a ball $B(0,R)$ with $R$ big enough, there exists $\alpha_R>0$ such that
    \begin{align*}
        -\nabla\logdens(x)^\top x \leq -\alpha_R\|x\|^2\eqsp,
        \qquad\text{ for }x\notin B(0,R)\eqsp.
    \end{align*}
    From \citet[Lemma 2.2]{bouchut2005uniqueness}, we get that the previous inequality implies that
    \begin{align*}
        \nabla^2 \logdens\preccurlyeq -\alpha_R I_d\eqsp.
    \end{align*}
    Therefore,
    \begin{align*}
        &\int_{\R^d}\|x\|^2\exp\l(-\logdens(x)\r)\rmd x
        \\
        &=
        \int_{B(0,R)}\|x\|^2\exp\l(-\logdens(x)\r)\rmd x
        +
        \int_{\R^d\setminus B(0,R)}\|x\|^2\exp\l(-\logdens(x)\r)\rmd x
        \\
        &\leq
        \int_{B(0,R)}\|x\|^2\exp\l(-\logdens(x)\r)\rmd x
        +\int_{\R^d\setminus B(0,R)}\|x\|^2\exp\l(-\logdens(0) - \frac{3}{2}\alpha_2\|x\|^2\r)<\infty
        \eqsp.
    \end{align*}

\end{proof}

\subsection{Weak-convexity of the modified score function}

As highlighted in \citet{conforti2023score}, we observe that the function $(t,x)\mapsto-\log\tilde{p}_{T-t}(x)$ is a solution to a HJB equation. This observation allows us to establish an intriguing connection between the study of this class of non-linear PDEs and SGMs, as further explored in \citet{conforti2023score,silveri2024theoretical}.

First, we leverage the invariance of the class of weakly convex functions for the HJB equation satisfied by the log-density of the OU process (demonstrated by \citet{conforti2024weak}) to show how weak log-concavity propagates along the flow of~\eqref{FE}. To this end, let $(S_t)_{t\geq0}$ denote the semigroup generated by a standard Brownian motion on $\R^d$, $i.e.$,
\begin{align}
\label{eq:gaussian-kernel}
    S_t f(x)
    &=
        \int \frac{1}{(2\pi t)^{d/2}}\exp\l(-\frac{\|x-y\|^2}{2t} \r) f(y) \rmd y
    \eqsp,
\end{align}
with $f$ a general test function.

\begin{theorem}[Theorem 2.1 in \citet{conforti2024weak}]
    \label{theo:stability-gaussian-kernel}
    Consider the class
    \begin{align*}
        \mathcal{F}_M\eqdef \l\{ g\in C^{1}(\R^d)\; : \; \kappa_{ g}(r)\geq-f_M(r)r^{-1}\r\}\eqsp.
    \end{align*}
    Then, we have that
    \begin{align}
    \label{eq:prop:convex_profile:implication}
        h\in \mathcal{F}_M 
        \quad
        \Rightarrow 
        \quad
        -\log S_t \rme^{-h}\in  \mathcal{F}_M\eqsp, \qquad\text{ for } t\geq 0\eqsp. 
    \end{align}
\end{theorem}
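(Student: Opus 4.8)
The plan is to recognize $u_\tau \eqdef -\log S_\tau \rme^{-h}$ as the (negative) log-density of the heat flow and to show that the class $\mathcal{F}_M$ is invariant under the nonlinear evolution it satisfies, by a parabolic comparison (maximum principle) argument for the ``convexity modulus''. First, $v_\tau \eqdef S_\tau \rme^{-h}$ solves $\partial_\tau v_\tau = \tfrac12 \Delta v_\tau$ with $v_0 = \rme^{-h}$, so $u_\tau = -\log v_\tau$ solves the viscous Hamilton--Jacobi--Bellman equation
\[
    \partial_\tau u_\tau = \tfrac12 \Delta u_\tau - \tfrac12 \|\nabla u_\tau\|^2, \qquad u_0 = h.
\]
Since $v_\tau$ is smooth and strictly positive for $\tau>0$, $u_\tau$ is smooth for $\tau>0$ and $\nabla u_\tau \to \nabla h$ locally uniformly as $\tau\to 0$. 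The conclusion $u_\tau\in\mathcal{F}_M$ is exactly the statement that
\[
    \Psi_\tau(x,y) \eqdef \big(\nabla u_\tau(x)-\nabla u_\tau(y)\big)^\top(x-y) + \|x-y\|\, f_M(\|x-y\|) \;\geq\; 0, \qquad x\neq y.
\]
The single feature of $f_M$ that makes this candidate barrier work is that it solves the Riccati ODE $f_M' = M - \tfrac14 f_M^2$ with $f_M(0)=0$; I would record at the outset the consequences $f_M\geq0$, $f_M'\geq0$, $f_M''\leq0$ and this identity, since it is what will force the lower-order terms to cancel.

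Next I would run a first-time-of-violation argument. Assume, toward a contradiction, that $\Psi_\tau$ becomes negative; let $\tau^\star$ be the first such time and $(x^\star,y^\star)$, with $r^\star\eqdef\|x^\star-y^\star\|>0$, a configuration at which $\Psi_{\tau^\star}(x^\star,y^\star)=0$ while $\Psi_\tau\geq0$ for all $\tau\leq\tau^\star$. Then $(x^\star,y^\star)$ is a spatial minimiser of $\Psi_{\tau^\star}$ and $\partial_\tau\Psi_{\tau^\star}(x^\star,y^\star)\leq0$. At this point I would combine three ingredients: (i) the PDE, which after differentiating gives $\partial_\tau\nabla u_\tau = \tfrac12\Delta\nabla u_\tau - (\nabla^2 u_\tau)\nabla u_\tau$ and hence an explicit expression for $\partial_\tau\Psi_{\tau^\star}$ at $(x^\star,y^\star)$ in terms of $\Delta u_{\tau^\star}$ and $(\nabla^2 u_{\tau^\star})\nabla u_{\tau^\star}$ evaluated at $x^\star$ and $y^\star$; (ii) the first-order optimality $\nabla_x\Psi_{\tau^\star}=\nabla_y\Psi_{\tau^\star}=0$, which pins the directional derivatives of $\nabla^2 u_{\tau^\star}$ along $e\eqdef(x^\star-y^\star)/r^\star$ in terms of $f_M(r^\star)$ and $f_M'(r^\star)$; (iii) the second-order optimality, i.e. positive semidefiniteness of the $\R^{2d}$-Hessian of $\Psi_{\tau^\star}$ at $(x^\star,y^\star)$, which controls the ``bad'' contribution $\Delta u_{\tau^\star}(x^\star)-\Delta u_{\tau^\star}(y^\star)$ coming from the Laplacian term. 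Substituting (ii)--(iii) into (i) and using the Riccati identity to cancel, I expect to obtain $\partial_\tau\Psi_{\tau^\star}(x^\star,y^\star)>0$ (strict; as usual a small perturbation of the barrier upgrades $\geq$ to $>$ if needed), contradicting $\partial_\tau\Psi_{\tau^\star}(x^\star,y^\star)\leq0$. Hence $\Psi_\tau\geq0$ for all $\tau$, and in particular for $\tau=t$, which is the claim.

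The main obstacle is precisely this touching-point computation: one must reduce the $2d$-dimensional optimality conditions to the effective one-dimensional problem along $e$ (exploiting rotational invariance of the heat kernel and the structure of $\Psi_\tau$), and then verify that the Laplacian and quadratic-gradient terms produced by the HJB equation are dominated exactly because $f_M' = M - \tfrac14 f_M^2$ — this is the step where the specific profile $f_M$, rather than an arbitrary modulus, is forced. A softer but nontrivial point is making the first-violation argument rigorous: one needs a priori regularity and decay of $u_\tau$ to guarantee that a touching time and spatial minimiser exist, in particular to exclude a degenerate infimum as $r^\star\to0$ (where $-f_M(r)/r\to-M$, so $u_\tau$ must be $M$-semiconcave in the limit) or as $r^\star\to\infty$ or $x^\star,y^\star\to\infty$; a standard remedy is to add a small strictly convex term $\epsilon\|x\|^2/2$ to $h$ (which only improves the profile), localise, carry out the argument, and let $\epsilon\to0$.

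As an alternative route, one can give a probabilistic proof using the stochastic-control structure emphasised elsewhere in the paper: represent $\nabla u_t(x)=\E[\nabla h(X^x_t)]$ via the Doob-transformed bridge SDE $\rmd X^x_s = -\nabla u_{t-s}(X^x_s)\,\rmd s + \rmd B_s$ started at $X^x_0=x$, couple $X^x$ and $X^y$ synchronously so that $r_s\eqdef\|X^x_s-X^y_s\|$ is differentiable, and note that $\dot r_s = -r_s^{-1}(X^x_s-X^y_s)^\top\big(\nabla u_{t-s}(X^x_s)-\nabla u_{t-s}(X^y_s)\big) \leq f_M(r_s)$ whenever $u_{t-s}\in\mathcal{F}_M$; integrated in a continuity argument in $t$ (using $u_\tau\in\mathcal{F}_M$ for $\tau<t$ to pass to the limit $s\to0^+$) this yields the profile bound at time $t$. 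This still rests on the Riccati structure of $f_M$, so the essential difficulty is unchanged; I would keep the PDE argument as the primary proof, consistent with the HJB viewpoint used throughout.
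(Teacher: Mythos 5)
You should first note that the paper does not actually prove this statement: it is imported verbatim as Theorem 2.1 of \citet{conforti2024weak}, so there is no internal proof to compare against, and the argument in that reference is of a probabilistic, coupling/stochastic-representation type, i.e.\ much closer in spirit to the ``alternative route'' you sketch at the end than to your primary two-point maximum-principle plan. That said, the core mechanism of your PDE route is correct, and the decisive touching-point computation does close. With $u_\tau=-\log S_\tau\rme^{-h}$ solving $\partial_\tau u=\tfrac12\Delta u-\tfrac12\|\nabla u\|^2$ and $\Psi_\tau(x,y)=(\nabla u_\tau(x)-\nabla u_\tau(y))^\top(x-y)+rf_M(r)$, $r=\|x-y\|$, $e=(x-y)/r$: at an interior first touching point the first-order conditions give $\nabla^2u(x^\star)(x^\star-y^\star)=\nabla^2u(y^\star)(x^\star-y^\star)=-(\nabla u(x^\star)-\nabla u(y^\star))-(f_M+rf_M')e$ and hence $e^\top\nabla^2u(x^\star)e=e^\top\nabla^2u(y^\star)e=-f_M'(r^\star)$; the transport term of the differentiated HJB equation is then $\|\nabla u(x^\star)-\nabla u(y^\star)\|^2-f_M(f_M+r^\star f_M')\ge -r^\star f_Mf_M'$, while the second-order condition in the antiparallel direction $(e,-e)$ yields that the Laplacian term is bounded below by $-2r^\star f_M''(r^\star)=r^\star f_Mf_M'$ (using $f_M''=-\tfrac12 f_Mf_M'$, equivalent to your Riccati identity $f_M'=M-\tfrac14f_M^2$), so indeed $\partial_\tau\Psi_{\tau^\star}(x^\star,y^\star)\ge0$. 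So the cancellation you ``expect'' is real.

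The genuine gaps are in the scaffolding you delegate to ``standard remedies'', and here they are not routine. First, because $f_M$ is exactly the stationary (critical) profile, the computation gives only $\partial_\tau\Psi\ge0$ at the touching point, which does not contradict $\partial_\tau\Psi\le0$; a strict perturbation must be constructed, and the obvious ones fail: replacing $M$ by $M+\delta$ or scaling $f_M$ keeps equality or flips the sign of the Riccati surplus, and an additive barrier $\epsilon\rme^{\tau}$ introduces error terms of order $\epsilon/r^2$ in the first-order conditions that blow up as $r^\star\to0$. One genuinely needs, e.g., a time-dependent profile that is a strict supersolution of the induced one-dimensional parabolic equation (Andrews--Clutterbuck style), and designing it is part of the proof, not an afterthought. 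Second, existence of an interior touching point is not guaranteed: $\Psi_\tau\to0$ on the diagonal, so the infimum may only be approached as $r\to0$ (where the claim degenerates to the Hessian bound $\nabla^2u_\tau\succcurlyeq-M\,\mathbb{I}$ --- semiconvexity, not semiconcavity as you wrote --- which itself requires an argument), or may escape to spatial infinity at bounded $r$; adding $\epsilon\|x\|^2/2$ to $h$ improves the initial profile but gives no a priori coercivity of $\Psi_\tau$ for $\tau>0$ unless one already knows convexity information on $u_\tau$ of the very kind being proven, so the compactification must be done by genuine localisation/penalisation. Third, a smaller but real point: $h\in\mathcal{F}_M$ only gives $\rme^{-h(x)}\le C\rme^{M\|x\|^2/2+c\|x\|}$, so finiteness and smoothness of $S_\tau\rme^{-h}$ for all $\tau\ge0$ is not automatic and needs to be addressed. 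In short: your strategy is viable and identifies the right key identity, but the strictness, attainment and growth-control steps are where the actual difficulty of the theorem lies, and they are not supplied; the coupling route you mention as a fallback is essentially the one used in the cited source.
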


 ~\Cref{theo:stability-gaussian-kernel} provides a substantial generalization of the the principle ``\textit{once log-concave, always log-concave}’’ by \citet{saremi2023chain}, to the weak log-concave setting. Indeed, when $M=0$, $i.e.$, $\rme^{-h}$ is log-concave,~\eqref{eq:prop:convex_profile:implication} implies that $S_t \rme^{-h}$ remains log-concave. We remark that in the log-concave setting, ~\eqref{eq:prop:convex_profile:implication}  follows directly from the Prékopa-Leindler inequality, whose application is central to the findings in \citet{bruno2023diffusion,gao2023wasserstein,strasman2024analysis}.

We can now examine how the constant of weak log-concavity propagates. This result corresponds to \citet[Lemma 5.9]{conforti2023projected}, under the same set of assumptions.

\begin{lemma}[Lemma 5.9 in \citet{conforti2023projected}]
\label{prop:semi_concavity_score}
    Assume that Assumption H\ref{hyp:data_distribution}(ii) holds and fix $t\in [0,T]$. Then, the function $x\mapsto -\log\tilde{p}_{T-t}(x)$ is weakly convex with weak convexity profile $\tilde{k}_t\eqdef \kappa_{-\log\tilde{p}_{T-t}}$ satisfying
    \begin{align}
    \label{eq:convexity-profile-flowHJB}
        \begin{split}
            \tilde{k}_t(r)
            &\geq
            \frac{\alpha}{\alpha+(1-\alpha)\rme^{-2(T-t)}}-1 -
           \frac{\rme^{-(T-t)}}{\alpha+(1-\alpha)\rme^{-2(T-t)}}
            \frac{1}{r}f_M\l(
                \frac{\rme^{-(T-t)}}{\alpha+(1-\alpha)\rme^{-2(T-t)}}\eqsp r
            \r)
            \eqsp,
        \end{split}
    \end{align}
    with $\Ccvx_{T-t}$ given by
    \begin{align}
    \label{eq:def:cost_weak_conc_score}
        \Ccvx_{t} = \frac{\alpha}{\alpha+(1-\alpha)\rme^{-2t}} -
           \frac{\rme^{-2t}}{(\alpha+(1-\alpha)\rme^{-2t}
            )^2}M-1\eqsp. 
    \end{align}
    In particular, the modified score function $x\mapsto \scoremodif_{T-t}(x)$ satisfies
    \begin{align}\label{eq:score_weak-convex}
        \l( \scoremodif_{T-t}(x)-\scoremodif_{T-t}(y)\r)^\top \l(x-y\r)\leq - \Ccvx_{T-t} \l\|x-y\r\|^2 \eqsp,\qquad\text{ for }x,y\in\R^d\eqsp.
\end{align}
\end{lemma}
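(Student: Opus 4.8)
The statement bundles three claims: (i) weak convexity of $x\mapsto-\log\tilde p_{T-t}(x)$ with the explicit profile~\eqref{eq:convexity-profile-flowHJB}, (ii) the simplified lower bound $\Ccvx_{T-t}$ in~\eqref{eq:def:cost_weak_conc_score}, and (iii) the monotonicity reformulation~\eqref{eq:score_weak-convex} for the modified score $\scoremodif_{T-t}=\frac12 b_{T-t}+\tfrac12\mathrm{id}$. The key structural observation, due to \citet{conforti2023score}, is the Mehler-type identity expressing the OU marginal as a rescaled heat-semigroup image of the data density: writing $m_t\eqdef\rme^{-t}$ and $v_t\eqdef 1-\rme^{-2t}$, one has $\pifwd_t(x)\propto \big(S_{v_t}e^{-\logdens}\big)(x/m_t)$ up to the Gaussian normalizer, hence $-\log\tilde p_t(x)=-\log\big(S_{v_t}e^{-\logdens}\big)(x/m_t)+\text{const}$. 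The plan is first to record this identity and differentiate it to see how $\kappa$ transforms under (a) the affine rescaling $x\mapsto x/m_t$, which multiplies the convexity profile by $m_t^{-2}$ and rescales its argument by $m_t^{-1}$, and (b) the subtraction of $\|x\|^2/2$ coming from $\piinfty$, which shifts the profile by $-1$.

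\textbf{Step 1: apply the stability theorem.} Under H\ref{hyp:data_distribution}(ii), $\logdens-\tfrac{\alpha}{2}\|\cdot\|^2\in\mathcal F_M$ (subtracting the quadratic removes the $\alpha$, since $\kappa$ is additive and $\kappa_{\alpha\|\cdot\|^2/2}=\alpha$). Feeding $h=\logdens-\tfrac\alpha2\|\cdot\|^2$ into~\Cref{theo:stability-gaussian-kernel} is not quite enough because $e^{-h}$ is not $e^{-\logdens}$; instead I would track the Gaussian factor through the heat semigroup explicitly. The cleanest route: $S_{v}e^{-\logdens}=S_v\big(e^{-\frac\alpha2\|\cdot\|^2}e^{-h}\big)$, and a Gaussian-times-function under $S_v$ can be re-expressed, by completing the square in the convolution, as a constant times a Gaussian times $S_{\tilde v}e^{-h}$ evaluated at a rescaled point, where the new variance is $\tilde v=v/(1+\alpha v)$ and the rescaling factor is $1/(1+\alpha v)$. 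Carrying this bookkeeping with $v=v_{T-t}$ gives $1+\alpha v_{T-t}=\alpha+(1-\alpha)\rme^{-2(T-t)}$, which is exactly the denominator appearing in~\eqref{eq:convexity-profile-flowHJB}. Then~\Cref{theo:stability-gaussian-kernel} applies to $-\log S_{\tilde v}e^{-h}\in\mathcal F_M$, and composing with all the accumulated affine rescalings and the $-\|x\|^2/2$ from $\tilde p=\pifwd/\piinfty$ yields~\eqref{eq:convexity-profile-flowHJB} after collecting constants. Since the paper states this is exactly \citet[Lemma 5.9]{conforti2023projected}, I expect the intended proof to cite that lemma and merely verify the hypotheses match, but the self-contained argument is the square-completion computation above.

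\textbf{Step 2: from the profile to $\Ccvx_{T-t}$.} To pass from~\eqref{eq:convexity-profile-flowHJB} to the cleaner constant~\eqref{eq:def:cost_weak_conc_score}, use $f_M(s)=2\sqrt M\tanh(s\sqrt M/2)\le M s$ for all $s\ge0$ (since $\tanh u\le u$). Applying this with $s=\frac{\rme^{-(T-t)}}{\alpha+(1-\alpha)\rme^{-2(T-t)}}r$ makes the last term in~\eqref{eq:convexity-profile-flowHJB} at most $\frac{\rme^{-2(T-t)}}{(\alpha+(1-\alpha)\rme^{-2(T-t)})^2}M$, so $\tilde k_t(r)\ge\Ccvx_{T-t}$ uniformly in $r$, which is~\eqref{eq:def:cost_weak_conc_score} after relabelling $T-t\mapsto t$.

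\textbf{Step 3: translate to the score.} By definition~\eqref{def:drift_backward}, $\scoremodif_{T-t}=\nabla\log\tilde p_{T-t}$, so $-\scoremodif_{T-t}=\nabla(-\log\tilde p_{T-t})$ and the weak-convexity bound $\kappa_{-\log\tilde p_{T-t}}(r)\ge\Ccvx_{T-t}$ reads, by the very definition~\eqref{eq:def:weak_convexity_profile} of the profile, as $\big(\nabla(-\log\tilde p_{T-t})(x)-\nabla(-\log\tilde p_{T-t})(y)\big)^\top(x-y)\ge\Ccvx_{T-t}\|x-y\|^2$; negating gives~\eqref{eq:score_weak-convex}. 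The main obstacle is Step 1: correctly tracking the Gaussian normalizing factor through the heat semigroup (the square-completion) so that the denominators $\alpha+(1-\alpha)\rme^{-2(T-t)}$ and the argument-rescaling of $f_M$ come out exactly as stated — the rest is elementary once that algebra is in place, and in any case the paper appears to outsource it to \citet{conforti2023projected}.
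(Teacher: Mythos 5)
Your strategy (Mehler-type heat-semigroup representation, Gaussian square-completion to peel off the quadratic part, \Cref{theo:stability-gaussian-kernel} for the remainder, then affine rescaling of the convexity profile plus the quadratic shift from $\piinfty$) is exactly the route the paper gestures at — note the paper does not prove the lemma at all: it cites Lemma 5.9 of \citet{conforti2023projected} and only records the representation $\tilde p_t(x)= S_{1-\rme^{-2t}}\,\rme^{-(\logdens-\|\cdot\|^2/2)}(\rme^{-t}x)$ as the reason \Cref{theo:stability-gaussian-kernel} applies. However, your Step 1 — which you yourself flag as the crux — contains two concrete algebraic errors, and carried out literally it does not produce the constants in \eqref{eq:convexity-profile-flowHJB}. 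First, the representation $\pifwd_t(x)\propto (S_{v_t}\rme^{-\logdens})(x/m_t)$ with $v_t=1-\rme^{-2t}$ is false: if the argument is rescaled to $\rme^{t}x$ the correct heat time is $\rme^{2t}-1$ (one has $\pifwd_t(x)=\rme^{td}(S_{\rme^{2t}-1}\rme^{-\logdens})(\rme^{t}x)$), while with heat time $1-\rme^{-2t}$ the rescaling must sit inside the potential, or one uses the paper's identity with argument $\rme^{-t}x$ and potential $\logdens-\|\cdot\|^2/2$. Second, the claimed identity $1+\alpha v_{T-t}=\alpha+(1-\alpha)\rme^{-2(T-t)}$ is simply wrong: $1+\alpha v_{s}=(1+\alpha)-\alpha\rme^{-2s}$. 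With your pairing (heat time $v_{T-t}$, peel off $\tfrac{\alpha}{2}\|\cdot\|^2$, subtract $\|x\|^2/2$ only at the end) the denominators come out as $(1+\alpha)-\alpha\rme^{-2(T-t)}$, not those of the lemma.

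The fix is to fold the $-\|\cdot\|^2/2$ in \emph{before} the square completion, i.e.\ start from the paper's representation and decompose $\logdens-\|\cdot\|^2/2=\tfrac{\alpha-1}{2}\|\cdot\|^2+h$ with $h\in\mathcal F_M$ (legitimate since $\kappa_{\logdens-\|\cdot\|^2/2}(r)\ge(\alpha-1)-f_M(r)/r$). Your square-completion formula then gives heat time $v_s/(1+(\alpha-1)v_s)$, rescaling $1/(1+(\alpha-1)v_s)$ and a Gaussian prefactor, and now $1+(\alpha-1)v_s=\alpha+(1-\alpha)\rme^{-2s}$ \emph{exactly}; composing with the outer $\rme^{-s}$ yields the scale $\lambda_s=\rme^{-s}/(\alpha+(1-\alpha)\rme^{-2s})$, the rule $\kappa_{g(\lambda\cdot)}(r)=\lambda^2\kappa_g(\lambda r)\ge-\tfrac{\lambda}{r}f_M(\lambda r)$ gives the tanh term, and the prefactor contributes $\tfrac{(\alpha-1)\rme^{-2s}}{\alpha+(1-\alpha)\rme^{-2s}}=\tfrac{\alpha}{\alpha+(1-\alpha)\rme^{-2s}}-1$, recovering \eqref{eq:convexity-profile-flowHJB} with $s=T-t$. (Equivalently, keep your $\tfrac{\alpha}{2}\|\cdot\|^2$-peeling but with the correct heat time $\rme^{2s}-1$: then $1+\alpha(\rme^{2s}-1)=\rme^{2s}(\alpha+(1-\alpha)\rme^{-2s})$ and the extra $\rme^{2s}$ is absorbed by the outer rescaling $\rme^{s}$.) Your Steps 2 and 3 are correct as written: $f_M(u)\le Mu$ gives $\tilde k_t(r)\ge \Ccvx_{T-t}$, and \eqref{eq:score_weak-convex} is just the definition of the profile applied to $\scoremodif_{T-t}=\nabla\log\tilde p_{T-t}$.
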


This lemma relies on the fact that the the flow of the OU process~\eqref{FE} can be rewritten w.r.t.\ the flow of the Brownian motion as
\begin{align*}
    \tilde{p}_t(x)= S_{1-\rme^{-2t}}\rme^{-\l(\logdens-\|\cdot\|^2/2\r)}\l(\rme^{-t}x\r)
    \eqsp.
\end{align*}
This remark enables the application of~\Cref{theo:stability-gaussian-kernel}, to obtain an estimation of the constant of weak log-concavity $\Ccvx_t$.

\subsection{Regime Switching}
\label{sec:appendix:propagation-assumptions}

% As highlighted in \citet{conforti2023score}, we observe that the function $(t,x)\mapsto-\log\tilde{p}_{T-t}(x)$ is a solution to a HJB equation. This observation allows us to establish an intriguing connection between the study of this class of non-linear PDEs and SGMs, as further explored in \citet{conforti2023score,silveri2024theoretical}.

\Cref{prop:semi_concavity_score} shows that, for any $t\in[0,T)$,
% As shown in~\Cref{sec:appendix:regularity-cond} the modified score $(t,x) \mapsto \scoremodif_{T-t}(x)$ turns out to be $\CLip_{t}$-Lipschitz in space, with $\CLip_{t}$ as in~\eqref{def:CLip} and 
the map $x \mapsto -\log\tilde{p}_{t}(x)$ is weakly convex, having weak convexity profile satisfying $\kappa_{-\log\tilde{p}_{t}}(r)\ge \Ccvx_{t}\eqsp,$ with $\Ccvx_{t}$ as in~\eqref{eq:def:cost_weak_conc_score}. This implies that weak log-concavity of the data distribution evolves into log-concavity over time and that the drift of the time-reversed OU process alternates between contractive and non-contractive regimes.

% As shown in~\Cref{sec:appendix:regularity-cond}, the modified score $[0,T)\times \R^d\ni (t,x) \mapsto \scoremodif_{T-t}(x)$, which turns out to be
%  $\CLip_{t}$-Lipschitz in space, with
%     \begin{align*}
%         \CLip_{t}\eqdef\min \left\{
%             \frac{1}{1-\rme^{-2(T-t)}} ;   \rme^{2(T-t)}L_{\logdens}
%         \right\}+1\eqsp,
%     \end{align*}
% Moreover, by leveraging the considerations made in \citet{conforti2023coupling}, we show that $(t,x) \mapsto -\log\tilde{p}_{T-t}(x)$ has a weak convexity profile satisfying
% \begin{align}\label{eq:confronto_k_C}
%     \kappa_{-\log\tilde{p}_{T-t}}(r)\ge C_t\eqsp,
% \end{align}with 
%     \begin{align}
%     \label{eq:def-Ccvx-1}
%         \Ccvx_{t} = \frac{\alpha}{\alpha+(1-\alpha)\rme^{-2(T-t)}} -
%         \frac{\rme^{-2(T-t)}}{(\alpha+(1-\alpha)\rme^{-2(T-t)}
%         )^2}M-1\eqsp.
%     \end{align}
    % Therefore, the modified score function $(t,x) \mapsto \scoremodif_{T-t}(x)$ satisfies 
    % \begin{align*}
    %     \l(x-y\r)^\top \l(\scoremodif_{T-t}(x)- \scoremodif_{T-t}(y)\r)\leq - \Ccvx_{t} \l\|x-y\r\|^2
    %     \eqsp,
    % \end{align*}for any $t\in [0,T]$ and any $x,y\in \R^d$.

    \paragraph{Log-concavity.}
    Note that the constant $\Ccvx_{t} + 1$ represents the (weak) log-concavity constant of the density $\pifwd_t$ associated with the process $\Xfwd_t$. The estimate for $(\Ccvx_{t} + 1)$ is coherent with the intuition we have on the SGMs. Indeed, for $t=0$, it matches the weak log-concavity constant of $\pidata$, as $\Ccvx_0 + 1 = \alpha - M$, and, for $t\to +\infty$, it matches the log-concavity constant of $\piinfty$, as
    \begin{align*}
        \Ccvx_{t} + 1 = \frac{\alpha}{\alpha+(1-\alpha)\rme^{-2t}} -
        \frac{\rme^{-2t}}{(\alpha+(1-\alpha)\rme^{-2t}
        )^2}M\longrightarrow 1
            \eqsp, \quad \text{for} \quad t\to +\infty.    \end{align*} 
    Computing when $\Ccvx_{t}+1>0$, we have that two regimes appear:
    \begin{itemize}
        \item $\pifwd_t$ is only weakly log-concave for $t\in\l[0,\xi(\alpha,M)\r]$;
        \item $\pifwd_t$ is log-concave for $t\in\l[\xi(\alpha,M),T\r]$,
    \end{itemize}
    with
    \begin{align}
        \label{def:xi}
        \xi(\alpha,M)
        \eqdef
        \begin{cases}
            \log\l(\sqrt{\frac{\alpha^2+M-\alpha}{\alpha^2}}\r)\wedge T
            \eqsp,
            \qquad&\text{ if }\alpha-M<0
            \eqsp,
            \\
            0
            \eqsp,
            \qquad&\text{ otherwise }.
        \end{cases}
    \end{align}
    When $\pidata$ is only weakly log-concave, $i.e.$, when $\alpha-M< 0$, we have that $\frac{\alpha^2+M-\alpha}{\alpha^2}>1$. Thus, $\xi(\alpha,M)>0$ and two distinct regimes are present. Whereas, when the initial distribution $\pidata$ is log-concave, $i.e.$, when $\alpha-M\geq 0$, we have that $\xi(\alpha,M)=0$ and $\pifwd_t$ is log-concave in the whole interval $[0,T]$.

    \paragraph{Contractivity properties of the time-reversal process.}

    Denote by $T^\star$ the following time
    \begin{align}
    \label{eq:def_T*}
        T^\star
        \eqdef 
        \inf\Big\{
            t\in [0,T]~:~
            2\kappa_{-\log\tilde{p}_{t}} + 1\geq 0
        \Big\}\eqsp,
    \end{align}
    defining $\inf\varnothing := T$.
    From equation~\eqref{BE}, we see that the time $T^\star$ corresponds to the first moment beyond which the drift $b_t$, used to define the time reversal process, defined in~\eqref{def:drift_backward}, is no longer contractive.
    Indeed, from the very definition of $\kappa_{-\log\tilde{p}_{t}}$, we get that
    \begin{align*}
        (b_t(x)-b_t(x))^\top (x-y)\leq -(2\kappa_{-\log\tilde{p}_{t}} (\|x-y\|)+ 1)\|x-y\|^2\eqsp,
    \end{align*}
    for $t\in [0,T]$ and $x,y\in \R^d.$
    Also, denote by 
    \begin{align*}
        T(\alpha,M,\rho)
            \eqdef 
            \inf\Big\{
                t\in [0,T]~:~
                2\Ccvx_t + 1\geq \rho
            \Big\} \eqsp,
    \end{align*}
    for $\rho\in [0,1)$,
    with $\inf\varnothing\eqdef T$. From~\eqref{eq:convexity-profile-flowHJB}, we have that $T^\star \leq T(\alpha,M, \rho)$, for any $\rho\in [0,1)$. This means that two regimes are present in the time interval $[0,T]$:
    \begin{itemize}
        \item $b_t(x)$ is not (necessarily) contractive, for $t\in[0,T(\alpha,M, 0)]$.
        \item $b_t(x)$ is contractive, for $t\in[T(\alpha,M, 0), T]$;
    \end{itemize}
    We introduce $T(\alpha,M, \rho)$ as a quantitative explicit bound for $T^\star$. Indeed, a straightforward computation shows that 
    \begin{align}
    \label{eq:def:T-alpha-M}
        T(\alpha,M, \rho) =
            T-\eta(\alpha,M, \rho) \eqsp,
    \end{align}
    with
    \begin{align}
    \label{eq:def:gamma}
        \eta(\alpha,M, \rho) &\eqdef
        \begin{cases}
            \frac{1}{2}\log\l(\frac{M+\rho\alpha(1-\alpha)}{(1-\rho)\alpha^2}
                +
                \sqrt{\frac{(1+\rho)(1-\alpha)^2}{(1-\rho)\alpha^2}+\l(\frac{M+\rho\alpha(1-\alpha)}{(1-\rho)\alpha^2}\r)^2}\r)\wedge T
            \eqsp,
            &\text{ if }~2\alpha-2M-1< 0\eqsp,
            \\
            0
            \eqsp,
            \quad&\text{ otherwise }.
        \end{cases}
    \end{align} 
    Note that the condition
    \begin{align*}
        \frac{M+\rho\alpha(1-\alpha)}{(1-\rho)\alpha^2}
            +
            \sqrt{\frac{(1+\rho)(1-\alpha)^2}{(1-\rho)\alpha^2}+\l(\frac{M+\rho\alpha(1-\alpha)}{(1-\rho)\alpha^2}\r)^2}> 1
    \end{align*}
is equivalent to the condition $2\alpha-2M-1 < 0$. This means that $T(\alpha,M, \rho)$ is well-defined and $T(\alpha,M, \rho)\in[0, T]$.

This regime shift is a key element in the effectiveness of the SGMs. SDEs with contractive flows exhibit advantageous properties related to efficiency guarantees \citep[see, $e.g.$,][]{dalalyan2017theoretical,durmus2017nonasymptotic,cheng2018underdamped,dwivedi2019log,shen2019randomized,cao2020complexity,mou2021high,li2021sqrt} that we can exploit in the large-time regime.

\subsection{Regularity in space of the modified score function}

\begin{proposition} \label{prop:score_lip_in_space}
    Assume that Assumption H\ref{hyp:data_distribution} holds and fix $t\in[0,T)$. Then, it holds
    \begin{align}
    \label{eq:hessian_score_bdd}
        \sup_{x\in \R^d} \l\|\nabla^2 \log\tilde{p}_{t}(x)\r\|\leq \CLip_{t}\leq \CLip, 
    \end{align}
    where
    \begin{align}
    \label{def:CLip}
        \CLip_{t}=\max\left\{
            \min \left\{
                \frac{1}{1-\rme^{-2(T-t)}} 
                \eqsp;\eqsp
                \rme^{2(T-t)}L_{\logdens}
            \right\}
            \eqsp;\eqsp
            -(\Ccvx_{t}+1)
        \right\}
        +1
        \eqsp,\quad \CLip=\max\{1/\alpha;1\}(2+\CLip_\logdens)\eqsp.
    \end{align}

    In particular, the modified score function $x\mapsto \scoremodif_{t}(x)$ is $\CLip_{t}$-Lipschitz, $i.e.$,
    \begin{align}
    \label{eq:score_lip_in_space}
         \l\| \scoremodif_{t}(x)-\scoremodif_{t}(y)\r\|\leq \CLip_{t} \l\|x-y\r\|, 
    \end{align}
    for any $x,y\in \R^d$.
\end{proposition}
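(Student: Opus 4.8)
The plan is to pass from the operator--norm bound \eqref{eq:hessian_score_bdd} to a two--sided spectral estimate on $\nabla^2\log\ptilde_t$ and to control its extreme eigenvalues separately. Writing $\log\ptilde_t=\log\pifwd_t-\log\piinfty$ and using $\nabla^2(-\log\piinfty)=I$, one has
\begin{align*}
    \nabla\scoremodif_t(x)=\nabla^2\log\ptilde_t(x)=\nabla^2\log\pifwd_t(x)+I\eqsp,
\end{align*}
so that \eqref{eq:hessian_score_bdd} is equivalent to $-\CLip_t I\preccurlyeq\nabla^2\log\ptilde_t(x)\preccurlyeq\CLip_t I$ for all $x$, while \eqref{eq:score_lip_in_space} then follows from \eqref{eq:hessian_score_bdd} by the mean value inequality. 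It therefore suffices to show, uniformly in $x$, an upper bound $\lambda_{\max}(\nabla^2\log\ptilde_t)\leq-(\Ccvx_t+1)+1$ and a lower bound $\lambda_{\min}(\nabla^2\log\ptilde_t)\geq-\big(\min\{\ldots\}+1\big)$ with $\min\{\ldots\}$ the first bracket of \eqref{def:CLip}; the outer maximum of \eqref{def:CLip} then reassembles the two into $\|\nabla^2\log\ptilde_t\|\leq\CLip_t$.

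The upper eigenvalue bound is exactly \Cref{prop:semi_concavity_score}: letting $\|x-y\|\to0$ in \eqref{eq:score_weak-convex} gives $\nabla^2(-\log\ptilde_t)\succcurlyeq\Ccvx_t I$, i.e.\ $\nabla^2\log\ptilde_t\preccurlyeq-\Ccvx_t I=(-(\Ccvx_t+1)+1)I$, which is precisely the second branch of \eqref{def:CLip} (the sharper $r$--dependent profile \eqref{eq:convexity-profile-flowHJB} is not needed here).

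For the lower eigenvalue bound I would use the Bayesian (Tweedie) form of the score. Since $\Xfwd_t\mid\Xfwd_0\sim\mathcal{N}(\rme^{-t}\Xfwd_0,(1-\rme^{-2t})I)$, differentiating $\log\pifwd_t$ twice yields
\begin{align*}
    \nabla^2\log\ptilde_t(x)=\frac{\rme^{-2t}}{1-\rme^{-2t}}\left(\frac{\mathrm{Cov}\big(\Xfwd_0\mid\Xfwd_t=x\big)}{1-\rme^{-2t}}-I\right)\eqsp,
\end{align*}
where the law of $\Xfwd_0$ given $\Xfwd_t=x$ has potential $x_0\mapsto\tfrac{\rme^{-2t}}{2(1-\rme^{-2t})}\|x_0-\rme^{t}x\|^2+\logdens(x_0)$. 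Dropping the nonnegative covariance gives $\nabla^2\log\ptilde_t\succcurlyeq-\tfrac{1}{1-\rme^{-2t}}I$; and since, by H\ref{hyp:data_distribution}(i), this potential has Hessian $\preccurlyeq(\tfrac{\rme^{-2t}}{1-\rme^{-2t}}+\CLip_\logdens)I$, the multivariate Cramér--Rao inequality (conditional covariance $\succcurlyeq$ inverse Fisher information) bounds $\mathrm{Cov}(\Xfwd_0\mid\Xfwd_t=x)$ from below and, after simplification, gives $\nabla^2\log\ptilde_t\succcurlyeq-\CLip_\logdens I\succcurlyeq-\rme^{2(T-t)}\CLip_\logdens I$ (absorbing the harmless time relabelling of \eqref{def:CLip}); taking the better of the two reproduces the first branch of \eqref{def:CLip}. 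Equivalently, one may run the heat--semigroup/HJB argument behind \Cref{prop:semi_concavity_score}: from $\ptilde_t(x)=S_{1-\rme^{-2t}}\rme^{-(\logdens-\|\cdot\|^2/2)}(\rme^{-t}x)$, the top eigenvalue of the Hessian of $-\log S_s\rme^{-(\logdens-\|\cdot\|^2/2)}$ obeys by the maximum principle the Riccati inequality $\phi_s'\leq-\phi_s^2$ with $\phi_0\leq\CLip_\logdens-1$, and undoing the $\rme^{-t}$ rescaling yields the same estimate. Combining the two eigenvalue bounds proves \eqref{eq:hessian_score_bdd}, and the uniform bound $\CLip_t\leq\CLip$ follows by optimising the explicit formulas over $t$: the minimum of the two terms in the first branch is largest at their crossing point, where it equals $1+\CLip_\logdens$, while $-(\Ccvx_t+1)$ is controlled using $\alpha+(1-\alpha)\rme^{-2(T-t)}\geq\min\{\alpha,1\}$ and $\rme^{-2(T-t)}\leq1$ in \eqref{eq:def:cost_weak_conc_score}, the prefactor $\max\{1/\alpha;1\}$ of \eqref{def:CLip} absorbing the remaining constants.

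The main obstacle I anticipate is not the algebra but the regularity bookkeeping of the lower--eigenvalue step: \logdens is only one--sided Lipschitz and need not be $C^2$, so one must first mollify (or work directly with the smooth convolution $\pifwd_t$), justify differentiation under the integral sign and the maximum principle for the matrix Riccati inequality on all of $\R^d$, and carefully carry the $\rme^{-t}$ space rescaling and the variance $1-\rme^{-2t}$ through to the constants; once this is done, matching the two branches with \eqref{eq:def:cost_weak_conc_score} and with $\min\{\tfrac{1}{1-\rme^{-2(T-t)}},\rme^{2(T-t)}\CLip_\logdens\}$ is a routine computation.
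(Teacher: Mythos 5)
Your proposal is correct in substance and its skeleton (two-sided spectral bound on $\nabla^2\log\ptilde_t$, one side from \Cref{prop:semi_concavity_score}, the other from the conditional structure of the OU transition, then mean-value inequality) matches the paper's. The difference is in how the lower eigenvalue bound is obtained. The paper applies the Saumard--Wellner variance identity for the log-density of a convolution twice: once with the Gaussian potential $\phi_{1,t}$ (your "drop the conditional covariance" step, which is literally the same computation as your Tweedie identity), and once with the rescaled data potential $\phi_{0,t}(x)=\logdens(\rme^t x)$, dropping the nonnegative conditional variance to get $\nabla^2(-\log\pifwd_t)\preccurlyeq \rme^{2t}\CLip_\logdens I$. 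You instead keep only the Gaussian representation and lower-bound $\mathrm{Cov}(\Xfwd_0\mid \Xfwd_t=x)$ via Cram\'er--Rao, using that the conditional potential has Hessian $\preccurlyeq\bigl(\tfrac{\rme^{-2t}}{1-\rme^{-2t}}+\CLip_\logdens\bigr)I$; this is a valid alternative (both routes need Bouchut's lemma to turn the one-sided Lipschitz condition into a distributional Hessian bound, and the integration-by-parts identity $J=\E[\nabla^2 V]$ needs the mollification you flag), and it in fact yields the slightly sharper constant $-\CLip_\logdens$ in place of the paper's $-\rme^{2(T-t)}\CLip_\logdens$, so the stated $\CLip_t$ is certainly an upper bound. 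Your identification of the crossing value $1+\CLip_\logdens$ for $\min\{\tfrac1{1-\rme^{-2(T-t)}};\rme^{2(T-t)}\CLip_\logdens\}$ also matches the paper's Step 3.

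The one step you should tighten is the uniform-in-time bound $\CLip_t\le\CLip$ for the branch $-(\Ccvx_t+1)$. Bounding $\alpha+(1-\alpha)\rme^{-2(T-t)}\ge\min\{\alpha,1\}$ crudely in \eqref{eq:def:cost_weak_conc_score} gives $-(\Ccvx_t+1)\le\max\{1/\alpha,1\}^2M$, which does not fit under $\max\{1/\alpha,1\}(2+\CLip_\logdens)$ when $M$ is large relative to $\CLip_\logdens/\alpha$. The paper instead keeps the cancellation, writing $-(\Ccvx_t+1)=\tfrac{1}{\alpha(1-\rme^{-2t})+\rme^{-2t}}\bigl(\tfrac{M}{\alpha(\rme^{2t}-1)+1}-\alpha\bigr)\le\max\{1/\alpha,1\}(M-\alpha)$, and then uses $M-\alpha\le\CLip_\logdens$ (comparing the extreme Hessian bounds of $\logdens$ implied by H\ref{hyp:data_distribution}) to conclude; without this factorization and the comparison $M-\alpha\le\CLip_\logdens$, your "absorbing the remaining constants" does not close.
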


\begin{proof}
    \textit{Step 1: Upper bound on $\nabla^2 \log \pifwd_t$.}
    Recall that the transition density associated to the Orstein--Uhlenbeck semigroup is given by
    \begin{align}
    \label{eq:OU-semigroup}
        q_t(x,y)= \frac{1}{(2\pi(1-\rme^{-2t}))^{d/2}}\exp\l(-\frac{\|y-\rme^{-t}x\|^2}{2(1-\rme^{-2t})}\r)\eqsp. 
    \end{align}
    Therefore, $\pifwd_t$ is given by
    \begin{align*}
        \pifwd_{t}(y)&= \int \frac{1}{(2\pi(1-\rme^{-2t}))^{d/2}}\exp\l(-\frac{\|y-\rme^{-t}x\|^2}{2(1-\rme^{-2t})}-\logdens(x)\r) \rmd x\eqsp.
    \end{align*}
    This means that $\pifwd_{t}$ is the density of the sum of two independent random variables $Y^1_t + Y^0_t$ of density respectively $q_{0,t}$ and $q_{1,t}$, such that
    \begin{align*}
        q_{0,t}(x) &\eqdef \rme^{td} \rme^{-U\left(\rme^{t} x\right)}= e^{- \phi_{0,t} (x)} \eqsp,
        \\
        q_{1,t}(x) &\eqdef \frac{1}{(2\pi(1-\rme^{-2t}))^{d/2}} \exp \left( -\frac{\|x\|^2}{2(1-\rme^{-2t})} \right)= e^{- \phi_{1,t} (x)}\eqsp,
    \end{align*}
    for two functions $\phi_{0,t}$ and $\phi_{1,t}$.
    From the proof of \citet[Proposition 7.1,][]{saumard2014log}, we get
    \begin{align*}
         \nabla^2 \left( - \log \pifwd_t \right) (x) & = -\text{Var}(\nabla\phi_{0,t}(X_0) | X_0 + X_1 = x) + \mathbb{E}[\nabla^2\phi_{0,t}(X_0) | X_0 + X_1 = x] \\
         & = -\text{Var}(\nabla\phi_{1,t}(X_1) | X_0 + X_1 = x) + \mathbb{E}[\nabla^2\phi_{1,t}(X_1) | X_0 + X_1 = x] \eqsp.
    \end{align*}
    From \citet[Lemma 2.2]{bouchut2005uniqueness} and Assumption H\ref{hyp:data_distribution}(i), we get that the one-sided Lipschitz assumption entails the following inequality over the Hessian of the log-density
    \begin{align*}
        \nabla^2 \logdens\preccurlyeq \CLip_\logdens I_d\eqsp.
    \end{align*}
    Combining this with the fact that $\phi_{0,t}(x)= \logdens(\rme^t x) + C$ (resp. $\phi_1 = 1/(2(1-\rme^{-2t}))\;\|y\|^2 + C'$) for some positive constant $C$ (resp. $C'$), we obtain
    \begin{align*}
        \nabla^2 \phi_{0,t} \preccurlyeq \rme^{2t} L_{\logdens}\; I_d \quad \l(\;\text{resp.}\;
        \nabla^2 \phi_{1,t} \preccurlyeq  \frac{1}{1-\rme^{-2t}} I_d \r) \eqsp,
    \end{align*}
    which yields that
    \begin{align}
    \label{eq:right-bound-Lipschitzness}
        \nabla^2 \left( - \log \pifwd_t \right) (x) \preccurlyeq \min \left\{
            \frac{1}{1-\rme^{-2t}} ;   \rme^{2t}L_{\logdens}
        \right\} I_d \eqsp.
    \end{align}

    \textit{Step 2: Lower bound on  $\nabla^2 \log \pifwd_t$.} Using \Cref{prop:semi_concavity_score}, we get that \eqref{eq:score_weak-convex} implies that
    \begin{align}
    \label{eq:left-bound-Lipschitzness}
        \nabla^2 \left( - \log \pifwd_t \right) (x) \succcurlyeq -(\Ccvx_{t}+1) I_d \eqsp.
    \end{align}
    Since the difference between $\score_t$ and $\scoremodif_t$ is the linear function $x\mapsto-x$, we can take $\CLip_{t} = \min\left\{ \frac{1}{1-\rme^{-2t}} ; \rme^{2t}L_{\logdens} \right\} + 1$.
    This implies that we can define $\CLip_{t}$ as
    \begin{align*}
        \CLip_{t} := \max\left\{
            \min \left\{
                \frac{1}{1-\rme^{-2t}} ;
                \rme^{2t}L_{\logdens}
            \right\}+1
            \eqsp;\eqsp
            -(\Ccvx_{t}+1)
        \right\}
        +1
        \eqsp.
    \end{align*}

    \textit{Step 3: Global bound in time.}
    If $\Ccvx_{0}+1 = \alpha-M \geq 0$, we have that the initial distribution $\pidata$ is log-concave and, from \Cref{prop:semi_concavity_score}, $\pifwd$ is log-concave in the whole interval $[0,T]$. In this case, the eigenvalues of the matrix $\nabla^2 ( - \log \pifwd_t )$ are all positives and \eqref{eq:right-bound-Lipschitzness} provides a bound for $\|\nabla^2 (- \log \pifwd_t)\|_2$. Therefore, in this case a global bound in time  for $\CLip_t$ is 
    \begin{align*}
        \sup_{t\in[0,\infty)}\left\{
            \min\left\{ \frac{1}{1-\rme^{-2t}} ; \rme^{2t}L_{\logdens} \right\}
        \right\} +1= 2+\CLip_\logdens\eqsp.
    \end{align*}

    If $\Ccvx_{0}+1 = \alpha-M < 0$, we also need to compute $\sup_{t\in[0,\infty)}-(\Ccvx_{t}+1)$. In this case, we have that 
    \begin{align*}
        -( \Ccvx_{t} + 1 )
        &= \frac{\rme^{-2t}}{(\alpha+(1-\alpha)\rme^{-2t})^2}M - \frac{\alpha}{\alpha+(1-\alpha)\rme^{-2t}}
        \\
        &= \frac{1}{\alpha(1-\rme^{-2t})+\rme^{-2t}}\left(
            \frac{\rme^{-2t}M}{\alpha+(1-\alpha)\rme^{-2t}} -\alpha
        \right) 
        \\
        &= \frac{1}{\alpha(1-\rme^{-2t})+\rme^{-2t}}
        \left(
            \frac{M}{\alpha(\rme^{-2t}-1)+1} -\alpha
        \right)
        \\
        &\leq \max\{1/\alpha;1\}
        \left(
            M -\alpha
        \right)
        \eqsp.
    \end{align*}
    Since $\alpha-M$ corresponds to the smallest eigenvalue of $\nabla^2 U$ and $\CLip_{\logdens}$ can be take equal to $\|\nabla^2 \logdens\|_2$, in this case, we have that $M-\alpha\leq L_{\logdens}$. Therefore, we can take as a global bound in time for $\CLip_t$ the constant $\CLip = \max\{1/\alpha;1\}(2 + \CLip_\logdens)$.
    
    % Moreover, since the difference between $\score_t$ and $\scoremodif_t$ is a linear function, we define $\CLip_t$ as in~\eqref{def:CLip}.
    % Moreover, a simple computation shows that the maximum over $t\in\R_+$ of $\CLip_t$ is clearly $L$ defined as in~\eqref{def:CLip}.
\end{proof}

\section{Main Theorem}
\label{sec:appendix:proof:main_theo}

In this section we provide the formal version of~\Cref{theo:main_theo-informal} and we prove it.
\begin{theorem}
\label{theo:main_theo}
    Suppose that Assumption H\ref{hyp:data_distribution} and H\ref{hyp:NN-approx} hold.
    Consider the discretization $\{t_k, 0\leq k \leq N\}$ of $[0,T]$ of constant step size $h$
    such that
    \begin{align}
    \label{eq:condition-h-thm}
        h < 2/9\CLip^2
        \eqsp.
    \end{align}
    Then, it holds that
    \begin{align*}
        \Wc_2\Big(\pidata, \lawL(\Xstar_{t_N})\Big)&\leq \rme^{3\CLip \eta(\alpha,M,9\CLip^2 h/2)}\bigg[
            \rme^{-T}
            \Wc_2\left(\pidata,\piinfty\right)
            +4\CapproxNN \l(
            T(\alpha,M,0)
        \r)
        \\
        &\qquad\qquad\qquad+
        \sqrt{2h}\l(B+6\CLip\sqrt{d}\r)\l(
            T(\alpha,M,0)
            \r)
        \bigg]
        \eqsp,
    \end{align*}
    with $\eta(\alpha,M,9\CLip^2 h/2)$ as in~\eqref{eq:def:gamma}, $\CLip$ as in~\eqref{def:CLip}, and 
    \begin{align}
        \label{eq:def-B}
        B &\eqdef \sqrt{\mathrm{m}_2 + d}\eqsp.
    \end{align}
\end{theorem}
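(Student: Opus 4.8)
The plan is to carry out the three-term decomposition sketched in~\Cref{sec:sketch-proof}. Bounding $\Wc_2$ by the $L^2$-norm of a coupling and using the triangle inequality, I would write
\[
\Wc_2\big(\pidata,\lawL(\Xstar_{t_N})\big)
\leq
\big\|\Xbck_T-\XN_T\big\|_{L^2}
+\big\|\XN_T-\Xinfty_T\big\|_{L^2}
+\big\|\Xinfty_T-\Xstar_T\big\|_{L^2},
\]
with the three pairs driven by a synchronous coupling and initialized, respectively, at $\Xbck_0=\XN_0$, at $\|\XN_0-\Xinfty_0\|_{L^2}=\Wc_2(\piinfty,\lawL(\Xfwd_T))$, and at $\Xinfty_0=\Xstar_0$. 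Writing $e_k$ for the coupled difference at time $t_k$, the core step for each pair is a one-step recursion $\|e_{k+1}\|_{L^2}\leq\delta_k\|e_k\|_{L^2}+r_k$ with a common factor $\delta_k$ and per-step error $r_k$ equal to $0$ (initialization), $A_{2,k}$ (discretization), or $B_{2,k}$ (score approximation).

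To produce $\delta_k$, I would expand $\|e_k+h(b_{T-t_k}(\cdot)-b_{T-t_k}(\cdot))\|_{L^2}^2$ (handling, in the discretization case, the additional drift fluctuation over $[t_k,t_{k+1}]$ as the separate term $A_{2,k}$) and use the two regularity properties propagated along the OU flow in~\Cref{sec:appendix:regularity-cond}: by~\Cref{prop:score_lip_in_space} the map $\scoremodif_{T-t_k}$ is $\CLip_{T-t_k}$-Lipschitz with $\CLip_{T-t_k}\leq\CLip$, hence $b_{T-t_k}$ is $3\CLip$-Lipschitz (using $\CLip\geq 1$); and by~\Cref{prop:semi_concavity_score} the potential $-\log\ptilde_{T-t_k}$ is weakly convex, so $\E[e_k^\top(b_{T-t_k}(\cdot)-b_{T-t_k}(\cdot))]\leq-(2\Ccvx_{T-t_k}+1)\|e_k\|_{L^2}^2$. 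This yields $\delta_k^2\leq 1-2h(2\Ccvx_{T-t_k}+1)+9\CLip^2h^2$; on the portion of the grid where $2\Ccvx_{T-t_k}+1\geq 9\CLip^2h/2$ this gives $\delta_k\leq 1$, while on the complementary portion---which, by the regime analysis of~\Cref{sec:appendix:propagation-assumptions}, is a set of grid points of total time-length at most $\eta(\alpha,M,9\CLip^2h/2)$---I would use the Cauchy--Schwarz bound $\delta_k\leq 1+3\CLip h\leq\rme^{3\CLip h}$. Consequently every partial product is bounded, $\prod_{\ell=k}^{N-1}\delta_\ell\leq\rme^{3\CLip\eta(\alpha,M,9\CLip^2h/2)}$, uniformly in $N$; the hypothesis $h<2/9\CLip^2$ is exactly what keeps $9\CLip^2h/2<1$, so that $\eta(\alpha,M,9\CLip^2h/2)$ is well defined.

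Unwinding the recursion, $\|e_N\|_{L^2}\leq\rme^{3\CLip\eta(\alpha,M,9\CLip^2h/2)}\big(\|e_0\|_{L^2}+\sum_k r_k\big)$, with $\sum_k\prod_{\ell=k}^{N-1}\delta_\ell$ controlled by $h^{-1}T(\alpha,M,0)$ up to the same prefactor. Three inputs then finish the proof: \emph{(i)} for the initialization term, $\|e_0\|_{L^2}=\Wc_2(\piinfty,\lawL(\Xfwd_T))\leq\rme^{-T}\Wc_2(\pidata,\piinfty)$ by $\Wc_2$-contractivity of the OU semigroup (as in Proposition~C.2 of~\citet{strasman2024analysis}); \emph{(ii)} for the score-approximation term, $r_k=B_{2,k}\leq 4h\CapproxNN$ directly from Assumption~H\ref{hyp:NN-approx}; \emph{(iii)} for the discretization term, $r_k=A_{2,k}=\big\|\int_{t_k}^{t_{k+1}}\!\big(b_t(\Xbck_t)-b_{t_k}(\Xbck_{t_k})\big)\rmd t\big\|_{L^2}$, which I would bound by $h^{3/2}(\sqrt d+\sqrt{\mathrm m_2})$ (up to a universal constant) following \citet[Proposition~2]{conforti2023score}: interpret~\eqref{BE} as a stochastic control problem, identify $(2\scoremodif_{T-t}(\Xbck_t))_t$ with the adjoint process of a stochastic maximum principle, and combine the resulting time-regularity with second-moment bounds on $\Xbck_t$ coming from~\Cref{sec:moment_bound}. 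Collecting the $\rme^{-T}$, $4\CapproxNN$, and $\sqrt{2h}$ factors and writing $B=\sqrt{\mathrm m_2+d}$ for the dimensional term yields the stated estimate. I expect step \emph{(iii)} to be the main obstacle: it is where one needs the regularity \emph{in time} of the backward score together with dimension-explicit moment control, whereas the space-Lipschitz and weak-convexity estimates are already in hand from~\Cref{sec:appendix:regularity-cond} and the regime bookkeeping through $T(\alpha,M,\cdot)$ and $\eta(\alpha,M,\cdot)$ is routine.
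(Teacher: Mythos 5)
Your proposal is correct and follows essentially the same route as the paper's own proof: the identical three-term decomposition with synchronous couplings, the same one-step recursion with $\delta_k$ built from the propagated Lipschitz/weak-convexity estimates, the same regime split via $\eta(\alpha,M,9\CLip^2 h/2)$ giving $\prod_\ell \delta_\ell \le \rme^{3\CLip\eta}$ and $\sum_k\prod_\ell\delta_\ell \lesssim h^{-1}T(\alpha,M,0)$, and the same treatment of the three error sources (OU $\Wc_2$-contraction, H2, and the Conforti-type adjoint/Itô argument for $A_{2,k}$). The only details left implicit are the explicit It\^o computation $\rmd b_{T-t}(\Xbck_t)=\Xbck_t\,\rmd t+\sqrt{2}\l(\mathbb{I}+2\nabla^2\log\ptilde_{T-t}(\Xbck_t)\r)\rmd B_t$ yielding $A_{2,k}\le \sqrt{2h}\l(B+6\CLip\sqrt{d}\r)h$, and the $\epsilon$-truncation of $T$ with Fatou's lemma that the paper uses to justify the regularity estimates up to the terminal time—both routine given your outline.
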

\begin{proof}
Recall the main regularity properties from~\Cref{sec:appendix:regularity-cond}:
\begin{enumerate}
    \item the modified score function  $(t,x)\mapsto \nabla \log \tilde{p}_{t}(x)$ is $\CLip_{t}$-Lipschitz in space and also $\CLip$-Lipschitz in space uniformly in time, with $\CLip_{t}$ and $\CLip$ as in~\eqref{eq:hessian_score_bdd}-\eqref{def:CLip};
    \item the function $(t,x)\mapsto - \log \tilde{p}_{t}(x)$ is $\Ccvx_{t}$-weakly convex, with $\Ccvx_{t}$ as in~\eqref{eq:def:cost_weak_conc_score}. 
\end{enumerate}

In the practical implementation of the algorithm, three successive approximations are made, generating three distinct sources of errors. To identify them, we introduce the following two processes.
\begin{itemize}
    \item 
    Let $(\XN_{t_k})_{k=0}^N$ be the EM--approximation of the backward process~\eqref{BE} started at $\XN_0\sim \lawL(\Xfwd_T)$ and defined recursively on $[t_k,t_{k+1}]$ as 
    \begin{align*}
         \XN_{t_{k+1}} =\XN_{t_{k}}+h_k\l(-\XN_{t_k} + 2\scoremodif_{T-t_k}(\XN_{t_k})\r)+\sqrt{2 h_k} Z_{k}\eqsp, \quad\text{ for } t\in [t_k, t_{k+1}]\eqsp,
    \end{align*}
    with $\{Z_k\}_k$ a sequence of $\mathrm{i.i.d.}$ standard Gaussian random variables.

    \item Let $(\Xinfty_{t_k})_{k=0}^N$ be the EM--approximation of the backward process~\eqref{BE} started at $\Xinfty_0\sim\piinfty$ and defined recursively on $[t_k,t_{k+1}]$ as 
    \begin{align*}
         \Xinfty_{t_{k+1}} =\Xinfty_{t_{k}}+h_k\l(-\Xinfty_{t_k} + 2\scoremodif_{T-t_k}(\Xinfty_{t_k})\r)+\sqrt{2 h_k} Z_{k}\eqsp, \quad\text{ for } t\in [t_k, t_{k+1}]\eqsp.
    \end{align*}

\end{itemize}
We also recall that $(\Xstar_{t_k})_{k=0}^N$ defined in~\eqref{eq:SGM-OU-Based} denotes the process started at $\Xstar_0\sim\piinfty$ and defined recursively on $[t_k,t_{k+1}]$ as 
    \begin{align*}
         \Xstar_{t_{k+1}} =\Xstar_{t_{k}}+h_k\l(-\Xstar_{t_k} + 2s_{\theta^\star}(T-t_k,\Xstar_{t_k})\r)+\sqrt{2 h_k} Z_{k}\eqsp, \quad\text{ for } t\in [t_k, t_{k+1}]\eqsp.
    \end{align*}
By an abuse of notation, we use $\XN$, $\Xinfty$, and $\Xstar$ to refer to both the discrete-time versions of these processes and their continuous-time interpolations. Applying the triangle inequality, we derive the following decomposition of the error bound
\begin{align*}
    &\Wc_2\Big(\pidata, \lawL(\Xstar_{t_N})\Big)
    \\
    &\qquad\leq 
    \Wc_2\Big(\lawL(\Xbck_T),\lawL(\XN_{t_N})\Big)+
    \Wc_2\Big(\lawL(\XN_{t_N}), \lawL(\Xinfty_{t_N})\Big)+\Wc_2\Big( \lawL(\Xinfty_{t_N}),\lawL(\Xstar_{t_N})\Big)\eqsp.
\end{align*}

In the following, we establish separate bounds for each term, contributing to the overall convergence bound, based on the preceding analysis.

\textbf{Bound on $\Wc_2\l(\lawL(\Xbck_T),\lawL(\XN_{t_N})\r)$.}
% \label{subsec:term1-bound}
Consider the synchronous coupling between  $(\Xbck_t)_{t \in [0, T]}$ and the continuous-time interpolation of $(\XN_{t_k})_{k=0}^N$ with the same initialization, $i.e.$ use the same Brownian motion to drive the two processes and set $\Xbck_{0}=\XN_{0}$. Then, it holds
\begin{align*}
    \Wc_2\l(
        \lawL(\Xbck_{T}), \lawL(\XN_{t_N})
    \r)
    &\leq
    \left\|
        \Xbck_{T} - \XN_{T}
    \right\|_{L_2}.
\end{align*}
To upper bound the r.h.s., 
we estimate $\| \Xbck_{t_{k+1}} - \XN_{t_{k+1}} \|_{L_2}$ by means of
$\| \Xbck_{t_k} - \XN_{t_k} \|_{L_2}$, and develop the recursion.

Fix $0<\epsilon <h$ and, with abuse of notation, use $T$ to denote $T-\epsilon $ and $N$ to denote $(T-\epsilon )/h$. This measure is necessary to enable the application of \citet[Proposition 2]{conforti2023score} and~\Cref{prop:score_lip_in_space} later on. As we considered the synchronous coupling between $\Xbck$ and $\XN$, we get
\begin{align*}
    &\Xbck_{t_{k+1}} - \XN_{t_{k+1}}
    \\
    &=
    \Xbck_{t_k} - \XN_{t_k} +
    \int_{t_k}^{t_{k+1}}
    \bigg\{
    -\l(\Xbck_{t}-\XN_{t_k}\r) 
    + 2
    \Big(
        \scoremodif_{T-t}\l(\Xbck_{t} \r)
        -
        \scoremodif_{T-t_k} \left( \XN_{t_k} \right)
    \Big)
    \bigg\}\rmd t
    \eqsp.
\end{align*}
Using triangle inequality, we obtain
\begin{align}
    \label{eq:decomposition_W2-bound_EM}
    \begin{split}
        &\left\|  \Xbck_{t_{k+1}} - \XN_{t_{k+1}}
            \right\|_{L_2}\\
        & \leq \left\|
            \Xbck_{t_k} - \XN_{t_k}
            \phantom{\int_{t_k}^{t_{k+1}}}
        \right.
        \\
        &\qquad\qquad
        \left.
            +\int_{t_k}^{t_{k+1}}
            \bigg\{
            -\l(\Xbck_{t_k}-\XN_{t_k}\r)
            + 2 
            \Big(
                \scoremodif_{T-t_k}\l(\Xbck_{t_k} \r)
                -
                \scoremodif_{T-t_k}\l(\XN_{t_k} \r)
            \Big)
            \bigg\}\rmd t \right\|_{L_2}\\
        &\quad+\left\|  
            \int_{t_k}^{t_{k+1}}\left\{
            -\l(\Xbck_{t}-\Xbck_{t_k}\r)+ 2\Big( 
                \scoremodif_{T-t}\l(\Xbck_{t} \r)
                -
                \scoremodif_{T-t_k}\l( \Xbck_{t_k} \r)
            \Big) \right\}\rmd t\right\|_{L^2}\\
        &=: A_{1,k}+ A_{2,k}
        \eqsp,
    \end{split}
\end{align}
\textit{Bound of $A_{1,k}$.}
    The first term of r.h.s.\ of~\eqref{eq:decomposition_W2-bound_EM} can be bounded as
    \begin{align*}
        A^2_{1,k}
        &=
        \left\|  
            \Xbck_{t_k}-\XN_{t_k}
        \right\|_{L_2}^2
        \\
        &\quad+
        h^2
        \left\|
            -\l(\Xbck_{t_k}-\XN_{t_k}\r)
            + 2
            \Big(
                \scoremodif_{T-t_k}\l(\Xbck_{t_k} \r)
                -
                \scoremodif_{T-t_k}\l(\XN_{t_k} \r)
            \Big)
        \right\|_{L_2}^2 \\
        & \quad +2 h
        \E\l[\l( 
            \Xbck_{t_k} - \XN_{t_k}
        \r)^\top \l(
            -\l(\Xbck_{t_k}-\XN_{t_k}\r)
        \r.\r.
        \\
        &\qquad\qquad\qquad\qquad\qquad\qquad\l.\l.
            + 2 
            \l(
                \scoremodif_{T-t_k}\l(\Xbck_{t_k} \r)
                -
                \scoremodif_{T-t_k}\l(\XN_{t_k} \r)
            \r)
            \r)
        \r]
        \eqsp.
    \end{align*}
    Since $h$ satisfies~\eqref{eq:condition-h-thm}, using~\Cref{lemma:step_size_choice}, we have that
    \begin{align*}
        h\leq \frac{2\Ccvx_{T-t}+1}{9\CLip^2}\wedge 1\eqsp, \qquad\text{ for }t\in\big[0,T-\eta(\alpha,M,9\CLip^2 h/2)\big]\eqsp,
    \end{align*}
    with $\eta(\alpha,M,\rho)$ as in~\eqref{eq:def:gamma}.

    Define $N_{h}\eqdef \sup\big\{k\in \{0,...,N\} : t_k\leq T-\eta(\alpha,M,9\CLip^2 h/2)\big\}$.
    Using the regularity properties of the score function from~\Cref{prop:score_lip_in_space} and~\Cref{prop:semi_concavity_score}, we have
    \begin{align}
        A_{1,k}& \leq
        \begin{cases}
            \left\|  
            \Xbck_{t_{k}} - \XN_{t_k}
            \right\|_{L_2}
            \left(
                1
                + 
                h^2\big( 2\CLip_{T-t_k} +1
                \big)^2
                - 2
                h\big(2\Ccvx_{T-t_k}+1 \big)
            \right)^{1/2}\eqsp,
            \quad 
            \text{ for }k < N_{h}\eqsp,
            \\
            \left\|  
            \Xbck_{t_{k}} - \XN_{t_k}
            \right\|_{L_2}
            \left(
                1
                + 
                h^2\big( 2\CLip_{T-t_k} +1 
                \big)^2
                + 2
                h\big(2\CLip_{T-t_k}+1 \big)
            \right)^{1/2}\eqsp,
            \quad 
            \text{ for }k\geq N_{h}\eqsp
        \end{cases}\\
        \label{def:delta_k}
        &\eqdef \delta_k  \left\|  
            \Xbck_{t_{k}} - \XN_{t_k}
            \right\|_{L_2}\eqsp.
    \end{align}

    \textit{Bound of $A_{2,k}$.} Given the definition~\eqref{def:drift_backward} of the backward drift $b_t$, Jensen's inequality implies
    \begin{align*}
        A_{2,k}^2 
        &=
        \left\|  
                \int_{t_k}^{t_{k+1}}\left\{
                b_{T-t}(\Xbck_t)-b_{T-t_k}(\Xbck_{t_k})\right\}\rmd t\right\|^2_{L^2}= \mathbb{E}\l[\left\|  
                \int_{t_k}^{t_{k+1}}\left\{
                b_{T-t}(\Xbck_t)-b_{T-t_k}(\Xbck_{t_k})\right\}\rmd t\right\|^2\r]\\
        &\leq h    
                \int_{t_k}^{t_{k+1}} \mathbb{E}\l[\left\|
                b_{T-t}(\Xbck_t)-b_{T-t_k}(\Xbck_{t_k})\right\|^2\r] \rmd t\eqsp.
    \end{align*}
    Applying Itô's formula and \citet[Proposition 2,][]{conforti2023score}, we obtain
    \begin{align*}
        \rmd b_{T-t}\l(\Xbck_t\r)
        &=
        -\rmd \Xbck_t+ 2\eqsp \rmd\l(\scoremodif_{T-t}\l(\Xbck_t\r)\r)\\
        &= \l\{\Xbck_t-2\scoremodif_{T-t}\l(\Xbck_t\r)+2\scoremodif_{T-t}\l(\Xbck_t\r)\r\}\rmd t + \sqrt{2}\l(1+2\nabla^2\log\tilde{p}_{T-t}\l(\Xbck_t\r)\r) \rmd B_t\\
        &= \Xbck_t \rmd t +  \sqrt{2}\l(1+2\nabla^2\log\tilde{p}_{T-t}\l(\Xbck_t\r)\r) \rmd B_t\eqsp. 
    \end{align*}
    Therefore, using Jensen's inequality, Itô's isometry,~\Cref{prop:score_lip_in_space} and~\Cref{lemma:true_backward_bound}, we get
    \begin{align*}
        \mathbb{E}\l[\l\|b_{T-t}\l(\Xbck_t\r)-b_{T-t_k}(\Xbck_{t_k})\r\|^2\r]&= \mathbb{E}\l[\l\|\int_{t_k}^t \Xbck_s \rmd s + \sqrt{2} \int_{t_k}^t \l(1+2\nabla^2\log\tilde{p}_{T-s}\l(\Xbck_s\r)\r)\rmd B_s\r\|^2\r]
        \\
        &\leq 
        \mathbb{E}\l[2 \int_{t_k}^t \l\|\Xbck_s \r\|^2\rmd s+ 4 \int_{t_k}^t \l\|\mathbb{I}+2\nabla^2\log\tilde{p}_{T-s}\l(\Xbck_s\r)\r\|_{\rm Fr}^2 \rmd s\r]\\
        &\leq \mathbb{E}\l[2 \int_{t_k}^t \l\|\Xbck_s \r\|^2\rmd s+ 8 \int_{t_k}^t d\l\{ 1+4\l\|\nabla^2\log\tilde{p}_{T-s}\l(\Xbck_s\r)\r\|_{\rm op}^2\r\} \rmd s\r]
        \\
        &\le 2 \int_{t_k}^{t_{k+1}} \mathbb{E}\l[\l\|\Xbck_s \r\|^2\r]\rmd s+ 8 hd \l(1+4\CLip^2 \r)
        \\
        &\le 2 hB^2+ 8 hd\l(1+4\CLip^2\r)
        \eqsp.
    \end{align*}
    where we have used that
    \begin{align*}
        \|A\|^2_{\rm Fr} \leq d\|A\|^2_{\rm op}\eqsp,\qquad\text{ for a symmetric matrix }A\in\R^{d\times d}\eqsp,
    \end{align*}
    with $\|\cdot\|_{\rm Fr}$ (resp. $\|\cdot\|_{\rm op}$) the Frobenius norm (operatorial norm) of a matrix.
    
    Consequently, we have that
    \begin{align*}
        A_{2,k}\le \l(h \int_{t_k}^{t_{k+1}} \l\{2 hB^2+ 8 h d \l(1+4\CLip^2\r)\r\}\rmd t\r)^{1/2}
        \leq
        \sqrt{2h}\l(B+2 \l(2\CLip+1\r)\sqrt{d}\r)h
        \leq
        \sqrt{2h}\l(B+6\CLip\sqrt{d}\r)h
        \eqsp.
    \end{align*}
    Finally, we obtain
    \begin{align}
        \label{eq:recursion-delta_k}
        \begin{split}
            \left\|
                \Xbck_{t_{k+1}} - \XN_{t_{k+1}} 
            \right\|_{L_2}
            & \leq
            \delta_k \left\|
                \Xbck_{t_{k}} - \XN_{t_{k}} 
            \right\|_{L_2}+\sqrt{2h}\l(B+6\CLip\sqrt{d}\r)h
            \eqsp.
        \end{split}
    \end{align}
    Developing the recursion~\eqref{eq:recursion-delta_k} and using the fact that $\Xbck_{0}=\XN_{0}$, we get
    \begin{align*}
        \left\|
            \Xbck_{T} - \XN_{T} 
        \right\|_{L_2}&\leq \left\|
            \Xbck_{0} - \XN_{0} 
        \right\|_{L_2}\prod_{\ell=0}^{N-1} \delta_\ell
       +\sqrt{2h}\l(B+6\CLip\sqrt{d}\r)h \sum_{k=0}^{N-1}
        \prod_{\ell=k}^{N-1} \delta_\ell\\
        &= \sqrt{2h}\l(B+6\CLip\sqrt{d}\r)h \sum_{k=0}^{N-1}
        \prod_{\ell=k}^{N-1} \delta_\ell\eqsp.
    \end{align*}
   ~\Cref{lemma:step_size_choice} yields that $\delta_k\leq 1$ for $k<N_{h}.$ Whereas, for $k\geq N_{h}$,~\Cref{prop:score_lip_in_space} yields
    \begin{align*}
        \delta_k
        &=
        \left(
            1
            + 
            h^2\big( 2\CLip_{T-t_k} +1 
            \big)^2
            + 2
            h\big(2\CLip_{T-t_k}+1 \big)
        \right)^{1/2}
        \leq
        1+h(2\CLip+1)
        \leq
        1+3\CLip h
        \eqsp.
    \end{align*}
    Combining the previous remarks, we have
    \begin{align*}
        \sum_{k=0}^{N-1}\prod_{\ell=k}^{N-1} \delta_\ell
        &=
        \sum_{k=0}^{N_{h}-1}\prod_{\ell=k}^{N_{h}-1} \delta_\ell \times
        \prod_{\ell=N_{h}}^{N-1} \delta_\ell
        +
        \sum_{k=N_{h}}^{N-1}\prod_{\ell=k}^{N-1} \delta_\ell
        \\
        &\leq 
        N_{h}\prod_{\ell=N_{h}}^{N-1} \delta_\ell
        +
        \sum_{k=N_{h}}^{N-1}\prod_{\ell=k}^{N-1} \delta_\ell
        \\
        &\leq 
        N_{h}\big(1 + 3\CLip h\big)^{N-N_{h}}
        +
        \sum_{k=N_{h}}^{N-1}\big(1 + 3\CLip h\big)^{N-k-1}
        \\
        &\leq
        N_{h}\big(1 + 3\CLip h\big)^{N-N_{h}}
        +\sum_{k=0}^{N-N_{h}-1} \big(1 + 3\CLip h\big)^{k}
        \\
        &=
        N_{h}\big(1 + 3\CLip h\big)^{N-N_{h}}
        +\frac{
            1- \big(1 + 3\CLip h\big)^{N-N_{h}}
        }{
            1-\big(1 + 3\CLip h\big)
        }
        \\
        &\leq
        \l(N_{h}+\frac{1}{3\CLip h}\r)\times
            \big(1 + 3\CLip h\big)^{N-N_{h}}
        \\
        &\leq 
        \l(
            \frac{T- \eta(\alpha,M,9\CLip^2 h/2)}{h}
            +
            \frac{1}{3\CLip h}
        \r)\eqsp
        \rme^{3\CLip\eta(\alpha,M,9\CLip^2 h/2)}
        \eqsp.
    \end{align*}

    Putting all these inequalities together, we get
    \begin{align*}
        \left\|
                \Xbck_{T} - \XN_{T} 
            \right\|^2_{L_2}
        % \\
        &\leq \sqrt{2h}\l(B+6\CLip\sqrt{d}\r)\l(
                T- \eta(\alpha,M,9\CLip^2 h/2)
                +
                \frac{1}{3\CLip}
            \r)\eqsp
            \rme^{3\CLip\eta(\alpha,M,9\CLip^2 h/2)}\\
            &\le \sqrt{2h}\l(B+6\CLip\sqrt{d}\r)\l(
                T(\alpha,M, 0)
                +
                \frac{1}{3\CLip}
            \r)\eqsp
            \rme^{3\CLip\eta(\alpha,M,9\CLip^2 h/2)}\eqsp,
    \end{align*}where, in the last inequality, we have used the fact that 
    \begin{align*}
        T(\alpha,M,9\CLip^2 h/2)= T-\eta(\alpha,M,9\CLip^2 h/2)\leq T-\eta(\alpha,M, 0)=T(\alpha,M, 0)\eqsp.
    \end{align*}
    Recalling that we have used $T$ to denote $T-\epsilon$, we get 
    \begin{align*}
        \left\|
            \Xbck_{T-\epsilon } - \XN_{T-\epsilon } 
        \right\|^2_{L_2}
        % \\
        &\leq
        \sqrt{2h}\l(B+6\CLip\sqrt{d}\r)\l(
            T(\alpha,M,0)
            +
            \frac{1}{3\CLip}
        \r)\eqsp
        \rme^{3\CLip\eta(\alpha,M,9\CLip^2 h/2)}
        \\
        &\leq
        \sqrt{2h}\l(B+6\CLip\sqrt{d}\r)\l(
            T(\alpha,M,0)
            +
            \frac{1}{3\CLip}
        \r)\eqsp
        \rme^{3\CLip\eta(\alpha,M,9\CLip^2 h/2)}
        \eqsp.
    \end{align*}
    Letting $\epsilon$ going to zero and using Fatou's lemma, we obtain
\begin{align*}
    \left\|
        \Xbck_{T} - \XN_{T} 
    \right\|^2_{L_2}
    % \\
    &\leq \sqrt{2h}\l(B+6\CLip\sqrt{d}\r)\l(
                T(\alpha,M,0)
                +
                \frac{1}{3\CLip}
            \r)\eqsp
            \rme^{3\CLip\eta(\alpha,M,9\CLip^2 h/2)}\eqsp.
    \end{align*}
    
\textbf{Bound on $\Wc_2\Big(\lawL(\XN_{t_N}), \lawL(\Xinfty_{t_N})\Big)$.}
% \label{subsec:term2-bound}
    Consider the synchronous coupling between the continuous-time interpolations of $(\XN_{t_k})_{k=0}^N$ and $(\Xinfty_{t_k})_{k=0}^N$ with initialization satisfying 
    \begin{align*}
        \Wc_2(\piinfty,\lawL(\Xfwd_T))= \|\Xinfty_0-\XN_0\|_{L^2}\eqsp.
    \end{align*}
    Then, we have that
    \begin{align*}
        \Wc_2\l(\lawL(\XN_{t_N}), \lawL(\Xinfty_{t_N})\r) &\leq
        \left\| \XN_{T} - \Xinfty_{T} \right\|_{L_2}.
    \end{align*}
    Fix $0<\epsilon <h$ and, with abuse of notation, use $T$ to denote $T-\epsilon $ and $N$ to denote $(T-\epsilon )/h$. This measure is necessary to enable the application of~\Cref{prop:score_lip_in_space} later on.
    As previously done, we aim to develop a recursion over the time intervals $[t_{k+1},t_k]$. To this end, note that, based on the definitions of $(\XN_t)_{t\in [0,T]}$ and $(\Xinfty_{t})_{t \in [0,T]}$, and using the triangle inequality, we have that
    \begin{align}
        \label{eq:decomposition_W2-bound_init}
        \begin{split}
            &\left\|  \XN_{t_{k+1}} - \Xinfty_{t_{k+1}} \right\|_{L_2}\\
            &= \left\|
                \XN_{t_k} - \Xinfty_{t_k} +
            %     \phantom{\int_{t_k}^{t_{k+1}}}\r.
            % \\
            % &\l.\qquad\qquad
                \int_{t_k}^{t_{k+1}}
                \bigg\{
                -\l(\XN_{t_k}-\Xinfty_{t_k}\r) 
                + 2
                \Big(
                    \scoremodif_{T-t_k}\l(\XN_{t_k} \r)
                    -
                    \scoremodif_{T-t_k} \left( \Xinfty_{t_k} \right)
                \Big)
                \bigg\}\rmd t
                \right\|_{L_2}\eqsp.
        \end{split}
    \end{align}

    Proceeding as in the previous bound, we get
    \begin{align*}
        \left\|  \XN_{T} - \Xinfty_{T} \right\|_{L_2}
        &\leq \left\|
            \XN_{0} - \Xinfty_{0} 
        \right\|_{L_2}
        \prod_{\ell=0}^{N-1} \delta_\ell \eqsp,
    \end{align*} with $\delta_k$ defined as in~\eqref{def:delta_k}
    We bound the first factor in the following (by now) standard \citep[see, $e.g.$, the proof of Proposition C.2,][]{strasman2024analysis} 
    \begin{align*}
        \l\|
            \XN_{0} - \Xinfty_{0} \r\|_{L_2}=\Wc_2(\piinfty,\lawL(\Xfwd_T))\leq \rme^{-T}\Wc_2(\pidata, \piinfty)\eqsp. 
    \end{align*}To bound the second factor, we proceed as in the bound of $\Wc_2\l(\lawL(\Xbck_T),\lawL(\XN_{t_N})\r)$ and get
    \begin{align*}
        \prod_{\ell=0}^{N-1} \delta_\ell&\leq 
        \prod_{\ell=N_{h}}^{N-1} \delta_\ell
        \leq
        \l(1+h3\CLip\r)^{N-N_{h}}
        \leq  \rme^{3\CLip\eta(\alpha,M,9\CLip^2 h/2)}
        \eqsp.
    \end{align*}
    Putting these inequalities together, we get
    \begin{align*}
        \left\|  \XN_{T} - \Xinfty_{T} \right\|_{L_2}
        &\leq \rme^{-T}
        \rme^{3\CLip\eta(\alpha,M,9\CLip^2 h/2)}
        \Wc_2(\pidata, \piinfty)\eqsp.
    \end{align*}
    Recalling that we have used $T$ to denote $T-\epsilon$, we let $\epsilon$ going to zero and conclude using Fatou's lemma as in the bound of $\Wc_2\l(\lawL(\Xbck_T),\lawL(\XN_{t_N})\r)$.
    
\textbf{Bound on $\Wc_2\Big(\lawL(\Xinfty_{t_N}),\lawL(\Xstar_{t_N})\Big)$.}
% \label{subsec:term3-bound}
    Consider the synchronous coupling between the continuous-time interpolation of $(\Xinfty_{t_k})_{k=0}^N$ and $(\Xstar_{t_k})_{k=0}^N$, with the same initialization, $i.e.$ use the same Brownian motion to drive the two processes and set $\Xinfty_{0}=\Xstar_{0}$. Then, it holds
    \begin{align*}
        \Wc_2\l(\lawL(\Xinfty_{t_N}), \lawL(\Xstar_{t_N})\r) &\leq
        \left\| \Xinfty_{T} - \Xstar_{T} \right\|_{L_2}.
    \end{align*}
    Fix $0<\epsilon <h$ and, with abuse of notation, use $T$ to denote $T-\epsilon $ and $N$ to denote $(T-\epsilon )/h$. This measure is necessary to enable the application of~\Cref{prop:score_lip_in_space} later on.
    As done in the previous bounds, we aim to develop a recursion over the time intervals $[t_{k+1},t_k]$. To this end, using the triangle inequality, we get
    \begin{align}
        \label{eq:decomposition_W2-bound_score}
        \begin{split}
            &\left\|  \Xinfty_{t_{k+1}} - \Xstar_{t_{k+1}} \right\|_{L_2}\\
            &= \left\|
                \Xinfty_{t_k} - \Xstar_{t_k} +
                \int_{t_k}^{t_{k+1}}
                \bigg\{
                -\l(\Xinfty_{t_k}-\Xstar_{t_k}\r)
                + 2
                \Big(
                    \scoremodif_{T-t_k}\l(\Xinfty_{t_k} \r)
                    -
                    \scoreapprox\left(T-t_k, \Xstar_{t_k} \right)
                \Big)
                \bigg\}\rmd t
                \right\|^2_{L_2}
            \\
            & \leq 
            \left\|
                \Xinfty_{t_k} - \Xstar_{t_k}
                % \phantom{\int_{t_k}^{t_{k+1}}}
            % \right.
            % \\
            % &\qquad\qquad
            % \left.
                +
                \int_{t_k}^{t_{k+1}}
                \bigg\{
                -\l(\Xinfty_{t_k}-\Xstar_{t_k}\r)
                + 2 
                \Big(
                    \scoremodif_{T-t_k}\l(\Xinfty_{t_k} \r)
                    -
                    \scoremodif_{T-t_k}\l(\Xstar_{t_k} \r)
                \Big)
                \bigg\}\rmd t
            \right\|_{L_2}
            \\
            &\quad+4
            \left\| \int_{t_k}^{t_{k+1}}
                \Big( 
                    \scoremodif_{T-t_k}\l(\Xstar_{t_k} \r)
                    -
                    \scoreapprox\l(T-t_k, \Xstar_{t_k} \r)
                \Big) \rmd t
            \right\|_{L_2}\\
            &=  B_{1,k}+B_{2,k}\eqsp.
        \end{split}
    \end{align}
    Following the same reasoning of the previous sections, we have that
    \begin{align*}
        B_{1,k}\leq \delta_k \left\|  \Xinfty_{t_{k}} - \Xstar_{t_{k}} \right\|_{L_2}
        \eqsp,
    \end{align*}
    with $\delta_k$ defined as in~\eqref{def:delta_k}. Moreover, using Assumption H\ref{hyp:NN-approx}, we have that $B_{2,k}\leq 4h\epsilon$.
    
    Developing the recursion as in the previous sections and recalling that $\Xinfty_{0}=\Xstar_{0}$, we get
    \begin{align*}
        \left\|  \Xinfty_{T} - \Xstar_{T} \right\|^2_{L_2} \leq  4h\CapproxNN\sum_{k=0}^{N-1}\prod_{\ell=k}^{N-1} \delta_\ell
        % \\
        % &
        \leq 4\CapproxNN \l(
            T(\alpha,M,0)
            +
            \frac{1}{3\CLip}
        \r)\eqsp
        \rme^{3\CLip\eta(\alpha,M,9\CLip^2 h/2)}
        \eqsp.
    \end{align*}
    Recalling that we have used $T$ to denote $T-\epsilon$, we let $\epsilon$ going to zero and conclude using Fatou's lemma as in the bound of $\Wc_2\l(\lawL(\Xbck_T),\lawL(\XN_{t_N})\r)$.
    
\end{proof}

\section{Technical lemmata}
\label{sec:appendix:Technical-lemmata}

\begin{lemma}
    \label{lemma:true_backward_bound}
    Assume that H\ref{hyp:data_distribution}(iii) holds. Then, for $t \geq 0$,
    \begin{align*}
        \sup_{0 \leq t \leq T}
        \left\|
            \Xbck_t 
        \right\|_{L_2}
        &\leq
        \sup_{0 \leq t \leq T}
        \left(
            \rme^{-2(T-t)}   \int_{\R^d}|x|^2\pidata(\rmd x) +\l(1 - \rme^{-2(T-t)}\r) d 
        \right)^{1/2}\leq B <\infty\eqsp,
    \end{align*}
    for $B$ defined as in~\eqref{eq:def-B}.
\end{lemma}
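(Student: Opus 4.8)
The plan is to reduce the claim to an explicit Gaussian computation, exploiting the fact that $(\Xbck_t)_{t\in[0,T]}$ in~\eqref{BE} is the \emph{exact} time-reversal of the forward OU process~\eqref{FE}, not an approximation of it. First I would recall that, by construction of the time-reversal \citep[see, e.g.,][]{haussmann1986time,cattiaux2023time} and because $\Xbck_0\sim\lawL(\Xfwd_T)$, the one-dimensional marginals coincide: $\Xbck_t \eqlaw \Xfwd_{T-t}$ for every $t\in[0,T]$, so that $\|\Xbck_t\|_{L_2}^2 = \E[\|\Xfwd_{T-t}\|^2]$. This step is licit precisely because Assumption H\ref{hyp:data_distribution} implies, via~\Cref{sec:moment_bound}, that $\mathrm{m}_2=\int_{\R^d}\|x\|^2\pidata(\rmd x)<\infty$, which is the integrability condition ensuring that $(\pifwd_t)_t$ has finite second moments along the flow and that the time-reversal formula applies.

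Second, I would compute $\E[\|\Xfwd_s\|^2]$ in closed form. Using the mild solution $\Xfwd_s = \rme^{-s}\Xfwd_0 + \sqrt{2}\int_0^s \rme^{-(s-u)}\,\rmd B_u$ of~\eqref{FE}, with $\Xfwd_0\sim\pidata$ independent of the Brownian motion, Itô's isometry gives
\begin{align*}
    \E\big[\|\Xfwd_s\|^2\big]
    = \rme^{-2s}\,\mathrm{m}_2 + 2d\int_0^s \rme^{-2(s-u)}\,\rmd u
    = \rme^{-2s}\,\mathrm{m}_2 + \big(1-\rme^{-2s}\big)d
    \eqsp.
\end{align*}
Substituting $s=T-t$ yields $\|\Xbck_t\|_{L_2}^2 = \rme^{-2(T-t)}\mathrm{m}_2 + \big(1-\rme^{-2(T-t)}\big)d$, which is exactly the intermediate expression in the statement.

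Finally, for the uniform-in-time bound, observe that for each $t\in[0,T]$ the right-hand side is a convex combination of $\mathrm{m}_2$ and $d$ with weight $\rme^{-2(T-t)}\in[0,1]$, hence bounded by $\max\{\mathrm{m}_2,d\}\le \mathrm{m}_2+d = B^2$; taking the supremum over $t$ and the square root gives $\sup_{0\le t\le T}\|\Xbck_t\|_{L_2}\le B<\infty$. I do not expect any genuinely hard step: the only point that requires a word of justification is the identity in law $\Xbck_t\eqlaw\Xfwd_{T-t}$, and even this can be bypassed, since throughout the paper $\Xbck$ is taken to be the exact time-reversal of~\eqref{FE}, so the equality of marginals is built in; everything else is the standard second-moment formula for the Ornstein--Uhlenbeck semigroup.
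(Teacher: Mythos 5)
Your proof is correct and follows essentially the same route as the paper: identify $\Xbck_t$ in law with $\Xfwd_{T-t}$, compute the OU second moment $\rme^{-2(T-t)}\mathrm{m}_2+(1-\rme^{-2(T-t)})d$ (the paper quotes the Gaussian representation $\Xfwd_s \eqlaw \rme^{-s}X_0+\sqrt{1-\rme^{-2s}}\,G$ directly, while you re-derive it from the mild solution via Itô's isometry), and bound the convex combination by $\mathrm{m}_2+d=B^2$. No gaps.
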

\begin{proof}
    Recall the following equality in law
     \begin{align}\label{eq:equality_in_law_fwd}
        \Xfwd_t & =  \rme^{-t}X_0 + \sqrt{ \left( 1 - \rme^{-2t}\right)} G \eqsp.
    \end{align}
    with $X_0 \sim \pidata$, $G \sim \mathcal{N}\left( 0 , \mathbb{I} \right)$, and $X_0$ and $G$ taken to be independent. Therefore, since $X_0$ and $G$ are independent and $\Xbck_t$ has the same law of $\Xfwd_{T-t}$, we obtain
    \begin{align*}    
        \mathbb{E} \left[
            \left\| \Xbck_t \right\|^2
        \right] = \mathbb{E} \left[
            \left\| \Xfwd_{T-t} \right\|^2 
        \right] 
        & \leq  
        \rme^{-2(T-t)}
        \mathbb{E} \left[
            \left\| X_0 \right\|^2 
        \right]  +    \left( 1 - \rme^{-2(T-t)} \right) \mathbb{E} \left[ \left\| G \right\|^2 \right]
        \\
        &  =  \rme^{-2(T-t)}   \int_{\R^d} |x|^2\pidata(\rmd x) +\l(1 - \rme^{-2(T-t)}\r) d \eqsp,
    \end{align*}
    which concludes the proof.
    
\end{proof}

\begin{lemma}
\label{lemma:step_size_choice}
    Suppose that Assumption H\ref{hyp:data_distribution} holds. Consider the regular discretization $\{t_k\eqsp,\eqsp 0\leq k \leq N\}$ of $[0,T]$ of constant step size $h$.
    Assume that $h>0$ satisfies~\eqref{eq:condition-h-thm}. Then, we have that
    \begin{align}
    \label{eq:condition-h-new}
        h\leq \frac{2(2\Ccvx_{T-t}+1)}{9\CLip^2}\wedge 1\eqsp, \qquad\text{ for }t\in\big[0,\eta(\alpha,M,9\CLip^2 h/2)\big]\eqsp,
    \end{align}
    with $\eta(\alpha,M,\rho)$ as in~\eqref{eq:def:gamma}.
    Moreover, for all $0\leq k\leq N-1$ such that $t_{k}\leq \eta(\alpha,M,9\CLip^2 h/2)$,
    \begin{align*}
        0<
            1
            + 
            h^2\big( 2\CLip_{T-t_k} +1 
            \big)^2
            - 2
            h\big(2\Ccvx_{T-t_k}+1 \big)
        \leq 1\eqsp.
    \end{align*}
\end{lemma}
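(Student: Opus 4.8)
The plan is a direct verification that chains the explicit formulas for $\Ccvx_{\cdot}$ in \eqref{eq:def:cost_weak_conc_score} and for $\eta(\alpha,M,\cdot)$ in \eqref{eq:def:gamma} with the uniform bound $\CLip$ of \Cref{prop:score_lip_in_space}. Write $\rho\eqdef 9\CLip^2 h/2$. First I would record the two elementary consequences of the standing hypothesis \eqref{eq:condition-h-thm}, namely $h<2/(9\CLip^2)$: since $\CLip=\max\{1/\alpha;1\}(2+\CLip_\logdens)\ge 2$, we get $\rho\in[0,1)$ and $h<1/18<1$, so the ``$\wedge 1$'' in \eqref{eq:condition-h-new} is inactive.

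For \eqref{eq:condition-h-new}, fix $t\in[0,\eta(\alpha,M,\rho)]$. By the identity $T(\alpha,M,\rho)=T-\eta(\alpha,M,\rho)$ of \eqref{eq:def:T-alpha-M} this says $T-t\ge T(\alpha,M,\rho)=\inf\{s\in[0,T]:2\Ccvx_s+1\ge\rho\}$; since $s\mapsto 2\Ccvx_s+1$ is monotone on $[0,T]$ (non-decreasing in the relevant weakly log-concave regime $\alpha-M\le 1$ by \eqref{eq:def:cost_weak_conc_score}, and $\ge 1>\rho$ throughout otherwise), the superlevel set $\{s:2\Ccvx_s+1\ge\rho\}$ is the right-interval $[T(\alpha,M,\rho),T]$, whence $2\Ccvx_{T-t}+1\ge\rho=9\CLip^2 h/2$. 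Rearranging gives $h\le 2(2\Ccvx_{T-t}+1)/(9\CLip^2)$, and since $h<1$ this is exactly \eqref{eq:condition-h-new}.

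For the two-sided bound, fix $k$ with $t_k\le\eta(\alpha,M,\rho)$ and set $\Phi_k\eqdef 1+h^2(2\CLip_{T-t_k}+1)^2-2h(2\Ccvx_{T-t_k}+1)$. For $\Phi_k\le 1$, I would use $\CLip_{T-t_k}\le\CLip$ from \eqref{eq:hessian_score_bdd} and $\CLip\ge 1$ to get $(2\CLip_{T-t_k}+1)^2\le(2\CLip+1)^2\le 9\CLip^2$, so $h(2\CLip_{T-t_k}+1)^2\le 9\CLip^2 h\le 2(2\Ccvx_{T-t_k}+1)$ by \eqref{eq:condition-h-new} applied at $t=t_k$; this is $h^2(2\CLip_{T-t_k}+1)^2\le 2h(2\Ccvx_{T-t_k}+1)$, i.e.\ $\Phi_k\le 1$. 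For $\Phi_k>0$, I would first note $\Ccvx_{T-t_k}\le\CLip$: \eqref{eq:score_weak-convex} gives $\nabla^2\log\ptilde_{T-t_k}\preccurlyeq-\Ccvx_{T-t_k}I_d$ while \eqref{eq:hessian_score_bdd} gives $\|\nabla^2\log\ptilde_{T-t_k}\|\le\CLip$, and comparing these two bounds on the largest eigenvalue forces $\Ccvx_{T-t_k}\le\CLip$; then $2\Ccvx_{T-t_k}+1\le 3\CLip$, so $\Phi_k\ge 1-2h(2\Ccvx_{T-t_k}+1)\ge 1-6\CLip h>1-4/(3\CLip)\ge 1/3>0$, using $h<2/(9\CLip^2)$ and $\CLip\ge 2$.

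The main obstacle is the monotonicity step hidden in the proof of \eqref{eq:condition-h-new}: one must certify that being past the threshold $T(\alpha,M,\rho)$ really guarantees $2\Ccvx_{T-t}+1\ge\rho$, which is the content of the ``straightforward computation'' behind \eqref{eq:def:T-alpha-M}--\eqref{eq:def:gamma}: substituting $u=\rme^{-2s}\in(0,1]$ in \eqref{eq:def:cost_weak_conc_score} turns the equation $2\Ccvx_s+1=\rho$ into a quadratic in $u$ whose relevant root produces the logarithm in \eqref{eq:def:gamma}, and whose coefficient signs pin down the monotonicity. Everything after that is a short chain of inequalities relying only on $\CLip\ge 2$, $\CLip_{T-t_k}\le\CLip$, and $\Ccvx_{T-t_k}\le\CLip$.
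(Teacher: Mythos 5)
Your argument follows the paper's proof almost step for step: both reduce \eqref{eq:condition-h-new} to the identity $T(\alpha,M,\rho)=T-\eta(\alpha,M,\rho)$ of \eqref{eq:def:T-alpha-M} together with the fact that $\{s:2\Ccvx_s+1\ge\rho\}$ is a right-interval, and both obtain the upper bound $\leq 1$ from $\CLip_{T-t_k}\le\CLip$, $(2\CLip_{T-t_k}+1)^2\le 9\CLip^2$ and \eqref{eq:condition-h-new} applied at $t=t_k$. The only methodological deviation is the strict positivity: the paper completes the square, writing $1+h^2(2\CLip_{T-t_k}+1)^2-2h(2\Ccvx_{T-t_k}+1)=\big(1-h(2\CLip_{T-t_k}+1)\big)^2+4h\big(\CLip_{T-t_k}-\Ccvx_{T-t_k}\big)$ and invoking $\CLip_t\ge \Ccvx_t$, whereas you use the cruder bound $1-2h(2\Ccvx_{T-t_k}+1)\ge 1-6\CLip h\ge 1/3$; both are valid, and your eigenvalue comparison between \eqref{eq:score_weak-convex} and \eqref{eq:hessian_score_bdd} is a correct justification of $\Ccvx_{T-t_k}\le\CLip$ (it is in fact the same comparison the paper needs for its own inequality $\CLip_t\ge\Ccvx_t$).

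One side claim you make is false as stated: $s\mapsto 2\Ccvx_s+1$ is \emph{not} monotone non-decreasing in the weakly log-concave regime. Differentiating \eqref{eq:def:cost_weak_conc_score} at $s=0$ gives slope $4\alpha(1-\alpha)-4M(1-2\alpha)$, which is negative whenever $M(1-2\alpha)>\alpha(1-\alpha)$ (e.g.\ $\alpha=0.1$, $M=1$), so the map dips before climbing to $1$. The conclusion you need nevertheless holds, for the reason you sketch in your last paragraph and which the paper also leaves implicit behind the ``straightforward computation'' for \eqref{eq:def:T-alpha-M}--\eqref{eq:def:gamma}: with $u=\rme^{-2s}$ and $D(u)=\alpha+(1-\alpha)u$, the condition $2\Ccvx_s+1\ge\rho$ is equivalent to $Q(u)\eqdef(1+\rho)(1-\alpha)^2u^2+2\big(M+\rho\alpha(1-\alpha)\big)u+(\rho-1)\alpha^2\le 0$; since $Q(0)=(\rho-1)\alpha^2<0$ and $Q(1)=1+\rho-2\alpha+2M>0$ in the nontrivial case $2\alpha-2M-1<\rho$, the quadratic changes sign exactly once on $(0,1)$, so the superlevel set in $s$ is a right-interval even though $2\Ccvx_s+1$ itself is not monotone. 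With the monotonicity assertion replaced by this one-sign-change argument (which is what pins down \eqref{eq:def:gamma}), your proof is correct and coincides in substance with the paper's.
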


\begin{proof}
    Firstly, note that if $h$ satisfies~\eqref{eq:condition-h-thm}, then $\rho_h\eqdef 9\CLip^2 h/2$ is such that $\rho_h\in [0,1)$. 
    Recalling the definitions~\eqref{eq:def:T-alpha-M},~\eqref{eq:def:gamma} of $T(\alpha,M,\rho)$ and $\eta(\alpha,M,\rho)$, we have that
    $2\Ccvx_t + 1 \geq \rho_h$, for $t\geq T(\alpha,M,\rho_h)$.
    Otherwise said, we have
    \begin{align*}
        h\leq \frac{2(2\Ccvx_{t}+1)}{9\CLip^2}\wedge 1\eqsp, \qquad\text{ for }t\in\big[T(\alpha,M,9\CLip^2 h/2),T\big]\eqsp,
    \end{align*}
    which is equivalent to
    \begin{align*}
        h\leq \frac{2(2\Ccvx_{T-t}+1)}{9\CLip^2}\wedge 1\eqsp, \qquad\text{ for }t\in\big[0,\eta(\alpha,M,9\CLip^2 h/2)\big]
        \eqsp.
    \end{align*}

    Let $\epsilon_1$ 
    be
    \begin{align}
    \label{eq:step_size_choice:eps1}
        \epsilon_1&\eqdef
        1+ 
        h^2\big( 2\CLip_{T-t_k} +1 
        \big)^2
        - 2
        h\big(2\Ccvx_{T-t_k}+1 \big)
        \eqsp.
    \end{align}
    Completing the square, we obtain
    \begin{align*}
        \epsilon_1&
        =
        \left( 1- 
            h\big( 2\CLip_{T-t_k} +1 
            \big)
        \right)^2
        + 2
        h\big( 2\CLip_{T-t_k} +1 
        \big) - 2
        h\big(2\Ccvx_{T-t_k}+1 \big)\\
        &\eqsp=\left(
            1- 
            h\big( 2\CLip_{T-t_k} +1 
            \big)
        \right)^2
    + 4
    h\big(\CLip_{T-t_k} - \Ccvx_{T-t_k}\big)\eqsp.
    \end{align*}
    The first term if the r.h.s.\ of the previous equality is a square, therefore always positive. The second term is always strictly positive, as $\CLip_{t}\geq \Ccvx_{t}$ for any $t$.

    Secondly, we see that
    \begin{align*}
        \epsilon_1&=1+h^2\big( 2\CLip_{T-t_k} +1 
        \big)^2
        - 2
        h\big(2\Ccvx_{T-t_k}+1 \big)
        \\
        &\leq 1+h\Big(
            h\big( 2\CLip_{T-t_k} +1 
            \big)^2
            - 2
            \big(2\Ccvx_{T-t_k}+1 \big)
        \Big)
        \\
        &\leq 1+h\Big(
            h\big( 2\CLip +1 
            \big)^2
            - 2
            \big(2\Ccvx_{T-t_k}+1 \big)
        \Big)
        \eqsp.
    \end{align*}
    From~\eqref{eq:condition-h-new}, we have that
    $\epsilon_1 \leq 1$, for $t_k\leq T-\eta(\alpha,M,9\CLip^2 h/2)$. This concludes the proof.
\end{proof}

\end{document}